\newenvironment{proof}{\noindent{\bf Proof.}}{\hfill$\Box$}
\newenvironment{sketch}{\noindent{\bf Proof Sketch.}}{\hfill$\Box$}
\newtheorem{theorem}{Theorem}
\newtheorem{definition}[theorem]{Definition}
\newtheorem{lemma}[theorem]{Lemma}
\newtheorem{corollary}[theorem]{Corollary}
\newcommand{\map}{\mbox{MAP}}
\newcommand{\xx}{\mathbf{x}}
\newcommand{\ww}{\mathbf{w}}
\title{The Complexity of Reasoning with FODD and GFODD\thanks{A preliminary version of this paper has appeared as \cite{HescottKh14aaai}. This paper includes a broader exposition and a significant amount of additional details in proofs and constructions required to obtain the technical results. 
}}
\author{Benjamin J. Hescott and Roni Khardon \\ Department of Computer Science \\ Tufts University \\ {\{hescott$|$roni\}}@cs.tufts.edu}
\date{\today}
\begin{document}
\maketitle

\begin{abstract}
Recent work introduced Generalized First Order Decision Diagrams (GFODD) as a knowledge representation that is useful in mechanizing decision theoretic planning in relational domains.  GFODDs generalize function-free first order logic and include numerical values and numerical generalizations of existential and universal quantification.  Previous work presented heuristic inference algorithms for GFODDs and implemented these heuristics in systems for decision theoretic planning.  In this paper, we study the complexity of the computational problems addressed by such implementations. In particular, we study the evaluation problem, the satisfiability problem, and the equivalence problem for GFODDs under the assumption that the size of the intended model is given with the problem, a restriction that guarantees decidability. Our results provide a complete characterization placing these problems within the polynomial hierarchy. The same characterization applies to the corresponding restriction of problems in first order logic, giving an interesting new avenue for efficient inference when the number of objects is bounded.  Our results show that for $\Sigma_k$ formulas, and for corresponding GFODDs, evaluation and satisfiability are $\Sigma_k^p$ complete, and equivalence is $\Pi_{k+1}^p$ complete.  For $\Pi_k$ formulas evaluation is $\Pi_k^p$ complete, satisfiability is one level higher and is $\Sigma_{k+1}^p$ complete, and equivalence is $\Pi_{k+1}^p$ complete.
\end{abstract}

\section{Introduction}

The complexity of inference in first order logic has been investigated intensively. It is well known that the problem is undecidable, and that this holds even with strong restrictions on the types and number of predicates allowed in the logical language. For example, the problem is undecidable for quantifier prefix $\forall ^2 \exists^*$ with a signature having a single binary predicate and equality \cite{Graedel03a}. Unfortunately, the problem is also undecidable if we restrict attention to satisfiability under finite structures \cite{Fagin93,Libkin04}. Thus, in either case, one cannot quantify the relative difficulty of problems without further specialization or assumptions. On the other hand, algorithmic progress in AI has made it possible to reason efficiently in some  cases. In this paper we study such problems under the additional restriction that an upper bound on the intended model size is given explicitly. 
This restriction is natural for many applications, where the number of objects is either known in advance or known to be bounded by some quantity. Since the inference problem is decidable under this restriction, we can provide a more detailed complexity analysis.

This paper is motivated by recent work on decision diagrams, known as FODDs and GFODDs, and the computational questions associated with them. Binary decision diagrams \cite{Bryant86,BaharFrGaHaMaPaSo93} are a successful knowledge representation capturing
functions over propositional variables, that allows for efficient manipulation
and composition of functions, and diagrams have been used in various 
applications in program verification and AI \cite{Bryant86,BaharFrGaHaMaPaSo93,HoeyStHuBo99}. Motivated by this success, several authors have attempted
generalizations to handle relational structure and first order
quantification \cite{GrooteTv03,WangJoKh08,SannerBo09,JoshiKeKh11}. In particular FODDs \cite{WangJoKh08} and their generalization GFODDs \cite{JoshiKeKh11} have been
introduced and shown to be useful in the context of decision theoretic
planning \cite{BoutilierRePr01,KerstingOtDe04,HolldoblerKaSk2006,HolldoblerSk2004} for problems with relational structure
\cite{JoshiKeKh10,JoshiKh11}. 

GFODDs can be seen to generalize the function-free portion of first order logic (i.e., signatures with constants but without higher arity functions) to allow for non-binary numerical values generalizing truth values, and for numerical quantifiers generalizing existential and universal quantification in logic. Efficient heuristic inference algorithms for such diagrams have been developed
focusing on the finite model case, and using the notion of ``reasoning from examples" \cite{KhardonRo94m,KhardonRo94l,KMR99}. This paper 
analyses the complexity of
the evaluation, satisfiability, and equivalence problems for such diagrams, focusing on the GFODD subset with $\min$ and $\max$ aggregation that are defined in the next section. To avoid undecidability and get a more refined classification of complexity, we study a restricted form of the problem where the finite size of the intended model is given as part of the input to the problem. As we argue below this is natural and relevant in the applications of GFODDs for solving decision theoretic control problems. The same restrictions can be used for the corresponding (evaluation, satisfiability and equivalence) problems in first order logic, but to our knowledge this has not been studied before.  We provide a complete characterization of the complexity showing an interesting structure. 
Our results are developed for the GFODD representation and require detailed arguments about the graphical representation of formulas in that language. 
The same lines of argument (with simpler proof details)
yield similar results for first order logic.
To translate our results to the language of logic, consider the quantifier prefix of a first order logic formula using the standard notation using $\Sigma_k$, $\Pi_k$ to denote alternation depth of quantifiers in the formula. 
With this translation, our results show that:
 
 (1) Evaluation over finite structures 
spans the polynomial hierarchy, that is, 
evaluation of $\Sigma_k$ formulas is $\Sigma_k^p$ complete, and 
evaluation of $\Pi_k$ formulas is $\Pi_k^p$ complete.

(2) Satisfiability, with a given bound on model size,  follows a different pattern:
satisfiability of $\Sigma_k$ formulas is $\Sigma_k^p$ complete, and 
satisfiability of $\Pi_k$ formulas is $\Sigma_{k+1}^p$ complete.

(3) Equivalence, under the set of models bounded by a given size, depends only on quantifier depth:
both the equivalence of $\Sigma_k$ formulas and 
equivalence of $\Pi_k$ formulas 
are
$\Pi_{k+1}^p$ complete.

The positive results allow for constants in the signature but the hardness results, except for 
satisfiability for $\Pi_1$ formulas, hold even without constants. For signatures without constants, satisfiability of 
$\Pi_1$ formulas is in NP; when constants are allowed, it is $\Sigma_{2}^p$ complete as in the general template.

These results are useful in that they clearly characterize the complexity of the problems solved heuristically by 
implementations of GFODD systems \cite{JoshiKeKh10,JoshiKh11} and can be used to partly motivate or justify the use of these heuristics.
For example, the ``model checking reductions" of \cite{JoshiKeKh11}   that simplify the structure of diagrams
replace equivalence tests with model evaluation on a ``representative" set of models.
When this set is chosen heuristically, as in \cite{JoshiKeKh10}, this 
leads to inference that is correct with respect to these models but otherwise incomplete.
Our results show that this indeed leads to a reduction of the complexity of the inference problem, so that the reduction in accuracy is traded for improved worst case run time. Importantly, it shows that without compromising correctness, the complexity of equivalence tests that are used to compress the representation will be higher. These issues and further questions for future work are discussed in the concluding section of the paper.

The rest of the paper is organized as follows. The next section defines FODDs and GFODDs and provides a more detailed motivation for the technical questions.
Section~\ref{sec:maxfodd} then develops the results for FODDs. We treat the FODD case separately for three reasons. First, this serves for an easy introduction into the results that avoids some of the more involved arguments that are required for GFODDs. Second, as will become clear, for FODDs we do not need the additional assumption on model size, so that the results are in a sense stronger. Finally, some of the proofs for GFODDs require alternation depth of at least two so that separate proofs are needed for FODDs in any case. Section~\ref{sec:gfodd} develops the results for GFODDs. The final section concludes with a discussion and directions for future work.

\section{FODDs and GFODDs and their Computational Problems}
\label{sec:prelim}

This section introduces the GFODD representation and associated computational problems, and explains how they are motivated by prior work on applying GFODDs in decision theoretic planning.  
We assume familiarity with basic concepts and notation in predicate logic \cite{Lloyd87,RussellNo95,ChangKe90} as well as basic notions from 
complexity theory \cite{HomerSelman01, sipser, papadimitriou94}.

Decision diagrams are similar to expressions in first order logic (FOL). They are defined relative to 
a relational signature, with a finite set of predicates $p_1, p_2, \ldots, p_n$ each with an associated arity (number of arguments), a countable set of variables $x_1, x_2, \ldots$, and a set of constants $c_1, c_2, \ldots, c_m$. We 
do not allow function symbols other than constants (that is, functions with arity $\geq 1$).
In addition, we assume that the arity of predicates is bounded by some numerical constant.
A term is a variable or constant and an atom is 
either an equality between two terms
or a predicate with an appropriate list of terms as arguments.
Intuitively, a term refers to an object in the world of interest 
and an atom is a property which is either true or false. 

To motivate the diagram representation consider first a simpler language of generalized expressions which we illustrate informally by some examples. In FOL we can consider open formulas that have unbound variables. For example, the atom $color(x,y)$ is such a formula and its truth value depends on the assignment of $x$ and $y$ to objects in the world. 
To simplify the discussion, we assume for this example that arguments are typed and $x$ ranges over ``objects" and $y$ over ``colors".
We can then quantify over these variables to get a sentence which will be evaluated to a truth value in any concrete possible world. For example, we can write $\exists y, \forall x, color(x,y)$ expressing the statement that there is a color associated with all objects. 
Generalized expressions allow for more general open formulas that evaluate to numerical values.
For example, 
$E_1=[\mbox{if } color(x,y) \mbox{ then 1 else 0}]$ is similar to the logical expression and 
$E_2 =[\mbox{if } color(x,y) \mbox{ then 0.3 else 0.5}]$ returns non binary values.
Quantifiers from logic are replaced with aggregation operators that combine numerical values
and provide a generalization of the logical constructs. In particular,
when the open formula is restricted to values 0 and 1, the operators $\max$ and $\min$ simulate existential and universal quantification.
Thus, 
$[\max_{y}, \min_{x},  \mbox{if } color(x,y) \mbox{ then 1 else 0}]$ 
is equivalent to the sentence above.
But we can allow for other types of aggregations. For example, 
$[\max_{y}, \mbox{sum}_{x},  \mbox{if } color(x,y) \mbox{ then 1 else 0}]$ 
evaluates to the largest number of objects associated with one color, and
$[\mbox{sum}_{x}, \min_{y},  \mbox{if } color(x,y) \mbox{ then 0 else 1}]$ 
evaluates to the number of objects that have no color association.
GFODDs are also related to work in statistical relational learning \cite{MLN,Problog,LiftedWMC}.
For example, 
if the expression $E_2$ captures probability of ground facts which are mutually independent then 
$[\mbox{product}_{x}, \mbox{product}_{y},  \mbox{if } color(x,y) \mbox{ then 0.3 else 0.5}]$ 
captures the joint probability for all such facts. Of course, the open formulas in logic can include more than one atom and similarly expressions can be more involved. In this manner, a generalized expression represents a function from possible worlds to numerical values. GFODDs capture the same set of functions but provide an alternative representation for the open formulas through directed graphs. GFODDs were introduced together with a set of operations that can be used to manipulate and combine functions and in this way provide a tool for computation with numerical functions over possible worlds. Prior work includes implementation of the FODD fragment where the only aggregation operator allowed is $\max$ \cite{JoshiKh11,JoshiKeKh10}
and more recently implementations for GFODDs with $\max$ and $\mbox{average}$ aggregations \cite{JoshiSKS12,JoshiKhRaTaFe13}.
In this paper we  investigate several computational questions for GFODDs with $\min$ and $\max$ aggregation.

\subsection{Syntax}
First order decision diagrams (FODD) and their generalization (GFODD) were
defined by \cite{WangJoKh08,JoshiKeKh11} inspired by previous work in
\cite{GrooteTv03}. 
GFODDs are composed of two parts, 
including  the aggregation functions and 
the open formula portion which is captured by a diagram or graph. 
The aggregation portion is given by a listing of the variables in the diagram 
in some arbitrary order $(w_{i_1},\ldots,w_{i_m})$ and a corresponding list of length $m$ specifying aggregation over each $w_{i_j}$. In this paper we restrict aggregation operators for each variable to be $\min$ or $\max$.
To reflect the structure of GFODDs,
and distinguish between aggregation list $V$ and the graph portion of a diagram $B$,
we sometimes denote 
a GFODD by $\langle V,B \rangle$. 
However, when clear from the context we use $B$ as a shorthand for $\langle V,B \rangle$.
FODDs are a special case of GFODDs where the aggregation function is $\max$ for all variables. Due to associativity and commutativity of $\max$, the aggregation function for FODDs does not need to be represented explicitly.

As in propositional decision diagrams \cite{Bryant86,BaharFrGaHaMaPaSo93}, the diagram portion 
is a rooted acyclic graph with directed
edges. Each node in the graph is labeled. A non-leaf node is labeled with an
atom from the signature and it has exactly two outgoing edges. The directed edges
correspond to the truth values of the node's atom. A leaf is labeled with a
non-negative numerical value. We sometimes restrict diagrams to have only binary leaves with values 0 or 1. In this case we can consider the values to be the logical values false and true.
An example diagram is shown in Figure~\ref{fig:hampath3}. In this diagram and all other diagrams in this paper, left going edges denote the true branch out of a node and right going edges represent the false branch.

Similar to the propositional case \cite{Bryant86,BaharFrGaHaMaPaSo93}, GFODD syntax is restricted to comply
with a predefined total order on atoms. In the propositional case the ordering constraint
yields a normal form (a unique minimal representation for each function) which is in turn the main source of efficient reasoning. 
For GFODDs, a normal form has not been established but the use of ordering makes
for more efficient simplification of diagrams. In particular, 
following  \cite{WangJoKh08},
we assume a fixed ordering 
on predicate names, e.g., $p_1 \prec p_2 \prec \ldots \prec p_n$, and a fixed ordering 
on variable names, e.g., $x_1 \prec x_2 \prec \ldots$ and constants  $c_1 \prec c_2 \prec \ldots c_m$ and require that 
$c_i\prec x_j$ for all $i$ and $j$.
The order is extended to atoms
by considering them as lists. That is, $p_i(\ldots) \prec p_j(\ldots)$ if $i<j$ and 
$p_i(x_{k_1},\ldots,x_{k_a}) \prec p_i(x_{k'_1},\ldots,x_{k'_a})$ 
if 
$(x_{k_1},\ldots,x_{k_a}) \prec (x_{k'_1},\ldots,x_{k'_a})$ in the lexicographic ordering over the lists. 
Node labels in the GFODD must obey this order so that if node $a$ is above node $b$ 
in the diagram
then the labels satisfy $a\prec b$.
The example of Figure~\ref{fig:hampath3} is ordered with predicate ordering $E\prec ``="$ and lexicographic variable ordering $v_1\prec v_2 \prec v_3$.

The ordering assumption is helpful when constructing systems using GFODDs because it simplifies the computations. Our complexity results hold in general, whether the assumption holds or not, therefore showing that while the assumption is convenient it does not fundamentally change the complexity of the problems. In particular, 
for the positive results, the algorithms showing membership in various complexity classes hold even in the more general case when the diagrams are not sorted. For the hardness results, the reductions developed hold even in the more restricted case when the diagrams are sorted. A significant amount of details in our analysis is devoted to handling ordering issues in hardness results.

Our complexity analysis will use the following classification of GFODD into subclasses.
We say that a GFODD is a $\max$-$k$-alternating GFODD if its set of aggregation operators has $k$ blocks of aggregation operators, where the first includes $\max$ aggregation, the second includes $\min$ aggregation, and so on. We similarly define $\min$-$k$-alternating GFODD where the first block has $\min$ aggregation operators. A GFODD has aggregation depth $k$  if it is in one of these two classes.

\subsection{Semantics}
Diagrams, like first order formulas, are evaluated in possible worlds that provide an interpretation of their symbols.\footnote{Possible worlds are 
known in the literature under various names including {\em first order structures}, {\em first order models}, and {\em interpretations}. In this paper we use the term interpretations.}
In particular, a possible world or {\em Interpretation} , $I$, 
specifies a domain of
objects, an assignment of each constant in the signature to an object in the domain, and the truth values of predicates over these objects.

\begin{figure}[t]
\centering
\includegraphics[scale = 0.95]{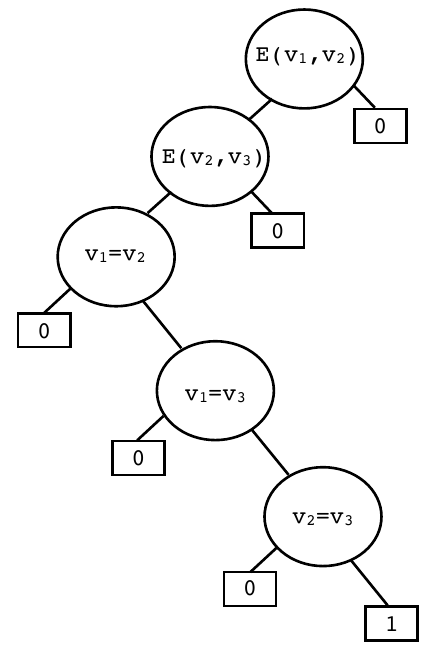}
\caption{An example FODD. In this and all other diagrams in this paper, left going edges represent the true branch out of a node and right going edges represent the false branch. 
Interpreting $E$ as an edge relation of a graph, this FODD tests that the graph has a Hamiltonian path of length 3.}
\label{fig:hampath3}
\end{figure}

The semantics assigns a value, denoted  $\map_B(I)$, for any diagram $B$ on any interpretation $I$ by
considering all possible valuations.
A variable valuation $\zeta$ is a mapping from the set of variables in $B$ 
to domain elements in the interpretation $I$. This
mapping assigns each node label to a concrete (``ground'') atom in the interpretation
and therefore to its truth value and in this way defines a single path from root
to leaf. The value of this leaf is the value of the GFODD $B$ under the
interpretation, $I$, with variable valuation $\zeta$ and is denoted $\map_B(I,
\zeta)$. 
The final value, $\map_B(I)$, is defined by aggregating over $\map_B(I,\zeta)$.
In particular, considering the aggregation order $(w_{i_1},\ldots,w_{i_m})$
we loop with $j$ taking values from $m$ to 1 aggregating values over $w_{i_j}$ using its aggregation operator. 
We denote this by $\map_B(I) = AG_{\zeta}\map_B(I,\zeta)$ where for the special case of FODDs 
this yields $\map_B(I) = \max_{\zeta}\map_B(I,\zeta)$.

Consider evaluating the FODD example in Figure~\ref{fig:hampath3} on interpretation
$I=([1,2,3],\{E(1,3),$ $E(3,1),$ $E(1,2),$ $E(2,1)\})$. Then for 
$\zeta= \{v_1/1, v_2/2, v_3/3\}$ we have $\map_B(I,\zeta)=0$
but for 
$\zeta= \{v_1/3,$ $v_2/1,$ $v_3/2\}$ we have $\map_B(I,\zeta)=1$
and therefore $\map_B(I) = \max_{\zeta}\map_B(I,\zeta)=1$.

\subsection{Computations with GFODDs}

The GFODD representation was introduced as a tool for mechanizing and solving decision problems given by structured Markov Decision Processes (MDP), also known as Relational MDP or First Order MDP. A detailed exposition is beyond scope of this paper (see  \cite{WangJoKh08,JoshiKeKh11}). This section provides some necessary technical details 
and some background to motivate the computational problems investigated in the paper.
In this context, a planning problem world state can be described using an interpretation providing the objects in the world and the relations among them. An action moves the world from one state to another, where in MDPs this transition is non-deterministic. The so-called $Q$ function $Q(s,a)$ provides a quality estimate of each action $a$ in each state $s$. Using this function, one can control the MDP by picking $a=\mbox{argmax}_a Q(s,a)$ in state $s$. There are several algorithms to calculate such $Q$ functions and previous work has introduced GFODDs as a compact representation for these functions. This is done by implementing a symbolic version of the well known Value Iteration (VI) algorithm, where the symbolic algorithm operates by manipulating GFODDs. 
Action selection provides our first computational question, that is, evaluating $Q(s,a)$.
In our context, this means calculating $\map_B(I)$ where $I$ captures $s$ and $a$ and $B$ is the representation of the $Q$ function. The same computational problem occurs in several other steps in the symbolic VI algorithm. We define this problem below as GFODD Evaluation.

Recall that a GFODD represents a function from interpretations to real values. One of the main operations required for the symbolic VI algorithm is combination of such functions. In particular, let $f_1$ and $f_2$ be functions represented by two GFODDs, and let $\odot$ be any binary operation over real values (e.g., plus). The combination operation returns a GFODD representing a function $f_3$ such that for all $I$ we have $f_3(I)=f_1(I) \odot f_2(I)$.
That is, $f_3$ is a symbolic representation of the pointwise operation over function values of $f_1$ and $f_2$.
Note that since $f_1$ and $f_2$ are closed expressions we can standardize apart their variables before taking this operation. 

\begin{figure}[t]%
\vskip 0.2in
\begin{minipage}[][][t]{0.48\textwidth}
\includegraphics[scale = 0.48]{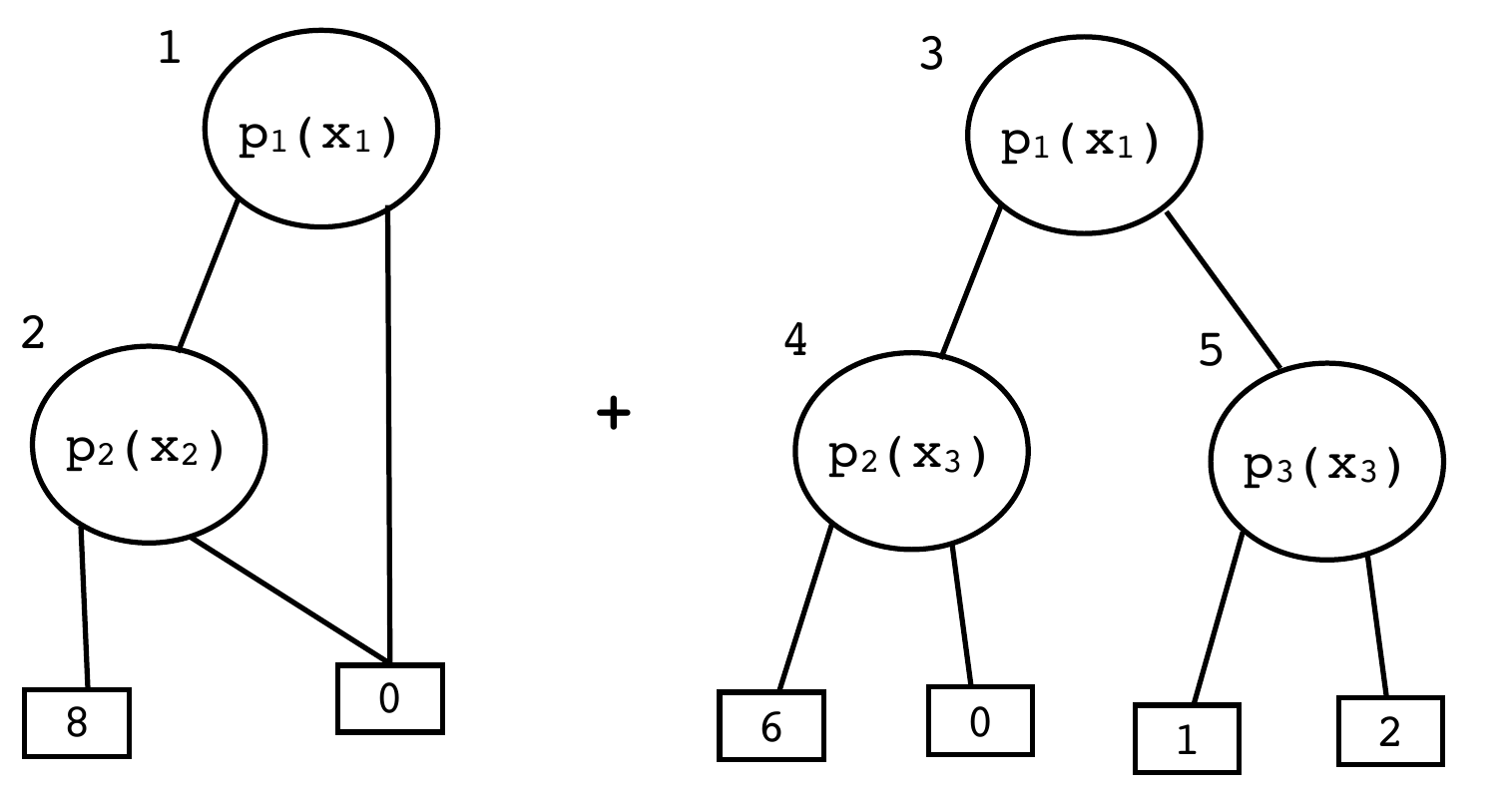}
\end{minipage}
\begin{minipage}[][][c]{0.48\textwidth}
\includegraphics[scale = 0.48]{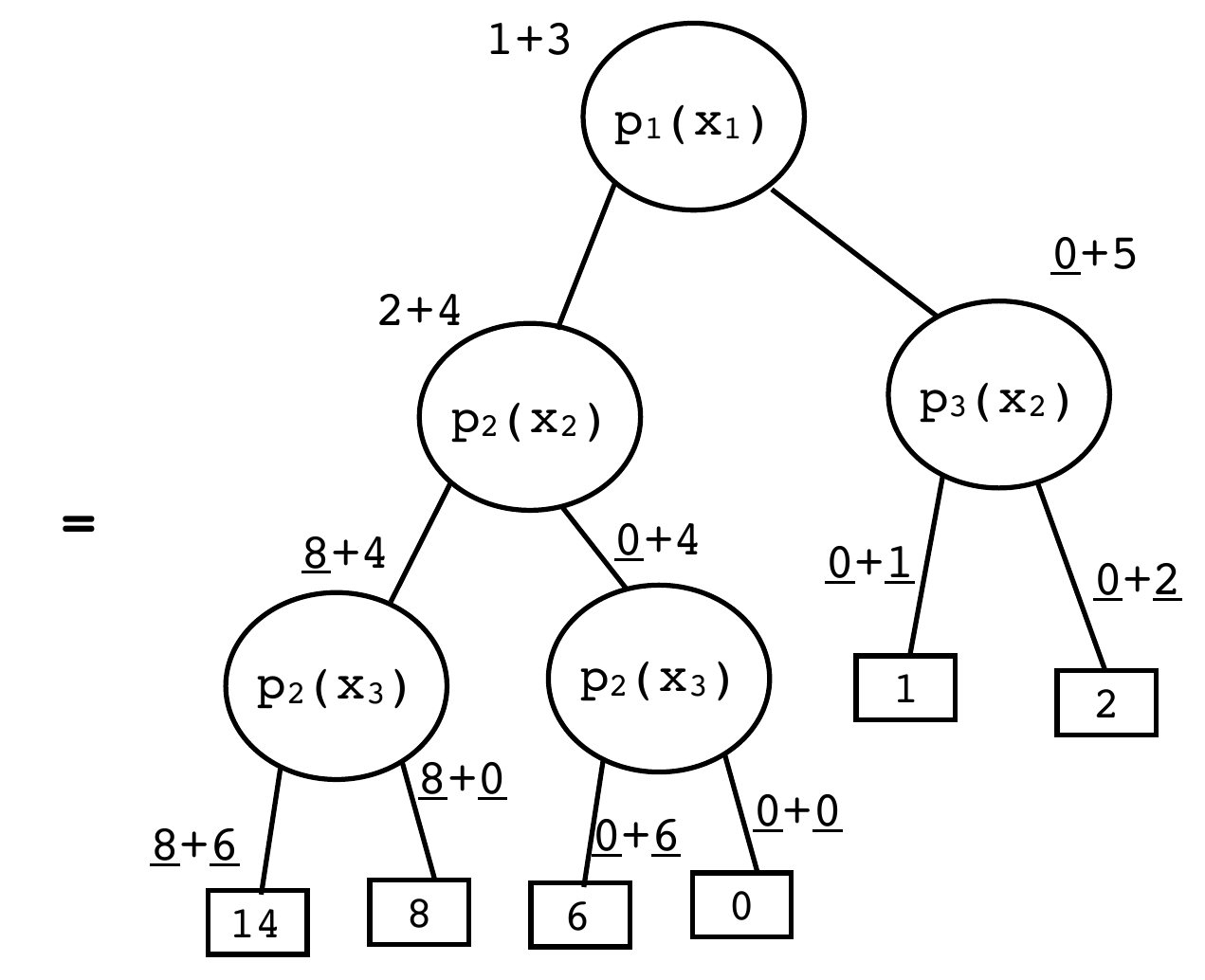}
\end{minipage}
\caption{Overview of the Apply procedure 
of  \cite{WangJoKh08}
for binary operations over two diagrams.
Recall that GFODDs use an ordering over the atoms labeling nodes, so that
atoms lower in the ordering are always higher in the diagram.
Let $p$ and $q$ be the roots of $B_1$ and $B_2$ respectively.
The procedure chooses a new root label (the smaller of $p$ and $q$) 
and recursively combines the corresponding
sub-diagrams, according to the relation between the two labels ($\prec$,
$=$, or $\succ$).
In this example, we
assume predicate ordering $p_1 \prec p_2 \prec p_3$, and parameter ordering $x_1
\prec x_2 \prec x_3$.  
Note that the two diagrams share the argument $x_1$ as would be the case with constants but $x_2$ and $x_3$ are unique to one diagram.
In this example,
non-leaf nodes are annotated with numbers and numerical leaves
are underlined for identification during the execution trace.  For example,
the top level call adds the functions corresponding to nodes 1 and 3. Since
the label at these nodes, $p_1(x_1)$, is identical, we add the two left children and the two right children respectively. 
To illustrate another case, consider the
node marked $2+4$. Here we pick the smaller label from 2, and then
add both left and right child of node 2 to node 4. These
calls are performed recursively and dynamic programming is used to avoid repeated recursive calls where such repetitions arise (this does not happen in the current example).}
\label{fig:combineFODD}
\end{figure} 

Figure~\ref{fig:combineFODD} shows how to combine the diagram portions (i.e., the open expressions) in a semantically coherent manner using the Apply procedure of \cite{WangJoKh08}. The following theorem identifies conditions for correctness of Apply when used with closed expressions.
We say that a binary operation $\odot$ is safe with respect to aggregation operator $agg$ if it distributes with respect to it, that is 
$b \odot agg\{a_1,a_2,\ldots,a_n\} = agg\{(a_1\odot b),(a_2\odot b),\ldots,(a_n\odot b)\}$.  
A list of safe pairs of binary operations and aggregation operators 
was provided by
\cite{JoshiKeKh11}. For the arguments of this paper we recall 
that the binary operations $+$ and $\wedge$ are safe with respect to max and min aggregation. 
For example $5 + \max\{1,2,3,4\}= \max\{6,7,8,9\}$. With this definition we have:

\begin{theorem}[see Theorem 4 of \cite{JoshiKeKh11}]
\label{thm:gfodd-combine}
Let $B_1$ $=$ $\langle V_1, D_1\rangle$ and $B_2$ $=$ $\langle V_2,
D_2\rangle$ be GFODDs that do not share any variables and assume that $op_c$
is safe with respect to all operators in $V_1$ and $V_2$. Let $D$ $=$
apply($B_1, B_2, op_c$). Let $V$ be any permutation of the list of variable in $V_1$ and $V_2$
so long as the relative order of operators in $V_1$ and $V_2$ remains
unchanged, and let $B$ $=$ $\langle V, D\rangle$. Then for any interpretation
$I$, $\map_B(I)$ $=$ $\map_{B_1}(I)\ op_c\  \map_{B_2}(I)$.  
\end{theorem}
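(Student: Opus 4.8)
The plan is to separate the claim into two independent stages that mirror the two parts of a GFODD: first establish that the Apply procedure is correct at the level of the diagram (i.e., pointwise over a single variable valuation), and then show that this pointwise identity survives the aggregation step, using the safety (distributivity) of $op_c$ together with the fact that $V_1$ and $V_2$ share no variables. Throughout, because $B_1$ and $B_2$ share no variables, any valuation $\zeta$ of the variables of $D$ splits uniquely into a valuation $\zeta_1$ of the variables of $D_1$ and a valuation $\zeta_2$ of the variables of $D_2$, and I will write $\map_{D_i}(I,\zeta_i)$ for the corresponding leaf values.

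For the first stage I would prove that for every interpretation $I$ and every valuation $\zeta=(\zeta_1,\zeta_2)$,
\[ \map_{D}(I,\zeta) = \map_{D_1}(I,\zeta_1)\ op_c\ \map_{D_2}(I,\zeta_2). \]
The point is that once $I$ and $\zeta$ are fixed, every atom labeling a node becomes ground and hence has a definite truth value, so each of $D$, $D_1$, $D_2$ reduces to following a single root-to-leaf path, exactly as in a propositional decision diagram. The identity then follows from the correctness of the propositional Apply algorithm, by induction on the sum of the sizes of $D_1$ and $D_2$: in each recursive call Apply selects the $\prec$-smaller root label and recurses on the matching children (the three cases $\prec$, $=$, $\succ$ of Figure~\ref{fig:combineFODD}), and under the fixed $\zeta$ the branch taken in $D$ agrees with the branches taken in $D_1$ and $D_2$, so the leaf reached is the $op_c$-combination of the two leaves reached. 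Dynamic programming only reuses already-correct subresults and does not affect the argument.

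For the second stage it remains to push $op_c$ through the aggregation. Writing $a(\zeta_1)=\map_{D_1}(I,\zeta_1)$ and $b(\zeta_2)=\map_{D_2}(I,\zeta_2)$, I would prove the following lemma by induction on $m=|V|$: for any interleaving $V$ of $V_1$ and $V_2$ that preserves the relative order within each list, and any $a$ depending only on the $V_1$-variables and $b$ depending only on the $V_2$-variables,
\[ AG^{V}_{\zeta}\,[\,a(\zeta_1)\ op_c\ b(\zeta_2)\,] = \bigl(AG^{V_1}_{\zeta_1}\,a\bigr)\ op_c\ \bigl(AG^{V_2}_{\zeta_2}\,b\bigr). \]
The base case $m=0$ is immediate. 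For the inductive step, consider the innermost aggregation, over the last variable $w$ of $V$. If $w$ belongs to $V_2$ then, by the order-preservation assumption, $w$ is also the last variable of $V_2$, and $a(\zeta_1)$ is constant as $w$ varies; hence safety of $op_c$ with respect to $w$'s aggregator lets me replace $agg_w\,[a\ op_c\ b]$ by $a\ op_c\ (agg_w\,b)$, absorbing the innermost aggregation into $b$ and reducing to a smaller instance to which the induction hypothesis applies. The case $w\in V_1$ is symmetric. Chaining the two stages then gives $\map_B(I)=AG^V_\zeta\map_D(I,\zeta)=AG^V_\zeta[\map_{D_1}(I,\zeta_1)\ op_c\ \map_{D_2}(I,\zeta_2)]=\map_{B_1}(I)\ op_c\ \map_{B_2}(I)$.

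I expect the main obstacle to be the bookkeeping in the second stage: verifying that, because the relative order inside each of $V_1$ and $V_2$ is preserved by $V$, the variable peeled off at each step really is the last remaining variable of its own block, so that repeatedly absorbing aggregations yields exactly the two nested aggregations $AG^{V_1}$ and $AG^{V_2}$ and not some scrambled order (recall that $\min$ and $\max$ do not commute, so the order within a block matters). A secondary point to check is the direction of distributivity required when the peeled variable comes from $V_1$ versus $V_2$; the native direction of the safety identity handles the $V_1$ case, while the $V_2$ case additionally uses commutativity of $op_c$, which holds for the operations $+$ and $\wedge$ of interest. This is precisely where the hypothesis that $op_c$ be safe with respect to \emph{all} operators appearing in both $V_1$ and $V_2$ is used.
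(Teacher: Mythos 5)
Your proposal is correct, but note that the paper itself contains no proof of this statement: it is imported verbatim from prior work (Theorem 4 of the cited GFODD paper), so there is no internal argument to compare against. Your two-stage decomposition is the natural self-contained proof and, as far as the present paper's usage is concerned, establishes exactly what is needed. Stage one (pointwise correctness of Apply under a fixed $I$ and $\zeta$, by induction on the sizes of $D_1$ and $D_2$ over the three label-comparison cases) is the standard propositional-Apply argument lifted to ground atoms, and it correctly accommodates shared constants, which the theorem permits even though variables are disjoint. Stage two is the real content, and you identify the right invariant: peeling the innermost variable of the interleaved list $V$, using order preservation to conclude it is the innermost variable of its own block, and absorbing its aggregation into $a$ or $b$ via safety. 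Your secondary caveat is well judged: the paper's safety identity is stated in the mixed form $b \odot agg\{a_1,\ldots,a_n\} = agg\{(a_1\odot b),\ldots,(a_n\odot b)\}$, which by itself only moves $op_c$ from one side to the other, so handling both the $V_1$-peel and the $V_2$-peel cleanly does require commutativity of $op_c$ (or a two-sided reading of safety); for the operations the paper actually uses, $+$ and $\wedge$ with $\min$/$\max$ aggregation, this holds, and your explicit flagging of the issue is a point of care the paper's terse statement glosses over. The only presentational remark is that your induction is stated on $m=|V|$ with the modified function $b'(\zeta_2')=agg_w\, b$ re-entering the hypothesis; it is worth saying explicitly that $b'$ again depends only on the remaining $V_2$-variables, so the hypothesis genuinely applies, but this is immediate and does not constitute a gap.
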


Therefore, when adding (or taking the logical-and of) functions represented by diagrams that are standardized apart we 
can use the Apply procedure on the graphical representations of these functions, and at the same time we 
have some flexibility in putting together their list of aggregation functions. This will be useful in our reductions.

The Apply procedure can introduce redundancy into diagrams. By this we mean that a simpler syntactic form, often a sub-diagram, can represent the same function. 
To illustrate, consider the diagrams of Figure~\ref{fig:combineFODD} as FODDs (i.e., with $\max$ aggregation) 
and consider edge marked by $\underline{0}+4$. It is easy to see that this edge can be redirected to a leaf with value zero without changing $\map_B(I)$ for any $I$. This is true because if we can reach the leaf with value 6 using some valuation then we can also reach the leaf with value 14 using another valuation because $x_2$ is not constrained. Therefore, the $\max$ aggregation will always ignore valuations reaching value 6.
It is also easy to see that the edge marked $\underline{8}+4$ can be redirected to value 14 without changing $\map_B(I)$.
Simplification\footnote{
Simplification was called {\em reduction} by \cite{WangJoKh08}; to avoid confusion with the standard complexity theory meaning of the term reduction we use the term simplification instead. 
} 
of diagrams by removing unnecessary portions is crucial for efficiency of GFODD implementations and a significant amount of previous work was devoted to mechanizing this process. 
Note that it is most natural to keep the aggregation portion fixed and simply manipulate the diagram portion. 
In this paper we abstract this process as testing for GFODD Equivalence, that is, testing whether the diagram is equivalent to a second simpler one. Motivated by the focus in the implementations on algorithms that remove one edge at a time, as illustrated in the example, we also formalize this special case.

\subsection{Complexity Theory Notation}

Recall that the 
polynomial hierarchy is defined from P, NP, and co-NP using an inductive constraction with reference to computation with oracles \cite{HomerSelman01, sipser, papadimitriou94}. 
In particular we have that
$\Sigma_1^p=$NP, and $\Pi_1^p=$co-NP.
An algorithm is in the class $A^B$ if it uses computation in $A$ with a polynomial number of calls to an oracle for a problem in class $B$. 
Then we have $\Sigma_{k+1}^p=\mbox{NP}^{\Sigma_{k}^p}$, and $\Pi_{k+1}^p=\mbox{co-NP}^{\Sigma_{k}^p}$.
A problem is in $\Sigma_{k}^p$ iff its complement is in $\Pi_{k}^p$ and thus (since the oracle always answers deterministically and correctly) either of these can serve as the oracle in the definition.

\subsection{Computational Problems}

Before defining the computational problems we must define the representation of inputs. We assume that GFODDs are given using a list of aggregation operators and associated variables and a labelled graph representation of the diagram. This is clearly polynomially related to the number of variables and number of nodes in the GFODD.
Some of our problems require interpretations as input. 
Here we assume a finite domain so as to avoid issues of representing the interpretation. Thus an interpretation is given as a list of objects serving as domain elements, a list specifying the mapping of constants to objects, and the extension of each predicate on these objects. Given that the signature is fixed and the arity of each predicate is constant, this implies that the size of $I$ is polynomially related to the number of objects in $I$. 
As illustrated in the example of Figure~\ref{fig:hampath3}, a graph $G=(V,E)$ can be seen as an interpretation with domain $V$ and with one predicate formed by the edge relation.
We can now define the computational problems of interest. 
We separate the definitions for FODDs and GFODDs because for GFODDs the unrestricted problems are undecidable and they require further refinement.
The simplest problem requires us to evaluate a diagram on a given interpretation. 

\begin{definition}[FODD Evaluation]
Given diagram $B$, interpretation $I$ with finite domain, and value $V\geq 0$: return Yes iff
$\map_B(I)\geq V$.  In the special case when the leaves are restricted to $\{0,1\}$ and $V=1$
this can be seen as a  returning Yes iff $\map_B(I)$ is true. 
\end{definition}

To calculate $\map_B(I)$ we can ``run" a procedure for FODD Evaluation multiple times, once for each leaf value as $V$, and return the highest achievable result. 
Thus, if FODD Evaluation is in complexity class $A$, we can calculate the function value in $P^A$. This fact is used several times in our constructions.

Since diagrams generalize FOL it is natural to investigate satisfiability:

\begin{definition}[FODD Satisfiability]
Given diagram $B$ with leaves in $\{0,1\}$: return Yes iff there is some $I$ such that 
$\map_B(I)$ is true.  
\end{definition}

When $B$ has more than two values in its leaves the satisfiability problem becomes:

\begin{definition}[FODD Value]
Given diagram $B$ and value $V \geq 0$: return Yes iff there is some $I$ such that 
$\map_B(I)= V$.  
\end{definition}

Notice that FODD Value requires that $V$ is achievable but no value larger than $V$ is achievable on the same $I$ and, as the proofs below show, the extra requirement  makes the problem harder. 
On the other hand, if we replace equality with $\geq V$ in FODD Value, the problem is equivalent to FODD Satisfiability because we can simply replace leaf values in the diagram with 0,1 according to whether they are~$\geq V$.

Finally, as motivated above, we investigate the simplification problem and its special case with single edge removal.

\begin{definition}[FODD Equivalence]
Given diagrams $B_1$ and $B_2$: return Yes iff 
$\map_{B_1}(I)=\map_{B_2}(I)$  for all $I$.   
\end{definition}

\begin{definition}[FODD Edge Removal]
Given diagrams $B_1$ and $B_2$, where $B_2$ can be obtained from $B_1$ by redirecting one edge to a zero valued leaf: return Yes iff 
$\map_{B_1}(I)=\map_{B_2}(I)$  for all $I$.   
\end{definition}

Given the discussion above, GFODDs with binary leaves can be seen to capture the function free fragment of  first order logic with equality. It is well known that satisfiability and therefore also equivalence of expressions in this fragment of  first order logic is not decidable. In fact, the problem is undecidable even for very restricted forms of quantifier alternation (see survey and discussion in \cite{Graedel03a}). For example, the problem is undecidable for quantifier prefix $\forall ^2 \exists^*$ with a single binary predicate and equality. The problem is also undecidable if we restrict attention to satisfiability under finite structures. Therefore, without further restrictions, we cannot expect much by way of classification of the complexity of the problems stated above for GFODDs.

We therefore restrict the problems so that the size of interpretations is given as part of the input. 
This makes the problems decidable and reveals the structure promised above. There are two motivations for using such a restriction. The first is that in some applications we might know in advance that the number of relevant objects is bounded by some large constant. 
For example, the main application of GFODDs to date has been for solving decision theoretic planning problems; in this context
the number of objects in an instance 
(e.g., the number of trucks or packages in a logistics transportation problem)
might be bounded by some known quantity.
The second is that our results show that even under such strong conditions the computational problems are hard, providing some justification for the heuristic approaches used in FODD and GFODD implementations \cite{JoshiKeKh10,JoshiKh11,JoshiKhRaTaFe13}.

\begin{definition}[GFODD Model Evaluation]
Given diagram $B$, interpretation $I$ with finite domain, and value $V\geq 0$: return Yes iff
$\map_B(I)\geq V$.  Note that when the leaves are restricted to $\{0,1\}$ and $V=1$
this can be seen as a  returning Yes iff $\map_B(I)$ is true. 
\end{definition}

\begin{definition}[GFODD Satisfiability]
Given diagram $B$ with leaves in $\{0,1\}$ and integer $N$ in unary: return Yes iff there is some $I$, with at most $N$ objects, such that 
$\map_B(I)$  is true.  
\end{definition}

\begin{definition}[GFODD Value]
Given diagram $B$, integer $N$ in unary and value $V \geq 0$: return Yes iff there is some $I$, with at most $N$ objects,  such that 
$\map_B(I)= V$.  
\end{definition}

\begin{definition}[GFODD Equivalence]
Given diagrams $B_1$ and $B_2$ (with the same aggregation functions) and integer $N$ in unary: return Yes iff  
for all $I$ with at most $N$ objects, $\map_{B_1}(I)=\map_{B_2}(I)$.   
\end{definition}

\begin{definition}[GFODD Edge Removal]
Given diagrams $B_1$ and $B_2$  (with the same aggregation functions), where 
where $B_2$ can be obtained from $B_1$ by redirecting one edge to a zero valued leaf,
and given integer $N$ in unary: return Yes iff  
for all $I$ with at most $N$ objects, $\map_{B_1}(I)=\map_{B_2}(I)$.   
\end{definition}

Since we are assuming a fixed arity $k$,
the assumption that $N$ is in unary is convenient because it implies that the size of an intended interpretation $I$ is polynomial in $N$. Therefore, an algorithm for these problems can explicitly represent an interpretation of the required size and test it. Our hardness results use $N$ which is at most linear in the size of the corresponding diagram $B$.

\section{The Complexity of Reasoning with FODD}
\label{sec:maxfodd}

In this section we develop the complexity results for the special case of FODDs.
Evaluation of FODDs is essentially the same as evaluation of conjunctive queries in databases and can be analyzed similarly. We include the argument here for completeness.

\begin{theorem}
\label{thm:maxfodd-eval}
{ FODD Evaluation} is $NP$-complete.
\end{theorem}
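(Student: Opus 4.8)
The plan is to establish membership in NP and NP-hardness separately.

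For membership, I would exploit the fact that a FODD uses $\max$ aggregation over every variable, so that $\map_B(I) = \max_{\zeta}\map_B(I,\zeta)$, where $\zeta$ ranges over valuations mapping the variables of $B$ into the domain of $I$. Consequently $\map_B(I) \geq V$ if and only if there exists a valuation $\zeta$ with $\map_B(I,\zeta) \geq V$. Such a $\zeta$ is the natural certificate: it has size polynomial in the input, because $B$ has polynomially many variables and the domain of $I$ is listed explicitly. Verification is polynomial: given $\zeta$, follow the unique root-to-leaf path it induces, evaluating at each internal node the corresponding ground atom by a table lookup in $I$ (polynomial since the predicate arity is bounded) and branching on its truth value; upon reaching a leaf with value $v$, accept iff $v \geq V$. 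This places FODD Evaluation in NP.

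For hardness, I would reduce from Clique (equivalently, one may reduce from Hamiltonian Path, matching the example of Figure~\ref{fig:hampath3}). Given a graph $G$ and parameter $k$, I would take the interpretation $I$ to be $G$ itself (domain $=$ vertices, single binary predicate $E$ $=$ edge relation), and build a diagram $B$ over variables $x_1,\ldots,x_k$ whose only accepting path tests the conjunction of the atoms $x_i \neq x_j$ and $E(x_i,x_j)$ for all $i < j$. Concretely, $B$ is a chain: each atom labels an internal node whose failure branch goes to a leaf labelled $0$ and whose success branch goes to the next atom, with the final success branch reaching a leaf labelled $1$. A valuation $\zeta$ then yields $\map_B(I,\zeta) = 1$ exactly when $\zeta$ maps $x_1,\ldots,x_k$ to $k$ distinct, pairwise adjacent vertices, so by the $\max$ aggregation $\map_B(G) = 1$ iff $G$ has a $k$-clique. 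Setting $V = 1$ completes the reduction, which is clearly computable in polynomial time and produces a diagram of size $O(k^2)$.

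Two points deserve care, and I regard them as the only real (and minor) obstacles. First, to honour the paper's claim that hardness persists for sorted diagrams, I would arrange the atoms of the chain in the required total order on atoms; since the construction merely tests a conjunction, the order in which the atoms are examined is immaterial to correctness, so any sorted arrangement works. Second, the reduction uses only the predicate $E$ and equality and introduces no constants, which is what is needed for the ``without constants'' version of the hardness claim. The underlying conceptual fact --- that a binary-leaf, $\max$-aggregated FODD is exactly a Boolean conjunctive query, and that evaluating such queries is NP-complete --- is the Chandra--Merlin result; the explicit chain construction above simply makes this concrete and confirms that the ordering restriction does not lower the complexity.
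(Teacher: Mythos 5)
Your proposal is correct and follows essentially the same route as the paper: membership via guessing a single valuation $\zeta$ and checking the leaf value against $V$, and hardness via a conjunctive-query-style reduction in which the input graph serves as the interpretation and the diagram is a chain of edge tests and inequality tests with all failure exits going to $0$. The only difference is the choice of source problem (Clique rather than directed Hamiltonian Path), which is immaterial, and your handling of the sorted-order requirement --- reordering the conjunction's atoms freely because all atoms must succeed --- matches the paper's treatment.
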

\begin{proof}
Membership in NP is shown by the algorithm that guesses a valuation $\zeta$, calculates $\map_B(I,\zeta)$ and returns Yes iff the leaf reached has value $\geq V$. Yes is returned iff some valuation yields a value $\geq V$ as needed.

For hardness we reduce the directed Hamiltonian path to this problem. As illustrated in Figure~\ref{fig:hampath3}, given the number of nodes in a graph we can represent a
generic Hamiltonian path verifier as a FODD $B$. To do this we simply produce a left going path $E(x_1,x_2), E(x_2,x_3),\ldots, E(x_{n-1},x_n)$ which verifies existence of the edges, followed by equality tests to verify that all nodes are distinct.  All ``failure exits" on this path go to 0 and the success exit of the last test yields 1. Call this diagram $B$. This diagram is ordered with $E\prec ``="$ and lexicographic ordering over arguments.
Now, given any input $G$ for Hamiltonian path, we represent it as an interpretation $I$ and produce $(B,I, 1)$ as the input for FODD Evaluation. Clearly, 
$G$ has a Hamiltonian path iff $\map_B(I)=1$.
\end{proof}

\bigskip
The other results for FODDs rely on the existence of small models: 

\begin{lemma}
\label{lemma:foddsize}
For any FODD $B$ with $k$ variables and constants, 
if $\map_B(I) = V$ for some $I$ then there is an interpretation $I'$ with at most $k$ objects such that
$\map_B(I') = V$.
\end{lemma}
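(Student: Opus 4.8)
The plan is to exploit the fact that an FODD aggregates by $\max$ over all valuations, so that $\map_B(I) = \max_{\zeta}\map_B(I,\zeta)$, and to obtain $I'$ by simply restricting $I$ to the objects that a single optimal valuation actually touches. First I would fix an interpretation $I$ with $\map_B(I) = V$ and choose a valuation $\zeta^*$ of the variables of $B$ attaining the maximum, i.e.\ $\map_B(I,\zeta^*) = V$; such a $\zeta^*$ exists because the domain of $I$ is finite. I would then let $O$ be the set of objects consisting of the images $\zeta^*(x)$ of the variables $x$ of $B$ together with the objects $I(c)$ naming the constants $c$ of $B$. Since $B$ has $k$ variables and constants in total, $|O| \le k$. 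Define $I'$ to be the restriction of $I$ to $O$: its domain is $O$, each constant keeps its value $I(c)\in O$, and each predicate extension is intersected with the tuples over $O$. This is a well-formed interpretation precisely because every constant lands inside $O$ by construction.

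The core of the argument is then two inequalities. For $\map_B(I') \ge V$, observe that $\zeta^*$ maps every variable into $O$, so it is a legal valuation over $I'$; moreover the path it traces from the root of $B$ to a leaf is determined solely by the truth values of the ground atoms obtained by substituting objects of $O$ (and the equalities among them), and these truth values are identical in $I$ and $I'$ because the restriction does not alter membership of tuples over $O$. Hence $\zeta^*$ reaches the same leaf in $I'$ as in $I$, giving $\map_B(I',\zeta^*) = V$ and therefore $\map_B(I') \ge V$. For the reverse inequality I would note that every valuation $\zeta'$ over $I'$ maps variables into $O \subseteq \mathrm{dom}(I)$, so it is also a valuation over $I$, and by the same preservation of atom truth values $\map_B(I',\zeta') = \map_B(I,\zeta') \le \max_{\zeta}\map_B(I,\zeta) = V$. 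Taking the maximum over $\zeta'$ yields $\map_B(I') \le V$, and combining the two bounds gives $\map_B(I') = V$.

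The only real obstacle is the bookkeeping in the reverse inequality: one must be certain that restricting the domain can only shrink the set of available valuations and can never introduce a valuation of larger value. This is exactly why the argument is clean for FODDs, since $\max$ aggregation is monotone under restriction of the valuation domain, and it is also the reason (as the paper notes) that no bound on model size is needed in the FODD case. I would be careful to treat equality atoms explicitly, as the diagram may branch on equalities between terms; these are preserved because two objects of $O$ are equal in $I'$ iff they are equal in $I$, and atoms mixing variables and constants are handled uniformly once the constants are placed inside $O$. The degenerate case in which $B$ has no variables and no constants makes $\map_B$ a constant independent of $I$, so the claim is immediate there and does not affect the bound for $k \ge 1$.
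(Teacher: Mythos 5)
Your proposal is correct and follows essentially the same route as the paper's proof: both take a maximizing valuation witnessing value $V$, restrict $I$ to the (at most $k$) objects it touches together with the constants, observe that this valuation still reaches the same leaf in the restriction, and then note that any valuation over the restricted interpretation is also a valuation over $I$ and hence cannot exceed $V$. Your write-up merely spells out details the paper leaves implicit (the treatment of equality atoms, the constants landing in the restricted domain, and the two-inequality bookkeeping).
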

\begin{proof}
Let $I$ be as in the statement. Then there is a valuation 
$\zeta$ such that $\zeta$ reaches a leaf valued $V$ in $B$. Let $I'$ be an interpretation including the objects that are used in the path traversed by $\zeta$ where the truth value of any predicate over arguments from these objects agrees with $I$. We have that $I'$ has at most $k$ objects, $\zeta$ is a suitable valuation for $I'$ and $\map_B(I',\zeta) = V$. In addition, no other valuation $\zeta'$ leads to a value larger than $V$ because, if it did, the same value would be achievable in $I$. Hence, $\map_B(I') = \map_B(I',\zeta) = V$.
\end{proof}

\begin{theorem}
\label{thm:maxfodd-sat}
{ FODD Satisfiability} is NP-Complete.
\end{theorem}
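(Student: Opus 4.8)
The plan is to establish both membership in NP and NP-hardness for FODD Satisfiability. For membership, the key tool is the small-model property of Lemma~\ref{lemma:foddsize}: if $B$ has $k$ variables and constants and is satisfiable, i.e.\ $\map_B(I)$ is true (has value $1$) for some $I$, then there is a witnessing interpretation $I'$ with at most $k$ objects. Since $B$ is part of the input, $k$ is polynomially bounded by the input size, so an interpretation on at most $k$ objects has a description of size polynomial in the input (the signature is fixed and predicate arity is bounded, so the number of ground atoms is polynomial in $k$). The NP algorithm therefore guesses such an interpretation $I'$ on at most $k$ objects, then invokes the FODD Evaluation procedure (Theorem~\ref{thm:maxfodd-eval}) with $V=1$ to verify that $\map_{B}(I')$ is true. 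Evaluation is itself in NP, so strictly speaking the guess-and-check combines two nondeterministic phases; I would fold these into a single nondeterministic computation, guessing both the interpretation $I'$ and the valuation $\zeta$ witnessing a path to a $1$-leaf, and accepting iff that path is consistent with $I'$. This collapses cleanly into a single NP computation.

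For hardness, I expect to reuse the Hamiltonian-path construction from the proof of Theorem~\ref{thm:maxfodd-eval} almost verbatim, but with a twist: in Evaluation the graph was supplied as the interpretation, whereas here there is no input interpretation, so the graph must instead be \emph{encoded into the diagram}. The natural target is a reduction from a satisfiability-flavored NP problem; the cleanest choice is to reduce again from directed Hamiltonian path by building a FODD $B_G$ whose body hard-codes the adjacency structure of the given graph $G$ using equality atoms and a generic ordered path of variables, so that $B_G$ is satisfiable precisely when $G$ has a Hamiltonian path. Concretely, I would let the diagram test, over variables $x_1,\dots,x_n$, both that the $x_i$ are pairwise distinct and that each consecutive pair $(x_i,x_{i+1})$ lies in the edge set of $G$, where membership in $E_G$ is expressed as a disjunction of equality conjunctions matching the actual edges of $G$; the $\max$ aggregation over all variables then makes $\map_{B_G}$ true on any interpretation with at least $n$ objects iff some injective assignment traces a Hamiltonian path in $G$.

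The main obstacle, and where I would spend the most care, is the ordering constraint on node labels: the diagram must respect the fixed total order on atoms (predicates and equalities), and encoding the edge set of an \emph{arbitrary} $G$ as a disjunction of equality tests risks conflicting with the required sorted form. I would handle this by mirroring the device already used in Theorem~\ref{thm:maxfodd-eval}, placing all edge/relation tests above the equality tests in the ordering ($E \prec\ ``="$) and arranging the equality comparisons to follow lexicographic variable order, so that the resulting diagram is genuinely ordered; this is exactly the kind of ordering bookkeeping the introduction warns consumes significant effort in the hardness arguments. A secondary, lighter point is to confirm correctness in both directions: if $G$ has a Hamiltonian path, the corresponding injective valuation reaches the $1$-leaf on any interpretation realizing distinct objects, so $B_G$ is satisfiable; conversely, if $B_G$ is satisfiable then some valuation reaches the $1$-leaf, and by construction that valuation's images, being forced distinct and consecutively $E_G$-adjacent, spell out a Hamiltonian path in $G$. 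Since the reduction is plainly polynomial-time, NP-hardness follows, and together with membership this yields NP-completeness.
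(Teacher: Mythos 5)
Your membership argument coincides with the paper's: the paper likewise combines Lemma~\ref{lemma:foddsize} with a single nondeterministic guess of a small interpretation $I$ together with a valuation $\zeta$, accepting iff $\map_B(I,\zeta)=1$. The hardness half is where you genuinely diverge. The paper does \emph{not} recycle the Hamiltonian-path construction; it reduces from 3SAT, building a chain of three gadgets: a $P_T()$ block forcing two objects whose $P_T$-values simulate the Boolean constants, variable-consistency blocks equating all FODD variables $v_{(i,j)}$ that stand for the same Boolean variable (via shadow variables $w_i$ bound to $y_1$ or $y_2$), and clause blocks mirroring the CNF. Crucially, that reduction uses no constants at all, which is what lets the paper claim (in the introduction) that its hardness results hold even for constant-free signatures. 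Your reduction hard-codes the graph $G$ into the diagram with one constant per vertex, so even if completed it establishes hardness only for signatures with unboundedly many constants --- a strictly weaker statement, and one in tension with the paper's convention that the signature is fixed.

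Beyond that, two concrete steps in your sketch do not go through as written. First, your ordering remedy is inapposite: your satisfiability diagram contains no atom $E(\cdot,\cdot)$ at all --- edge membership is encoded purely by equalities between variables and constants --- so ``placing edge tests above equalities with $E \prec\ ``="$'' (the device from Theorem~\ref{thm:maxfodd-eval}) has nothing to act on. The real ordering work is interleaving the variable--constant chains with the variable--variable distinctness tests, since under the paper's conventions $=(x_i,x_j)$ precedes $=(x_j,c_u)$; this is fixable (each block has a single success exit, and the per-position edge disjunction can be realized as polynomially many chains specialized to the vertex already determined for $x_i$), but you must also verify that your disjunction-of-conjunctions gadget admits a polynomial-size \emph{ordered} diagram rather than assert it. Second, your converse direction claims the valuation's images are ``consecutively $E_G$-adjacent,'' but nothing in the satisfiability problem forces the interpretation to map distinct constants to distinct objects; if constants collapse, the equality tests can be passed by a walk in a quotient of $G$ that is not a path in $G$. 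The gap is closable --- the pairwise distinctness of $\zeta(x_1),\ldots,\zeta(x_n)$, with each $x_i$ forced equal to some constant's image, yields by pigeonhole over the $n$ constants that the constant assignment is injective, and only then does agreement of constant images imply genuine adjacency in $G$ --- but this argument is missing, and the paper's 3SAT construction sidesteps the issue entirely.
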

\begin{proof} 
For membership we can guess an interpretation $I$, which by the previous lemma can be small, and guess a valuation $\zeta$ for that interpretation. We return Yes if and only if $\map_B(I,\zeta)=1$. 

We show hardness with a reduction from 3SAT. Let $f$ be an arbitrary 3CNF formula. We create a new FODD variable for each literal in the CNF so that $v_{(i,j)}$ corresponds to the $j$th literal in the $i$th clause. 

Our FODD will have three portions connected in a chain.
The first portion checks that the predicate $P_T()$ in the interpretation can be used to simulate Boolean assignments. To achieve this, we first ensure that the interpretation has 
at least two different objects, referred to by variables $y_1$ and $y_2$.
We then use a small block that ensures that the truth value of $P_T(y_1)$ is not equal to $P_T(y_2)$. 
As a result $P_T(y_1)$ and $P_T(y_2)$  correspond to true and false logical values.
This is shown in Figure~\ref{fig:fodd-sat-pt-verify}.

The second portion ensures that if $v_{(i,j)}$ and   $v_{(i',j')}$ correspond to the same Boolean variable then they map to the same object. 
For every variable $x_i$ we create a shadow FODD variable $w_i$ and equate it to all the $v_{(i',j')}$ that correspond to $x_i$. We call this sequence of equalities a consistency block.
For example, consider the CNF  $$(x_1 \vee \overline{x_2}  \vee x_4) \wedge (\overline{x_1} \vee x_2 \vee x_3) \wedge (x_1 \vee x_3 \vee \overline{x_4})$$ 
where the corresponding FODD variables are
$$ 
v_{(1,1)},v_{(1,2)},v_{(1,3)},\ \ \ 
v_{(2,1)},v_{(2,2)},v_{(2,3)},\ \ \ 
v_{(3,1)},v_{(3,2)},v_{(3,3)}. 
$$

The first block, corresponding to $x_1$,  ensures that $w_1, v_{(1,1)}, v_{(2,1)}, v_{(3,1)}$ are all assigned the same value. In addition to testing that the values are equivalent the block tests that each variable gets bound to the same object as $y_1$ or $y_2$. The only possible way to not get a 0 in these blocks is to ensure that each variable in the block has the same value and that it is equal to either $y_1$ or $y_2$.
Figure~\ref{fig:fodd-sat-block-verify} shows the 
consistency blocks  
for our example.

\begin{figure}[t]
\centering
\includegraphics[scale = 0.40]{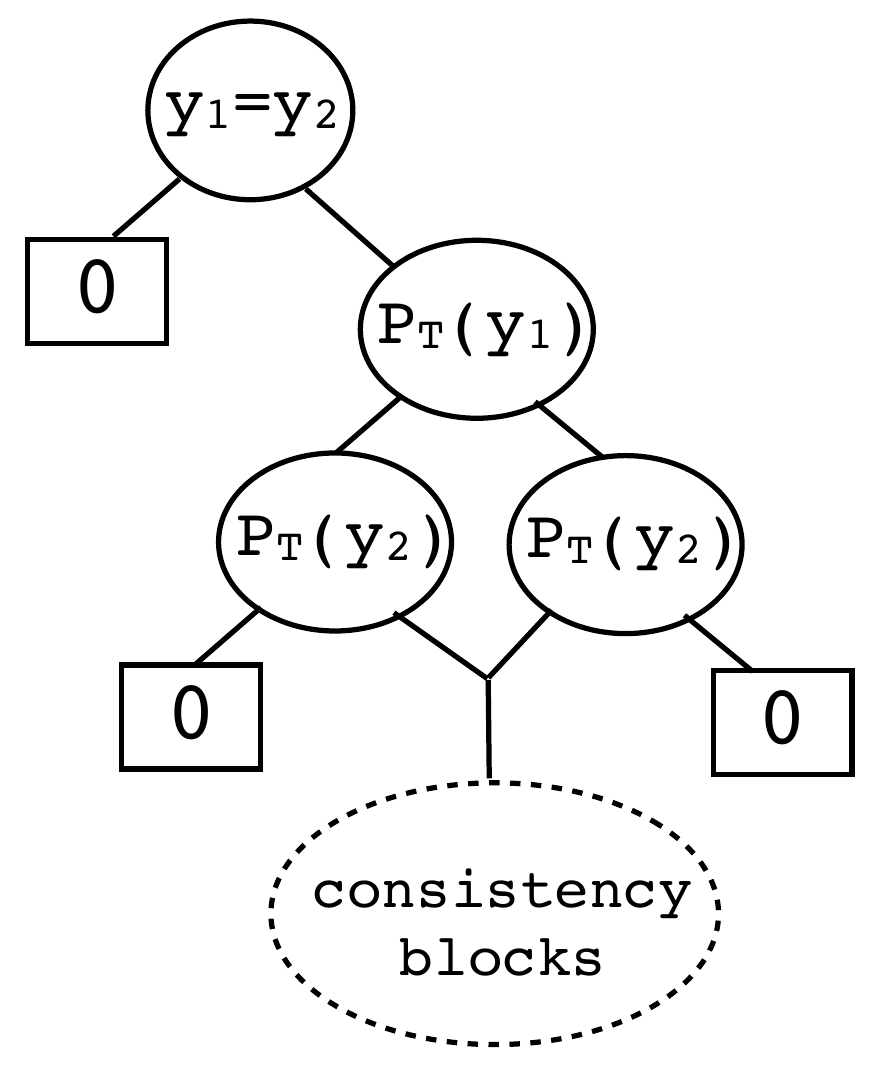}
\caption{The gadget verifying that $P_T()$ simulates Boolean values. 
}
\label{fig:fodd-sat-pt-verify}
\end{figure}

\begin{figure}[t]
\centering
\includegraphics[scale = 0.50]{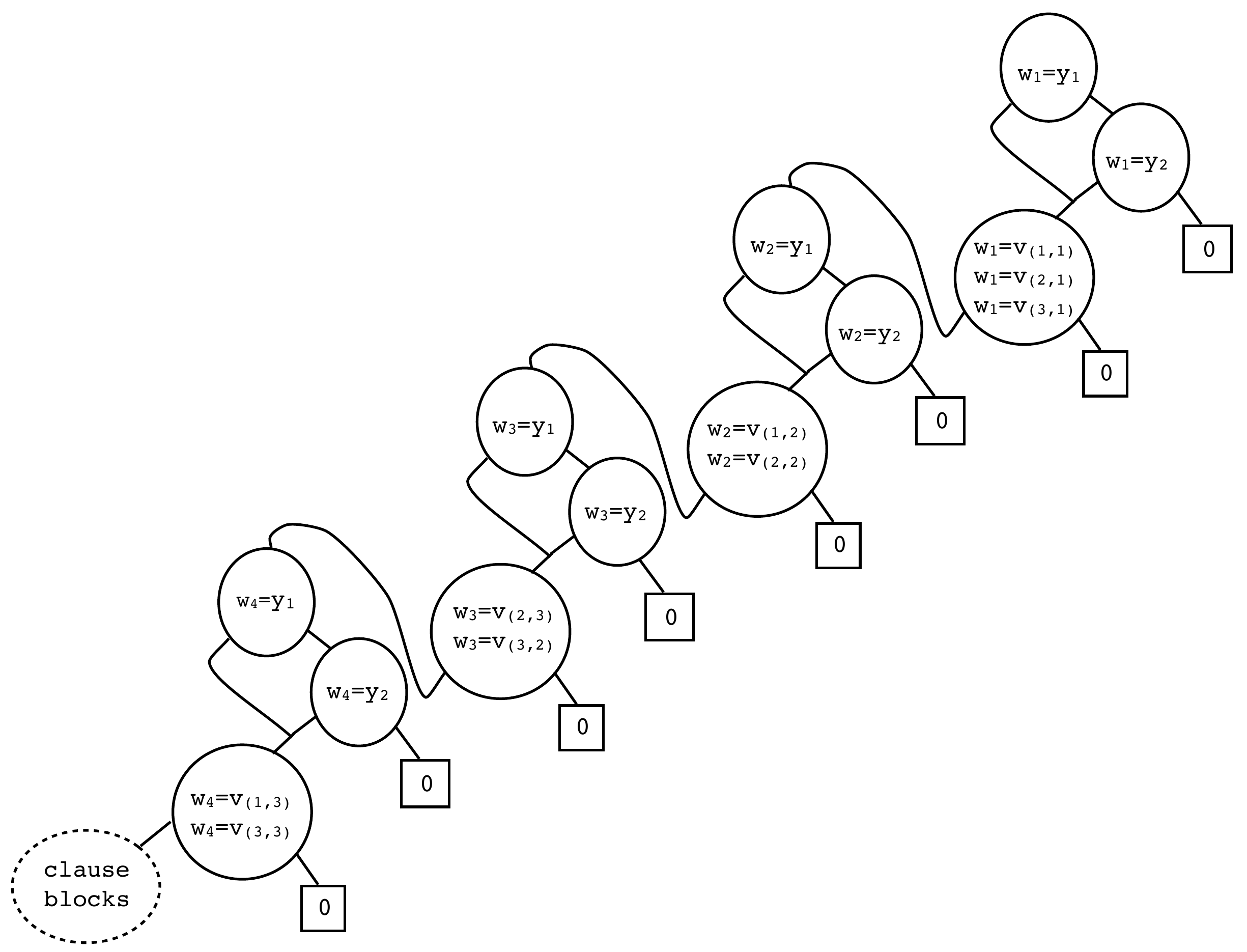}
\caption{The gadgets verifying Boolean values and variable consistency in the max FODD for the formula  $(x_1 \vee \overline{x_2}  \vee x_4) \wedge (\overline{x_1} \vee x_2 \vee x_3) \wedge (x_1 \vee x_3 \vee \overline{x_4})$.
Nodes that  have multiple equalities are a shorthand for a sequence of equality nodes where non-equality on any of the tests leads to the ``false exit", which is a 0 leaf in this diagram.
}

\label{fig:fodd-sat-block-verify}
\end{figure}

The third portion tracks the structure of $f$ to guarantee the same truth value in the FODD. 
To follow the structure of $f$, we build a block for each clause and chain these blocks together. Each block has 3 nodes corresponding to the 3 literals in the clause. In particular, if the $j$th literal in the $i$th clause is positive the true edge (literal satisfied; call this success) continues to the next clause, and the false edge (literal failed) continues to the next literal. For a negative  literal the true and false directions are swapped. The fail exit of the 3rd literal is attached to 0. Clause blocks have one entry and one exit and they are chained together. The success exit of the last clause is connected to the leaf 1. 
The only way to reach a value of 1 is if every clause block was satisfied by the valuations to $v_{(i,j)}$. 
Figure~\ref{fig:maxfodd-sat-clauses} illustrates the clause blocks for our example.

\begin{figure}[t]
\centering
\includegraphics[width = 0.27\textwidth]{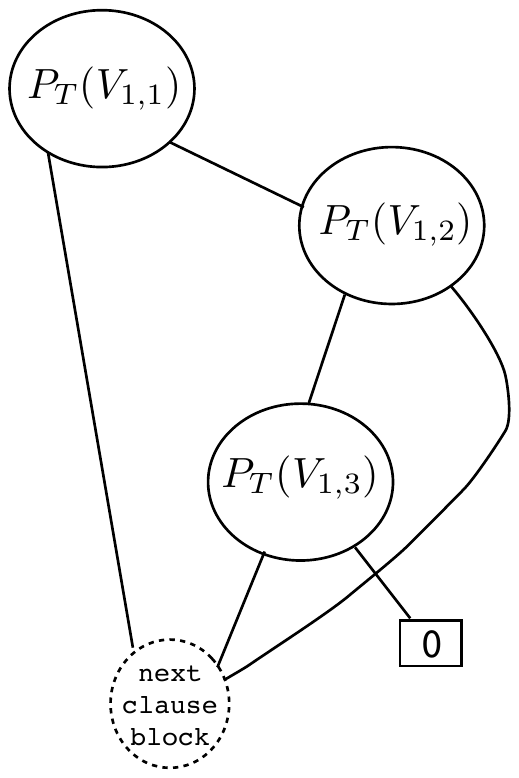}
\includegraphics[width = 0.27\textwidth]{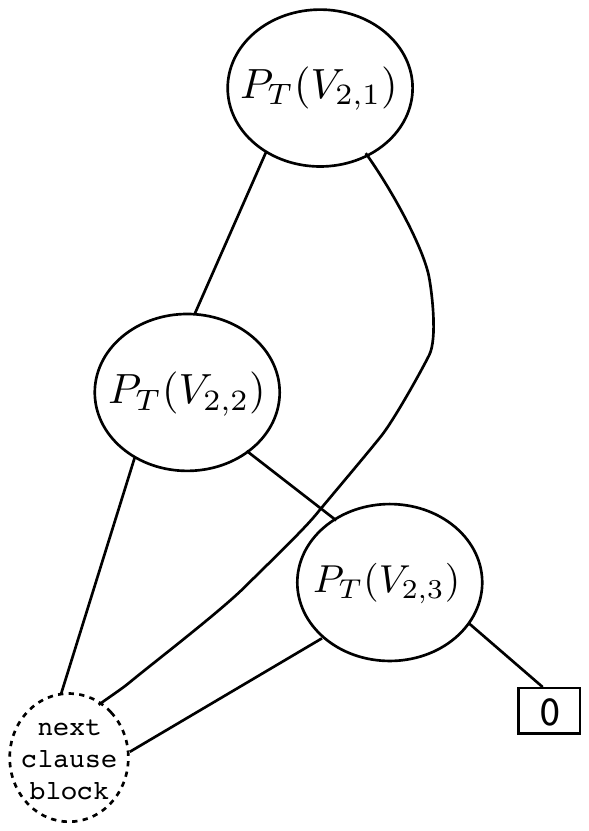}
\includegraphics[width = 0.38\textwidth]{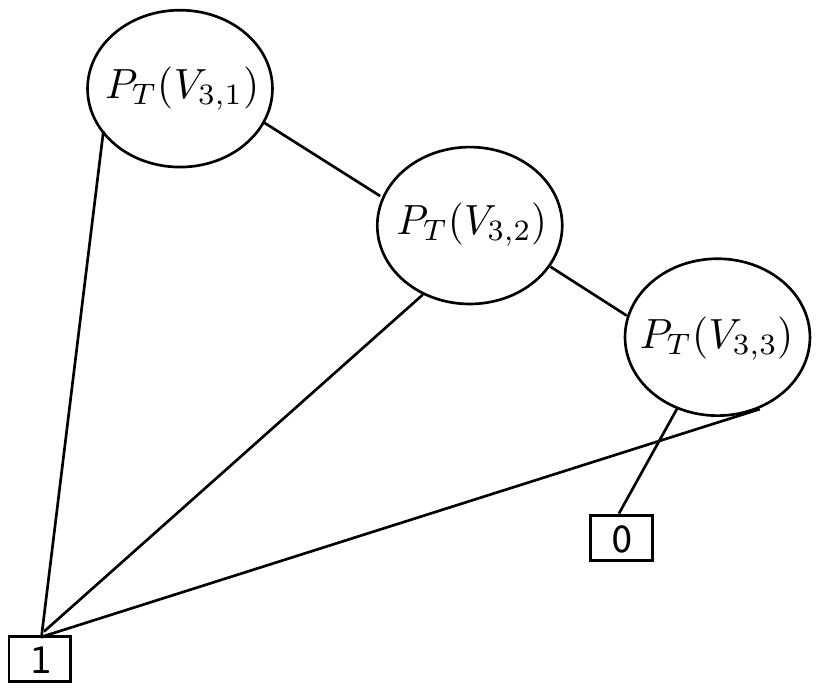}
\caption{The clause blocks for $(x_1 \vee \overline{x_2}  \vee x_4) \wedge (\overline{x_1} \vee x_2 \vee x_3) \wedge (x_1 \vee x_3 \vee \overline{x_4})$.}
\label{fig:maxfodd-sat-clauses}
\end{figure}

Each of the portions, including the clause blocks, has one entry and one exit and we chain them together 
to get the diagram $B$.
For a valuation to be mapped to 1 it must succeed in all three portions. 
We claim that $f$ is satisfied if and only if there is some interpretation $I$ such that $MAP_B(I) = 1$.

Consider first the case where $f$ is satisfiable. We introduce the interpretation $I$
that has two objects, $a$ and $b$, where $P_T(a)=$ true, and $P_T(b)=$ false. 
Let $v$ be a satisfying assignment for $f$ and let $\zeta(v)$ be a valuation for $B$ on $I$ where $y_1=a, y_2=b$ and if $v$ maps $x_i$ to 1 then $w_i$ and its block are mapped to $a$ and otherwise the block is mapped to $b$. Here, $\zeta(v)$ succeeds in all blocks, implying that $MAP_B(I,\zeta(v)) = 1$ and therefore $MAP_B(I) = 1$.

Consider next the case where $MAP_B(I) = 1$ for some $I$ and let $\zeta$ be such that $MAP_B(I,\zeta) = 1$. Then we claim that $\zeta$ identifies a satisfying assignment. First, since $\zeta$ succeeds in the first block we identify two objects that correspond via $P_T()$ to truth values, without loss of generality assume that $P_T(y_1)$ is true. Then success in the second portion implies that we can identify an assignment to the Boolean variables, if the $i$th block is assigned to $y_1$ we let $x_i=1$ and otherwise $x_i=0$. Finally, success in the third portion implies that the clauses in $f$ are satisfied by the assignment to the $x_i$'s. This completes the correctness proof.

Finally we address node ordering in the diagram. The only violation of ordering is the use of $P_T()$ in the first block. Otherwise, we have all equalities above $P_T()$, variable ordering $y_i\prec w_j \prec V_{a,b}$, and lexicographic ordering within a group. Now because our diagram forms one chain of blocks  leading to a single sink leaf with value 1 we can move the three $P_T()$ nodes to the bottom of the diagram in Figure~\ref{fig:fodd-sat-block-verify}. This does not change the map value for any valuation and thus does not affect correctness. We therefore conclude that $B$ is consistently sorted and $f$ is satisfiable iff $\map_B(I)=1$ for some $I$.
\end{proof}

\bigskip
This proof illustrates the differences in arguments needed for FODDs and GFODDs vs.\ First Order Logic.
For the latter, the reduction can use the sentence
$\exists v, (p_{x_1}(v) \vee \overline{p_{x_2}(v)} \vee p_{x_4}(v)) \wedge (\overline{p_{x_1}(v)} \ldots)$ to show the hardness result. However, this cannot be easily represented as a FODD because the literals appearing in the clauses will violate predicate order and, if we try to reorder the nodes from a naive FODD encoding, the result might be exponentially larger. 
An alternative formulation can use
$\exists x_1\ldots (p(x_1) \vee \overline{p(x_2)} \vee p(x_4)) \wedge (\overline{p(x_1)} \ldots$
to avoid the problem with predicate order. However, similar ordering issues now arise for the arguments. Our reduction introduces additional variables as well as the variable consistency gadget to get around these issues. 
The same structure of reduction from 3SAT instances and their QBF generalizations will be used in the results for GFODD.

\begin{theorem}
\label{thm:maxfodd-equiv}
{ FODD Equivalence}  and { FODD Edge Removal}  are $\Pi_2^p$-complete.
\end{theorem}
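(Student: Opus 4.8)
The plan is to prove both problems $\Pi_2^p$-complete by (i) using the small-model lemma to place equivalence in $\Pi_2^p$, and (ii) reducing validity of a $\forall\exists$ quantified Boolean formula, reusing the gadgets of Theorem~\ref{thm:maxfodd-sat}. For membership I would first observe that inequivalence has polynomially small witnesses. If $\map_{B_1}(I)\neq\map_{B_2}(I)$, say $\map_{B_1}(I)>\map_{B_2}(I)$, pick a valuation $\zeta$ achieving the value of $B_1$ and restrict $I$ to the objects on the path $\zeta$ traverses in $B_1$, obtaining $I'$ with at most (number of variables and constants of $B_1$) objects, exactly as in Lemma~\ref{lemma:foddsize}. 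Since $\zeta$ survives in $I'$ and restricting the domain cannot raise a $\max$ aggregation, $\map_{B_1}(I')=\map_{B_1}(I)$ while $\map_{B_2}(I')\le\map_{B_2}(I)<\map_{B_1}(I')$; hence $B_1,B_2$ already disagree on $I'$. Thus inequivalence is equivalent to: there exist an interpretation $I'$ of polynomial size and a leaf value $V$ such that exactly one of $\map_{B_1}(I')\ge V$ and $\map_{B_2}(I')\ge V$ holds. By Theorem~\ref{thm:maxfodd-eval}, ``$\map_B(I')\ge V$'' is an existential (NP) statement over valuations and its negation is universal (co-NP), so after guessing $I'$, $V$, and the witnessing valuation for the existential side, a single universal quantifier remains; this places inequivalence in $\Sigma_2^p$ and equivalence in $\Pi_2^p$. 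FODD Edge Removal is the same equivalence question on a restricted pair of inputs, so it inherits membership.

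For hardness I would reduce from the $\Pi_2^p$-complete problem of deciding validity of $\Phi=\forall\vec u\,\exists\vec x\,\psi(\vec u,\vec x)$ with $\psi$ in 3-CNF. The idea is that the universal block $\vec u$ is carried by the interpretation (the ``for all $I$'' of equivalence), while the existential block $\vec x$ is carried by the $\max$ over valuations of a single FODD. Concretely, I encode each $u_i$ by $P_T(c_{u_i})$ for a constant $c_{u_i}$ (a value fixed by $I$, immune to the $\max$), and each $x_j$ by $P_T(w_{x_j})$ for a FODD variable $w_{x_j}$ (chosen by the $\max$, hence existentially quantified). Using the Boolean-simulation, consistency, and clause gadgets of Theorem~\ref{thm:maxfodd-sat}, I build a sub-diagram $H$ with $\map_H(I)=1$ iff $I$ is well-formed (contains a $P_T$-true and a $P_T$-false object, enforced by an explicit $P_T(y_1)\wedge\neg P_T(y_2)$ block) and $\exists\vec x\,\psi(\vec u(I),\vec x)$ holds, where the well-formedness requirement guarantees both truth values are available so the $\max$ realizes every assignment to $\vec x$. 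I also build $W$ with $\map_W(I)=1$ iff $I$ is well-formed. Finally I assemble $B_1$ with a fresh selector variable $z$ at the root: on the branch $P_T(z)$ true go to $H$, and on $P_T(z)$ false go through a single edge $e$ into $W$.

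On a well-formed $I$ the $\max$ can take either branch, so $\map_{B_1}(I)=\max(\map_H(I),\map_W(I))=1$; that is, $\map_{B_1}$ computes well-formedness. Let $B_2$ be $B_1$ with the single edge $e$ redirected to a $0$ leaf, so that $\map_{B_2}(I)=\map_H(I)$, i.e.\ well-formedness $\wedge\ \exists\vec x\,\psi$. Then $B_1\equiv B_2$ iff on every well-formed $I$ the two sides agree, i.e.\ $\exists\vec x\,\psi(\vec u(I),\vec x)$ holds for the assignment $\vec u(I)$ read off from the constants; since well-formed interpretations realize all assignments to $\vec u$ (point each $c_{u_i}$ at a true or false object), this is exactly $\Phi$. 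On ill-formed $I$ both diagrams evaluate to $0$ and agree, so they cause no spurious disagreement. Because $B_2$ is literally $B_1$ with one edge redirected to $0$, the same instance proves hardness of both FODD Equivalence and FODD Edge Removal.

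The main obstacle is that the hardness construction must behave correctly on every interpretation, not just convenient ones: the $\forall I$ quantifier exposes malformed interpretations that the satisfiability reduction never had to face. I expect the real work to be (a) designing $H$ and $W$ so that they coincide on exactly the ill-formed interpretations (achieved by the shared well-formedness block, which makes both diagrams $0$ there and prevents an all-true assignment from masquerading as $\exists\vec x$); (b) verifying that conjoining well-formedness lets the $\max$ over valuations realize the full $\exists\vec x$ rather than a domain-restricted existential; and (c) discharging the node-ordering requirement as in Theorem~\ref{thm:maxfodd-sat}, pushing the $P_T$ nodes to the bottom of each chain so that the assembled $B_1$ respects the atom order.
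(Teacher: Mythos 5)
Your membership argument is the paper's, phrased slightly more explicitly: the paper likewise invokes Lemma~\ref{lemma:foddsize} to bound the witness and then wraps FODD Evaluation calls (one per leaf value) inside a guess of $I$, giving non-equivalence in $\Sigma_2^p$. Your hardness argument, however, takes a genuinely different route. The paper reduces from the arrowing problem of Ramsey theory ($F \rightarrow (G,H)$, shown $\Pi_2^p$-complete by Schaefer), building an $F$-isomorphism verifier $B_1$ and a second diagram $B_2$ that additionally checks for a red $G$ or blue $H$ in the coloring $E_C$; the universal quantifier over colorings is what rides on the ``for all $I$'' of equivalence. You instead reduce from $\forall\exists$-3SAT, carrying the universal block in the interpretation via constants $c_{u_i}$ and $P_T(c_{u_i})$, and the existential block in the $\max$ over valuations. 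This is essentially the ``peel off one quantifier level'' idea the paper uses later for GFODD Equivalence (Theorem~\ref{thm:gfodd-equiv}), and your use of constants is precisely what makes it work at aggregation depth $1$: a max-only FODD cannot enforce the universal well-formedness conditions ($P_{x_i}$ uniform across objects, bounded domain) that the GFODD proof enforces with $\min$ variables, whereas $P_T(c_{u_i})$ is valuation-independent for free. What you lose relative to the paper is constant-freeness: the paper explicitly claims its hardness results (except $\Pi_1$ satisfiability) hold without constants in the signature, and arrowing delivers that; your reduction does not, though the theorem as stated is still established. Your selector-variable assembly also gives the Edge Removal special case more directly than the paper's somewhat delicate restructuring of $B_1$ and $B_2$.

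There is one concrete flaw in step (c): the Theorem~\ref{thm:maxfodd-sat} fix you cite (pushing $P_T$ nodes to the bottom of each chain) cannot sort your diagram, because your root is the selector $P_T(z)$ sitting \emph{above} the equality gadgets of $H$ and $W$, while the clause blocks put $P_T$ nodes \emph{below} equalities. With the order $=\ \prec P_T$ the root violates sortedness; with $P_T \prec\ =$ the clause blocks do; no single predicate order admits the shape $P_T,=,\ldots,=,P_T$. You cannot substitute a fresh selector predicate $S$ naively either: an interpretation with $S$ empty makes the $W$ branch carry all of $\map_{B_1}$ while $B_2$'s $H$ branch is unreachable, so the diagrams disagree independently of the QBF. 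The repair is to use a fresh unary predicate $S$ with order $S \prec\ = \ \prec P_T$ \emph{and} to extend well-formedness (in both $H$ and $W$) to require objects with both $S$-values, via nodes $S(z_1)$, $S(z_2)$ placed directly under the root; then ill-formed interpretations still zero out both diagrams, well-formed ones make both branches realizable, and every path reads $S$-atoms, then equalities, then $P_T$-atoms. A similar small point arises inside the clause blocks: atoms $P_T(c_{u_i})$ with constant arguments precede all variable atoms in the order, so occurrences of universal variables in late clauses would violate sortedness; bind fresh per-occurrence variables to $c_{u_i}$ in the (equality-only) consistency blocks and test only variable atoms in the clauses. With these two amendments your construction goes through; the paper's heavy investment in ordering details is exactly a warning that this step cannot be waved off by citation.
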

\begin{proof}
Since Edge Removal is a special case of Equivalence it suffices to show membership for Equivalence and hardness for Edge Removal. The hardness result is given in two stages;
we first present a reduction which does not respect the constraint on node ordering 
and Edge Removal structure, and 
then show how to fix the construction to respect these restrictions.

\paragraph{Membership in $\Pi_2^p$:}
First observe that, by Lemma~\ref{lemma:foddsize}, if the diagrams are not equivalent then there is a small interpretation that serves as a witness for the difference.
Using this fact, we  can
show that non-equivalence is in $\Sigma_2^p$. Given $B_1$, $B_2$ we guess an interpretation $I$ of the appropriate size, and then appeal to an oracle for FODD Evaluation to calculate $\map_{B_1}(I)$ and $\map_{B_2}(I)$.   Using these values we return Yes or No accordingly.
To calculate the map values, let $B$ be one of these diagrams, and let the leaf values of the diagram be $v_1,v_2,\ldots,v_k$. We make $k$ calls to FODD Evaluation 
with $(B,I,v_i)$ as input. $\map_{B}(I)$ is the largest value on which the oracle returns Yes.
If a witness $I$ for non-equivalence exists then this process can discover it and say No, and otherwise it will always say Yes. Therefore non-equivalence is in $\Sigma_2^p$, and equivalence is in $\Pi_2^p$.

\paragraph{Reduction basics:}
To show hardness, 
consider the problem of deciding arrowing from the Ramsey theory of graphs \cite{Schaefer01}. 
Given two graphs $G_1$, $G_2$ we say that $G_1$ includes an {\em embedding} of $G_2$ if there is a 1-1 mapping $g$ from nodes of $G_2$ to nodes of $G_1$, such that for every edge $(v_1,v_2)$ of $G_2$, the edge $(g(v_1),g(v_2))$ is in $G_1$. 
We say that $G_1$ includes an {\em isomorphic embedding} of $G_2$ if, in addition, $g$ satisfies that 
for every edge $(v_1,v_2)$ not in $G_2$, the edge $(g(v_1),g(v_2))$ is not in $G_1$.

We say that  
$F$ arrows $(G,H)$, denoted
$F \rightarrow (G,H)$, if for every 2-color edge-coloring of $F$ into colors red and blue,
the red subgraph of $F$ includes an embedding of $G$ or the 
the blue subgraph of $F$ includes an embedding of $H$.

The arrowing problem is as follows: Given $3$ graphs $F, G, H$ as input, return Yes iff $F$ arrows $(G,H)$. This problem was shown to be $\Pi_2^p$-complete by \cite{Schaefer01}.
We reduce this problem to FODD equivalence. 
The signature includes equality and two arity-2 predicates $E_F$ and $E_C$, where $E_F$ captures the edge relation of the main graph $F$ and $E_C$ is a coloring of all possible edges such that when $E_C(x_i,x_j)$ is true the edge is colored red and when it is false the edge is colored blue.

\begin{figure}[t]
\centering
\includegraphics[scale = 0.50]{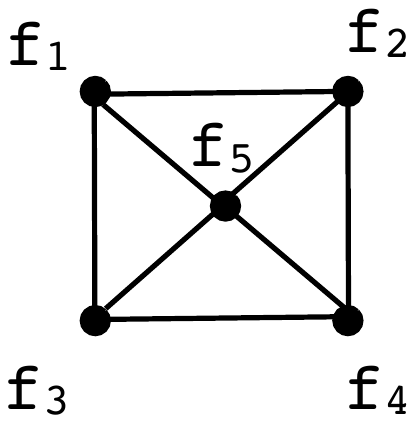}
\includegraphics[scale = 0.50]{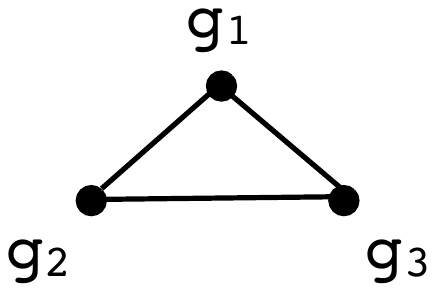}
\includegraphics[scale = 0.50]{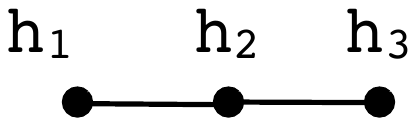}
\caption{Three input graphs to the Arrowing problem F, G, H. This is a positive instance as (one can verify that) every two coloring of F has either a red triangle (G) or two blue edges connected by a single vertex (H).}
\label{fig:arrowing-graphs}
\end{figure}

\begin{figure}[t]
\centering
\includegraphics[scale = 0.55]{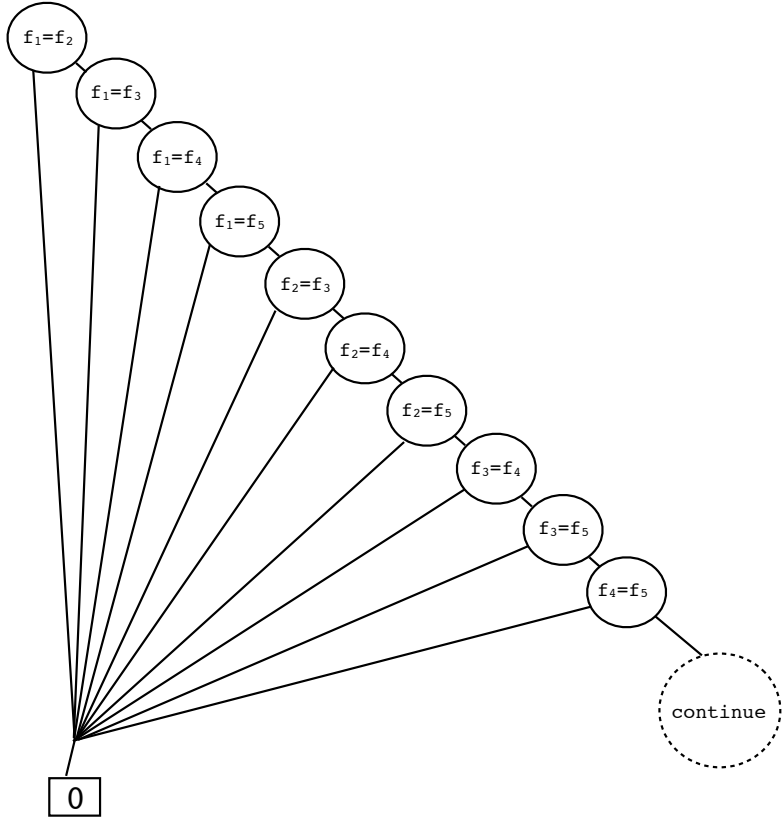}
\caption{A FODD verifying that variables $f_i$ are mapped to distinct objects in $I$.}
\label{fig:fodd-f-unique-nodes}
\end{figure}

\begin{figure}[t]
\centering
\includegraphics[scale = 0.30]{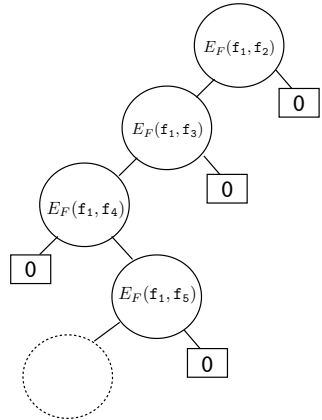}
\caption{A fragment of a FODD verifying the graph F. Here every neighbor of $f_1$ is tested. Since $f_1$ is connected to $f_2$, $f_3$ we expect the corresponding atoms to be true and thus continue on the left branch; since it is not connected to vertex $f_4$, the gadget continues on the false side; $f_5$ is connected and we continue to the left again. 
}
\label{fig:arrowing-vertex-verify}
\end{figure}

\begin{figure}[t]
\centering
\includegraphics[scale = 0.50]{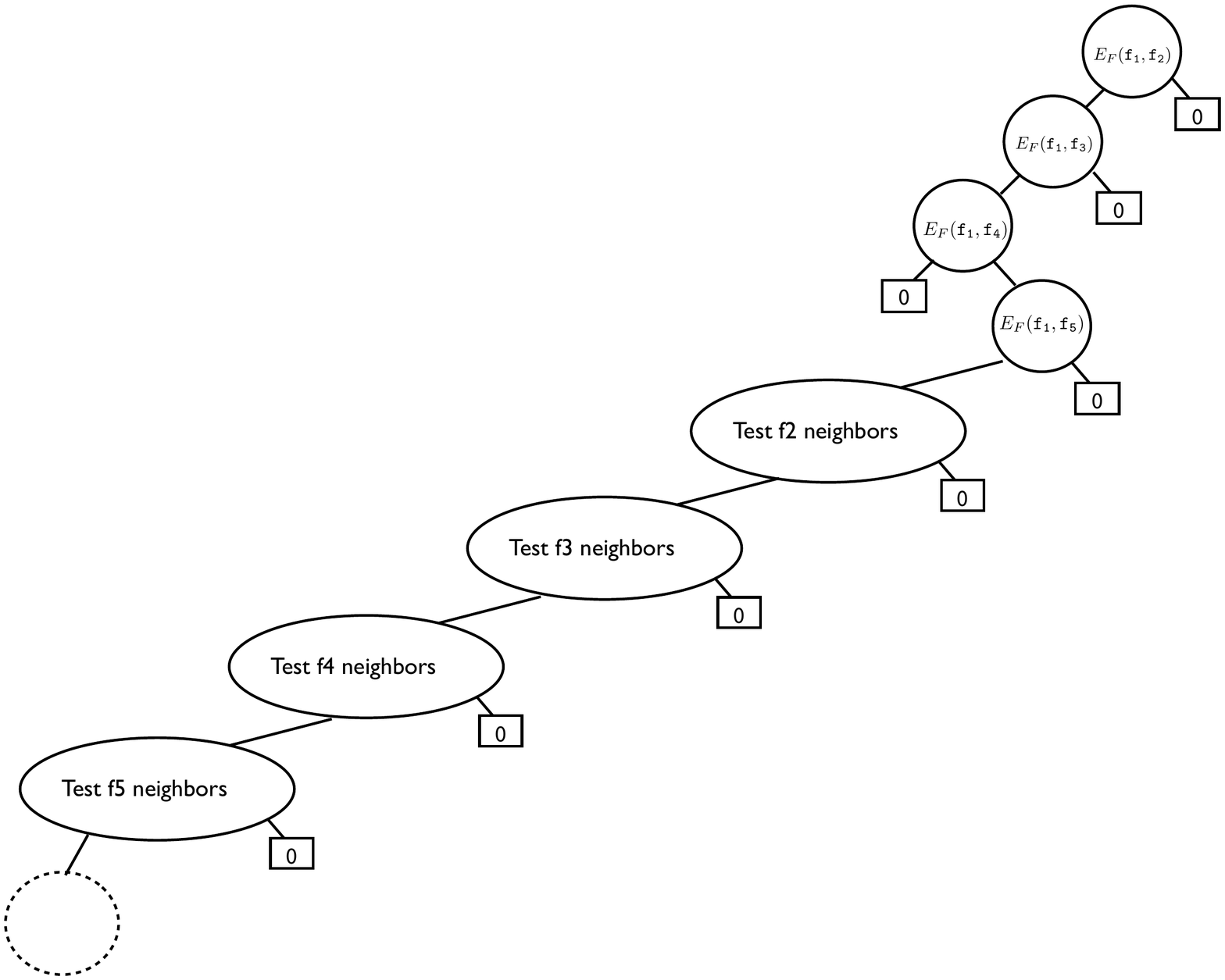}
\caption{The FODD verifying the structure of the graph F. }
\label{fig:arrowing-F-verify}
\end{figure}

\paragraph{The main construction:}
To transform arrowing into an instance of FODD equivalence we build two FODDs with binary leaves. The first FODD is satisfied iff $I$ includes an isomorphic embedding of $F$ in its edge relation $E_F$.  The second FODD is satisfied iff the same condition holds {\em and} the coloring defined by $E_C$ has a red embedding of $G$ or a blue embedding of $H$. 
Note that, due to the 1-1 requirement, $I$ must have at least as many objects as there are nodes in $F$.
We illustrate the construction using 
the example input in Figure~\ref{fig:arrowing-graphs}. Here the input graphs F, G, H are a positive instance of arrowing.

To build a FODD which verifies that $I$ has an isomorphic embedding of $F$, we map each node to a variable in the FODD and test that each node has its correct neighbors. 
We first build a ``node mapping" gadget that makes sure that each variable in the FODD is mapped to a different object in the interpretation.
This is done by following a path of ${V \choose 2}$ inequalities, where off-path edges go to 0 and the final exit continues to the next portion.
This gadget, for our example graph $F$ with 5 nodes,  is shown in Figure~\ref{fig:fodd-f-unique-nodes}.
To test isomorphism to $F$ we test the neighbors of each node in sequence to verify that edges exist iff they are in $F$.
The FODD fragment in Figure~\ref{fig:arrowing-vertex-verify} shows how this can be tested for vertex $f_1$ in the example. If the edge is present in the graph we continue left  (using the {\tt true} branch) to the next neighbor and if the edge is not in $F$ we continue to the right child (the {\tt false} branch). Edges off this path are directed to the zero leaf. The endpoint of the path will connect to the next portion of the FODD. 
This construction can be done for each node and the fragments can be connected together to yield the 
$F$ verifier. 
This is illustrated in Figure~\ref{fig:arrowing-F-verify}. 
Finally, the diagram $B_1$ is built by connecting the $F$ verifier at the bottom of node mapping gadget, and replacing the bottom node of the $F$ verifier with a leaf valued 1. We refer to this diagram as the ``complete $F$ verifier" below.
This construction can be done in polynomial time for any graph $F$.
It should be clear form the construction that $\map_{B_1}(I)=1$ iff $I$ includes an isomorphic embedding of $F$ in its edge relation $E_F$. In addition, the verifier diagram is ordered where we have $``="\prec E_F$, and where variables are ordered lexicographically. 

\clearpage

The second digram $B_2$ includes the complete $F$ verifier and additional FODD fragments that are described next to capture the conditions on $G$ and $H$ respectively. In order to verify the embedding of colored subgraph $G$ we first define a node mapping capturing the mapping of $G$ nodes into $F$ nodes, and then verify that the required edges exist and that they have the correct color. 
The FODD fragment in Figure~\ref{fig:arrowing-G-iso1} shows how we can select a node mapping for vertex $g_1$. This fragment returns 0 unless $g_1$ is mapped to one of the nodes in $F$ that are identified in the $B_1$ portion.
As depicted in Figure~\ref{fig:arrowing-G-full-iso}, this can be repeated for all the nodes in $G$, verifying that each node in $G$ is mapped to a node in $F$. 
Next we need to verify that the mapping is one to one. This can be done by using a path of inequalities between the variables referring to nodes of $G$. This FODD fragment is given in Figure~\ref{fig:arrowing-g-map}. 
For correctness, we need to chain the two tests together, but this will violate node ordering. We therefore interleave the tests
putting the uniqueness equality tests for a variable exactly after the equalities selecting its value. This change is possible because each such block has exactly one exit point. The resulting diagram, for our running example, is shown in Figure~\ref{fig:arrowing-G-equalities}.

\begin{figure}[t]
\centering
\includegraphics[scale = 0.40]{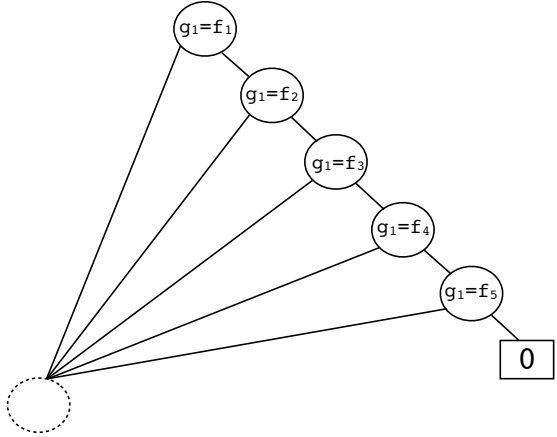}
\caption{A FODD verifying the mapping of vertex $g_1$ to some vertex in F.}
\label{fig:arrowing-G-iso1}
\end{figure}

\begin{figure}[t]
\centering
\includegraphics[scale = 0.50]{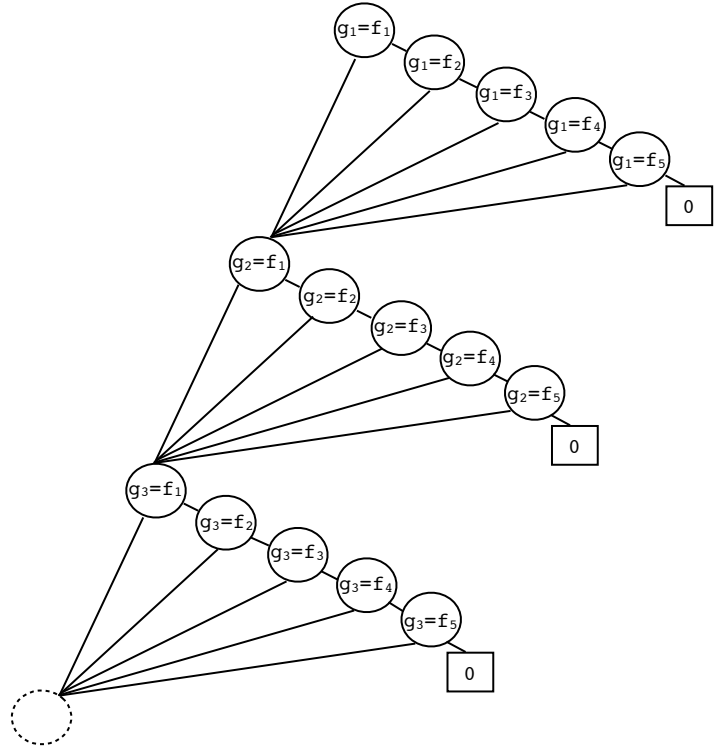}
\caption{The complete FODD verifying the vertices of G are mapped to vertices in F.}
\label{fig:arrowing-G-full-iso}
\end{figure}

\begin{figure}[t]
\centering
\includegraphics[scale = 0.40]{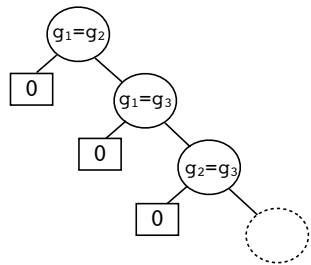}
\caption{A FODD verifying the mapping is one to one.}
\label{fig:arrowing-g-map}
\end{figure}

\begin{figure}[t]
\centering
\includegraphics[scale = 0.45]{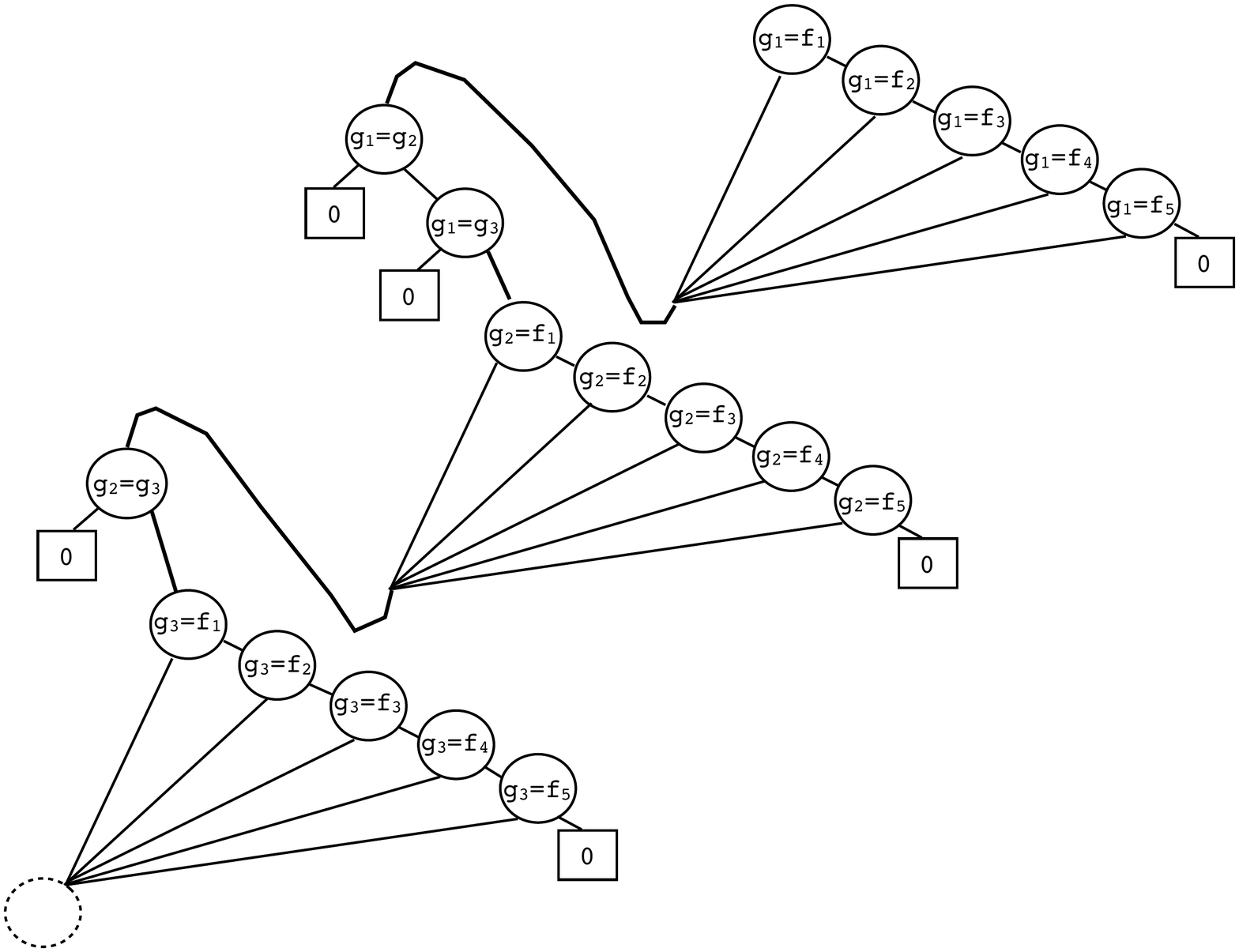}
\caption{Node mapping construction for $G$ reordered to comply with sorted order. }
\label{fig:arrowing-G-equalities}
\end{figure}

To complete the embedding test, we need to check that the edges are preserved and that they have the correct color. 
We do this by first checking that the corresponding edges in $G$ are in $F$. We can do this using a left going path testing each edge in turn, where we test both $E_F(g_i,g_j)$ and $E_F(g_j,g_i)$ to account for the fact that the graph is undirected.\footnote{The test of both directions of the edge is not necessary, because  a different portion of the diagram already verifies that the embedding of $F$ is undirected, but we include it here to simplify the argument.}
This is illustrated on the left hand side of Figure~\ref{fig:arrowing-g-edge}. Note that, because we are testing for an embedding (i.e., not for an isomorphic embedding) we test only for the edges in $G$ and do not need to verify nonexistence of the edges not in $G$ (it just happens here that $G$ is a clique so this is not visible in the example). The same FODD structure 
is repeated with predicate $E_C$ replacing $E_F$ to verify that the edges of G are colored red, as shown on the right of Figure~\ref{fig:arrowing-g-edge}.

\clearpage

A similar construction with node mapping, edge verifier, and color verifier can be used for $H$. 
The node mapping construction is identical.
Figure~\ref{fig:arrowing-h3} shows the edge and color verifiers.
The only difference in construction is that the color verifier tests that the edge is not in $E_C$ to capture the color blue and therefore has a mirror structure to the one verifying the $E_F$ edges.
Note that in this case $H$ is not a complete graph and we are indeed only testing for the edges in $H$.
This construction can be done in polynomial time for any $G$ and $H$.

Finally we connect the three portions together to obtain $B_2$ as follows. The final output of the complete $F$ verifier is connected to the root of the $G$ verifier. The final output of the $G$ verifier is connected to 1. The zero leaf of the $G$ verifier is removed and instead connected to the root of the $H$ verifier. The final output of the $H$ verifier is connected to 1. Therefore, there are exactly two edges leading to the 1 leaf in this diagram, corresponding to the positive outputs of the $G$ and $H$ verifiers. Figure~\ref{fig:arrowing-both-fodd} shows an overview of the two FODDs, $B_1$ and $B_2$, generated by the reduction.

The diagrams $B_1$ and $B_2$ are not consistent with any sorting order over node labels, and thus we need to modify them to get a consistent ordering.
We show below how this can be done with only a linear growth in  the size of the diagrams and without changing the semantics of  $B_1$ and $B_2$.
Before presenting this transformation
we show that $F \rightarrow (G,H)$ iff $B_1$ and $B_2$ are equivalent. 

\begin{figure}[t]
\centering
\includegraphics[scale = 0.50]{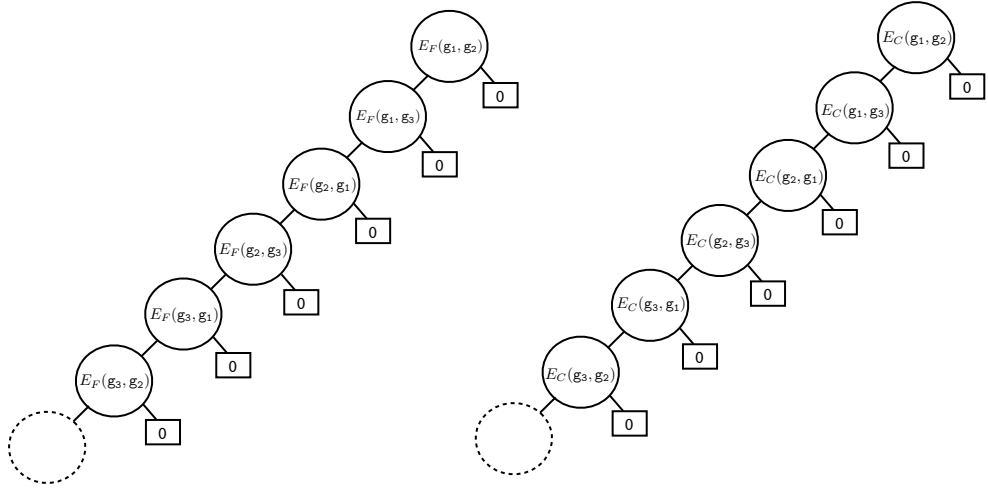}

\caption{A FODD verifying that the edges in G are in F.}
\label{fig:arrowing-g-edge}
\end{figure}

\begin{figure}[t]
\centering
\includegraphics[scale = 0.40]{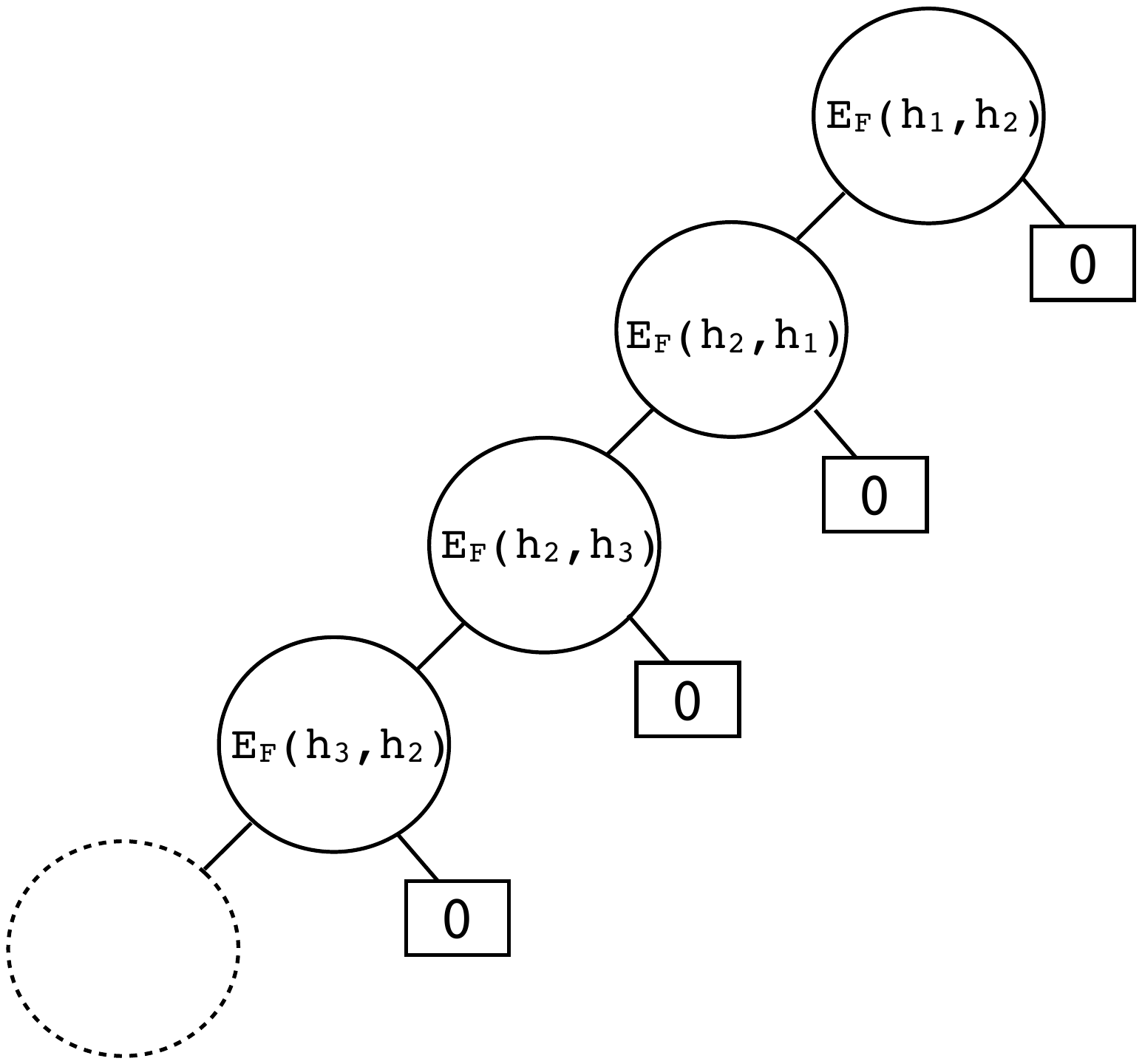}
\includegraphics[scale = 0.40]{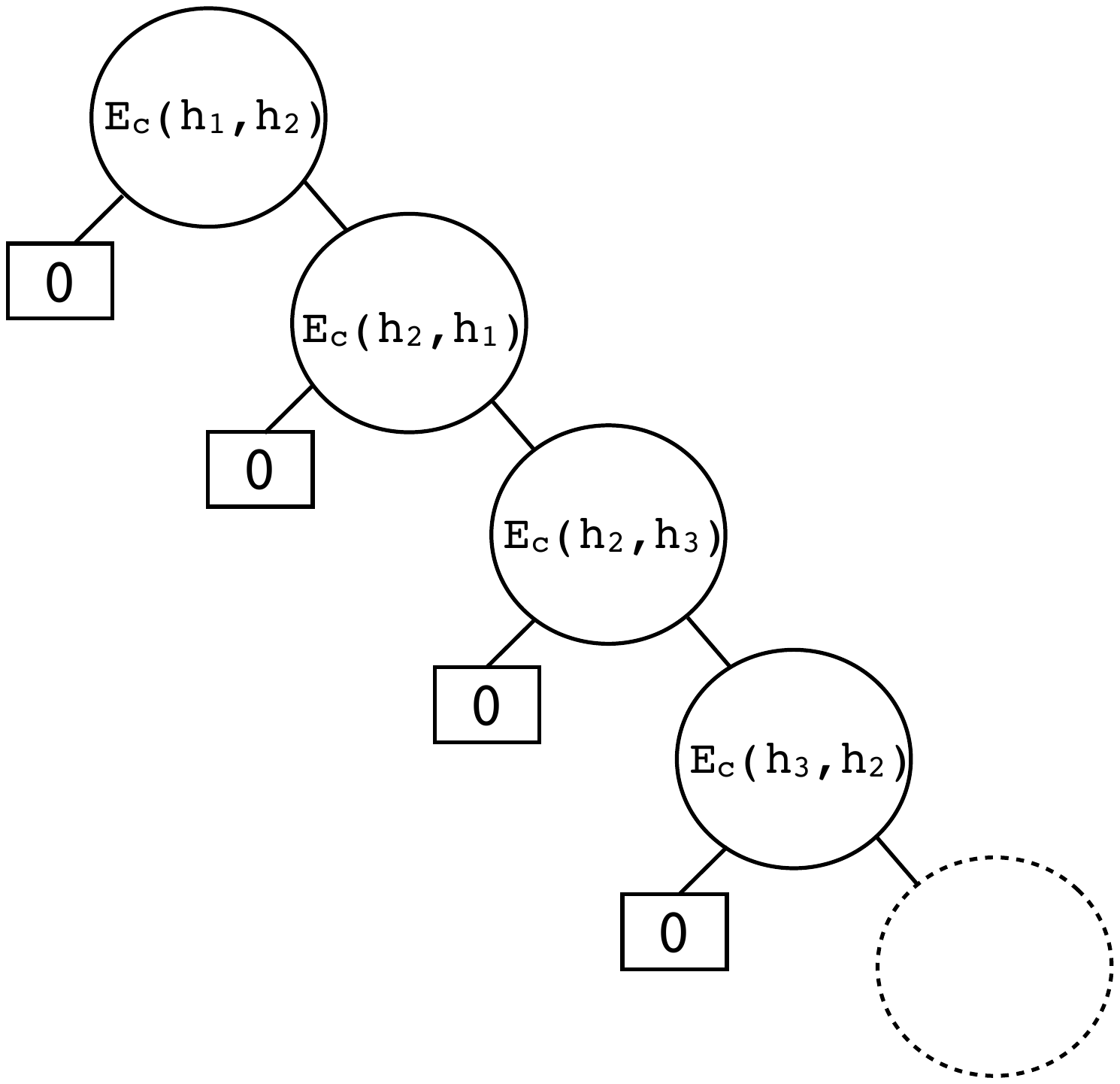}

\caption{Left: a FODD verifying that the edges in H are in F. Right: a FODD verifying that the color of $H$ is blue (i.e., its edges are not in $E_C$).}
\label{fig:arrowing-h3}
\end{figure}

\begin{figure}[t]
\centering
\includegraphics[scale = 0.45]{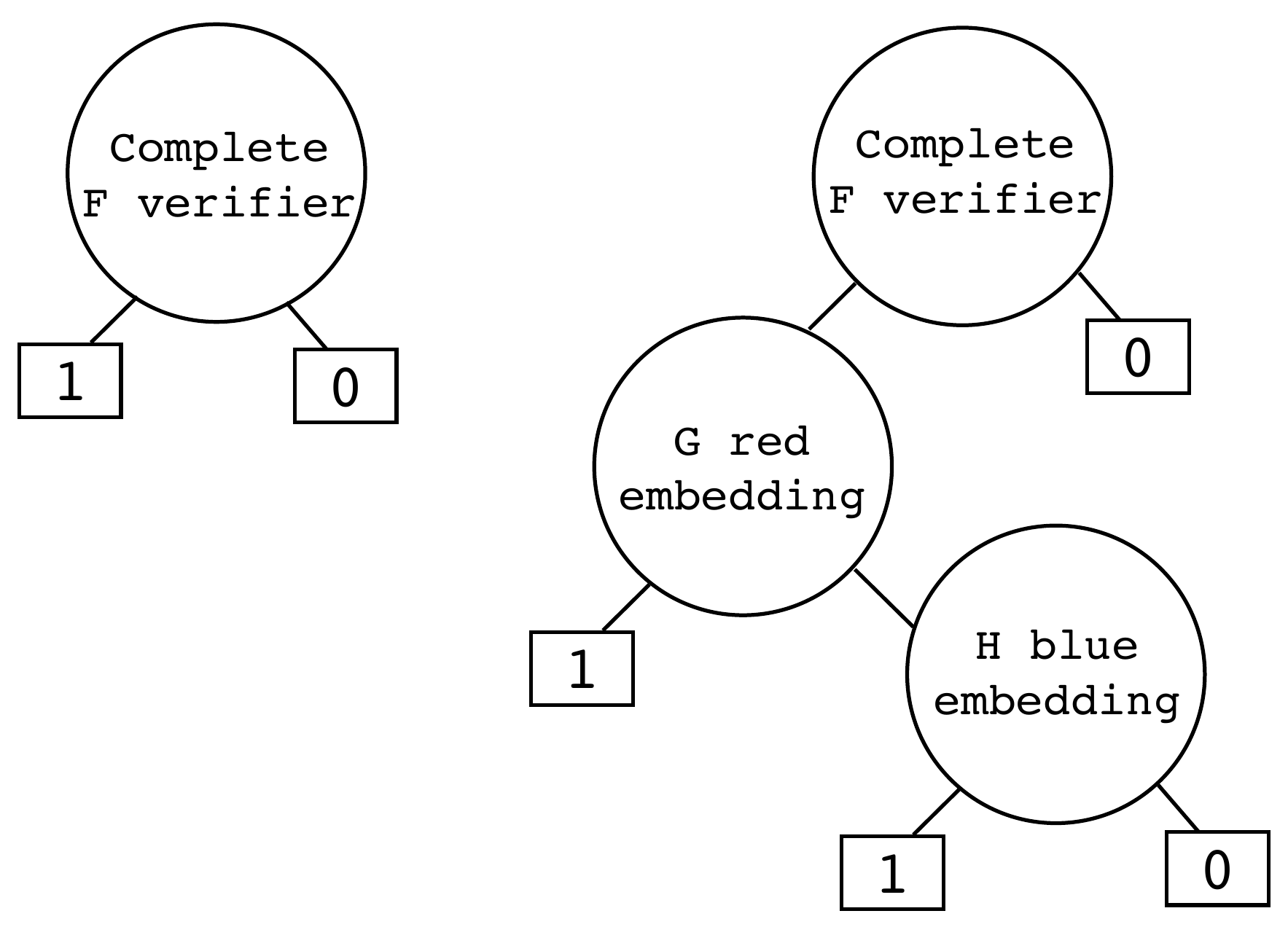}

\caption{High level structure of the two FODDs created from the graphs F, G, H.}
\label{fig:arrowing-both-fodd}
\end{figure}

\paragraph{Correctness of the construction:}
Consider the case when $F \rightarrow (G,H)$, that is, for every 2-color edge-coloring of $F$ there is a red $G$ or a blue $H$. We show that the two FODDs are equivalent by way of contradiction. Assume that $B_1$ and $B_2$ are not equivalent and let $I$ be any
witness to this fact.
Now,
$\map_{B_1}(I)=0$ implies  $\map_{B_2}(I)=0$ because the only paths to 1 in $B_2$ go through a copy of $B_1$. 
Therefore, for the assumed witness $I$, it must be the case that
$\map_{B_1}(I)=1$ and $\map_{B_2}(I)=0$. 

By construction, $\map_{B_1}(I)=1$ implies that $I$ has an isomorphic embedding of $F$. Because $F \rightarrow (G,H)$,
any coloring of that embedding, including the coloring captured by $E_C$ in $I$, has a red $G$ or a blue $H$. Assume that the embedding in $I$ has a red $G$. Then we can construct the appropriate node mapping in a valuation $\zeta$ to show that $\map_{B_2}(I,\zeta)=1$, contradicting the assumption. 
The same argument handles the case when the embedding has a blue $H$.

Consider the case when $F$ does not arrow $(G,H)$. Then there is a valid 2-color edge-coloring of $F$ which does not have a red $G$ and does not have a blue $H$. Construct the corresponding interpretation $I$ that represents $F$ and this edge-coloring. We claim that $\map_{B_1}(I)=1$ and $\map_{B_2}(I)=0$.  The fact $\map_{B_1}(I)=1$ follows by mapping the nodes in $F$ to the variables that represent them. 
Now if $\map_{B_2}(I)=1$ then $\map_{B_2}(I,\zeta)=1$ for some $\zeta$ and we can trace the path that $\zeta$ traverses in $B_2$. This path together with $\zeta$ can be used to identify either a red G or a blue H in $I$ and therefore in the corresponding coloring of $F$. This contradicts the assumption that the coloring is a witness for non-arrowing.

\begin{figure}[thb]
\centering
\includegraphics[scale = 0.80]{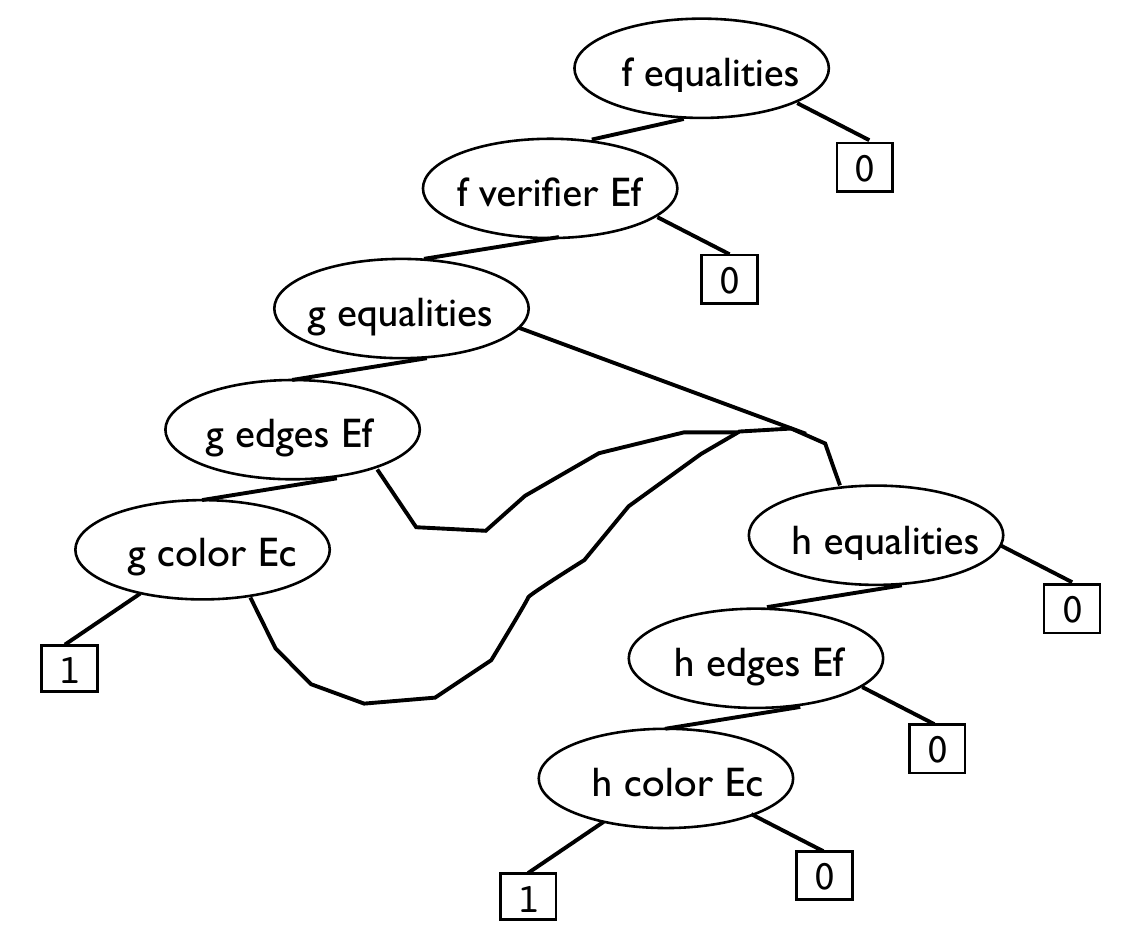}
\caption{Expanded view of the structure of $B_2$. }
\label{fig:arrowing-B2expanded}
\end{figure}

\begin{figure}[thb]
\centering
\includegraphics[scale = 0.40]{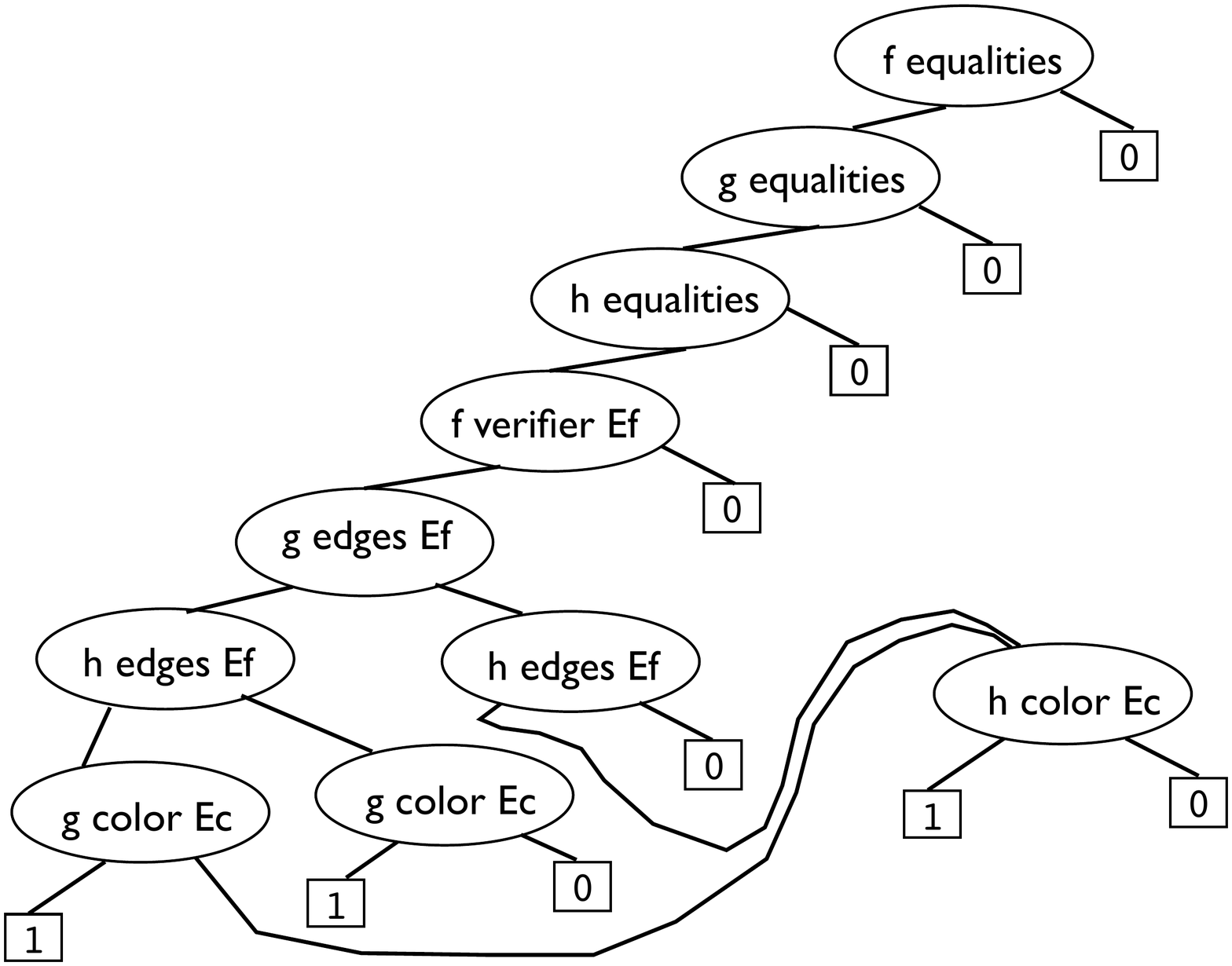}
\caption{Expanded view of the reordered structure of $B_2$. }
\label{fig:arrowing-final-B2}
\end{figure}

\begin{figure}[thb]
\centering
\includegraphics[scale = 0.40]{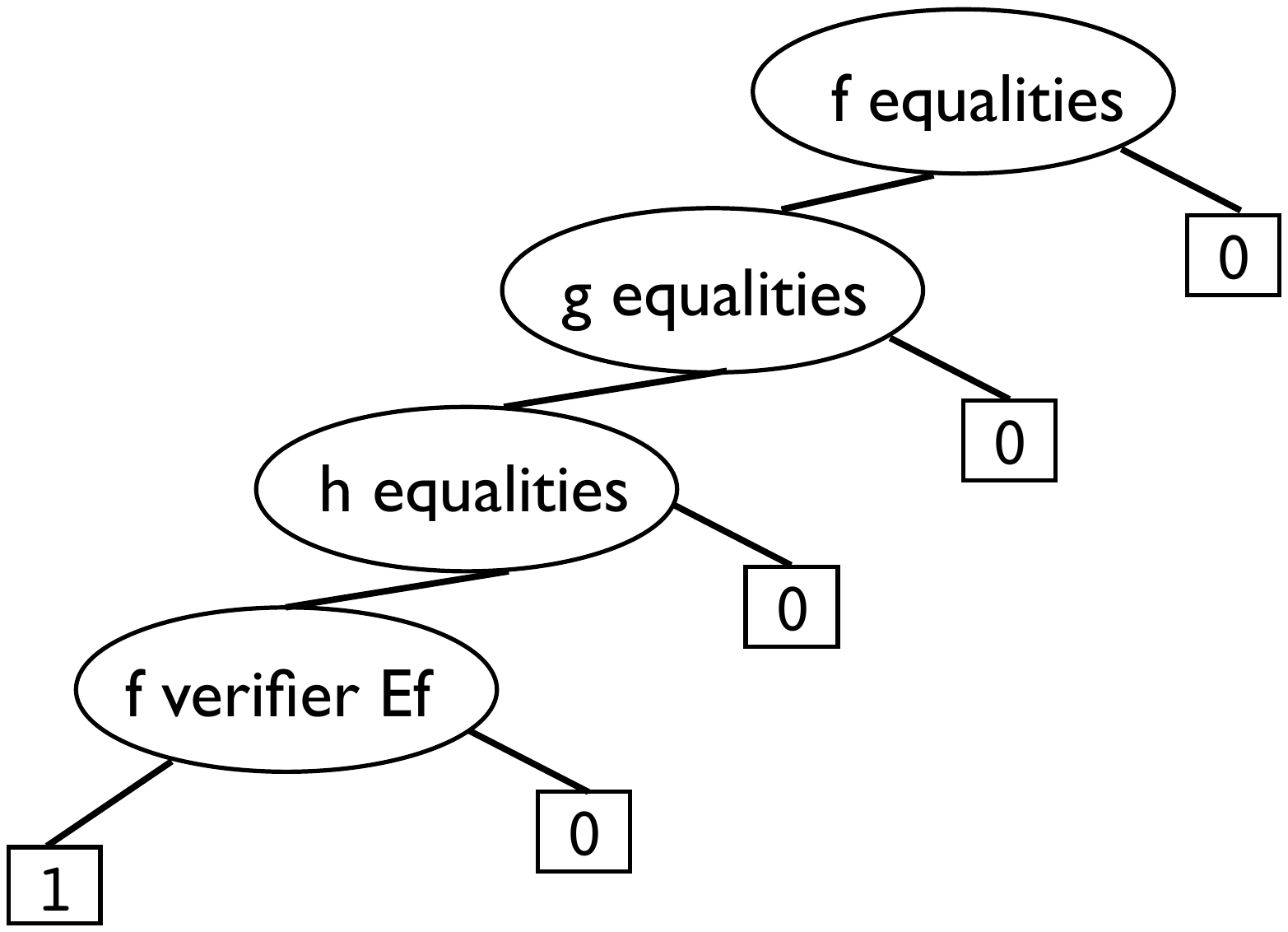}
\caption{Expanded view of the modified version of $B_1$. }
\label{fig:arrowing-b1-inflated-byGH}
\end{figure}

\paragraph{Fixing the construction to handle ordering and edge removal special case:}
We next consider the node ordering in $B_1$ and $B_2$. The diagram $B_1$ is sorted, 
where predicate order puts equalities above $E_F$ and arguments are lexicographically ordered.
For $B_2$ we consider the sub-block structure of the construction. Expanding each of the sub-blocks of $F$, $G$, $H$  in 
Figure~\ref{fig:arrowing-both-fodd} we observe that $B_2$ has the structure shown in Figure~\ref{fig:arrowing-B2expanded}. 
We further observe that each block is internally sorted, but blocks of equalities, $E_F$ and $E_C$ are interleaved.
By analyzing this structure we see that the blocks can be 
reordered at the cost of duplicating some portions yielding the structure in Figure~\ref{fig:arrowing-final-B2}. 
It is easy to see that $B_2$ is satisfied in $I$ if and only if the reordered diagram is satisfied in $I$. The diagrams yield the same value for any valuation $\zeta$ which does not exit to 0 due to bad node mapping for $G$ or $H$. Thus the original version might yield 1 (e.g., through $G$ path) when the reordered diagram yields 0 on such a valuation (e.g., via the $H$ equalities). But in such a case there is another valuation that is identical to $\zeta$ except that it modifies the bad node mapping (the $H$ equalities) and that yields 1 for the new diagram.
The final diagram is consistent with predicate ordering $ ``="  \prec  E_F \prec E_C$ and variable ordering where 
$f_i \prec g_j \prec h_k$ for all $i, j, k$.

Finally, we further change $B_1$ by adding the equality blocks of $G$ and $H$ to the construction, so that the modified $B_1$ is as shown in Figure~\ref{fig:arrowing-b1-inflated-byGH}. Using the same argument as in $B_2$ one can see that this does not change the semantics of $B_1$. Moreover, with this change $B_1$ can be obtained from $B_2$  by one edge removal (of the edge below the $F$ verifier in $B_2$) so that the reduction holds for this more restricted case.
\end{proof}

\bigskip
As mentioned above, FODD Value is defined similarly to FODD Satisfiability but requires more stringent conditions. The next result shows that this difference is important and FODD Value is one level higher in the hierarchy.

\begin{theorem}
\label{thm:maxfodd-value}
{ FODD Value} is $\Sigma_2^p$-complete.
\end{theorem}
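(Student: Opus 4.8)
The plan is to prove membership through the small-model property and hardness by reducing from quantified satisfiability with one alternation, reusing the gadgets of Theorem~\ref{thm:maxfodd-sat}.

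For membership in $\Sigma_2^p$ I would first invoke Lemma~\ref{lemma:foddsize}: if some $I$ achieves $\map_B(I)=V$ then a polynomially small one does. The equality constraint $\map_B(I)=V$ splits into the conjunction of $\map_B(I)\geq V$ and $\map_B(I)\leq V$. The first is existential over valuations (one valuation reaching a leaf of value $V$ witnesses it), while the second is universal (no valuation reaches a larger leaf). Hence an $\mathrm{NP}^{\mathrm{NP}}$ machine can existentially guess a small interpretation $I$ together with a valuation $\zeta_0$, verify in polynomial time that $\map_B(I,\zeta_0)=V$ (so $\map_B(I)\geq V$), and then make one oracle call to the NP problem ``does there exist $\zeta$ with $\map_B(I,\zeta)>V$?'' (essentially FODD Evaluation), accepting iff the oracle answers No. This places the problem in $\Sigma_2^p$, and the second alternation is forced precisely by the $\leq V$ half of the equality, which is what distinguishes Value from Satisfiability.

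For hardness I would reduce from the $\Sigma_2^p$-complete problem of deciding a quantified Boolean formula $\exists\vec{x}\,\forall\vec{u}\,\psi(\vec{x},\vec{u})$ with $\psi$ in 3CNF. The guiding idea is that the outer $\exists I$ of FODD Value realizes $\exists\vec{x}$, while $\forall\vec{u}$ is realized by the $\leq V$ half of the equality, i.e.\ the requirement that no valuation exceeds the target value. I would encode the assignment to $\vec{x}$ directly in the interpretation using one $0$-ary predicate $P_{x_i}$ per existential variable, so that each $I$ fixes a truth value for every $x_i$ and no constants are needed; the assignment to $\vec{u}$ is encoded by valuations exactly as in Theorem~\ref{thm:maxfodd-sat}, reusing the $P_T$ gadget to distinguish a ``true'' and a ``false'' object and the consistency gadget to force each FODD variable standing for a $u_i$ to be bound to one of these two objects, sending every ill-formed valuation to the leaf $0$. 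The diagram $B$ would use three leaf values $\{0,1,2\}$: after the $P_T$ and consistency gadgets I append a chain of clause blocks that, for a fixed Boolean valuation, deterministically checks every clause, routing a block for $C_c$ to the next block when some literal of $C_c$ is true and jumping to the leaf $2$ when all three literals are false (testing $u$-literals via $P_T(\cdot)$ and $x$-literals via the corresponding $P_{x_i}$); surviving all blocks leads to the leaf $1$. Thus a valid valuation yields $2$ exactly when its assignment violates $\psi$ and yields $1$ otherwise, so that $\map_B(I)=1$ iff $\forall\vec{u}\,\psi$ holds for the $\vec{x}$ encoded by $I$ (a valid valuation giving $1$ always exists once $I$ realizes the $P_T$ gadget). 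Setting $V=1$, I obtain $\exists I\colon \map_B(I)=1$ iff $\exists\vec{x}\,\forall\vec{u}\,\psi$; note how the escape to leaf $2$ together with the equality is exactly what upgrades the NP argument of Theorem~\ref{thm:maxfodd-sat} to $\Sigma_2^p$.

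The step I expect to be the main obstacle is restoring a legal node ordering, since $P_T$ atoms now occur both in the top gadget and deep inside the clause blocks, interleaved with the equalities of the consistency gadget and with the $P_{x_i}$ tests, and the diagram is no longer a single chain to one sink but branches to three leaves. I would handle this as in Theorem~\ref{thm:maxfodd-sat} and the ordering fix used for equivalence, fixing a predicate order with equality smallest, then $P_T$, then the $P_{x_i}$, and reordering the blocks at the cost of a linear amount of duplication while checking that the value is preserved for every valuation. Verifying that this reordering never merges or displaces the two distinct positive-valued leaves is the delicate part, and it is where most of the detailed work will go.
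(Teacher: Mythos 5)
Your membership argument is sound and is essentially the paper's: the paper likewise invokes Lemma~\ref{lemma:foddsize} to guess a small interpretation and then makes two FODD Evaluation oracle calls, one at $V$ and one at the least leaf value above $V$; your packaging (guess the valuation witnessing the $\geq V$ half directly, one oracle call for the $>V$ half) is equivalent. The hardness half, however, fails at its very first step: the problem you reduce from, deciding $\exists\vec{x}\,\forall\vec{u}\,\psi(\vec{x},\vec{u})$ with $\psi$ in 3CNF, is not $\Sigma_2^p$-complete --- it is in NP. For any fixed assignment to $\vec{x}$, $\forall\vec{u}\,\bigwedge_c C_c$ holds iff $\forall\vec{u}\,C_c$ holds for each clause separately (the universal quantifier distributes over the conjunction), and a three-literal clause is valid over $\vec{u}$ iff one of its $\vec{x}$-literals is already true under the fixed assignment or it contains a complementary pair $u_i,\overline{u_i}$ --- a polynomial-time check. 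So the inner $\forall$ evaporates, and your reduction, while correct as a reduction, would only re-establish the NP-hardness already given by Theorem~\ref{thm:maxfodd-sat}. The $\Sigma_2^p$-complete one-alternation QBF problem requires a 3-DNF matrix, which would force you to redesign your clause chain (term blocks that jump to the leaf $1$ when all three literals of a term hold, falling through to the leaf $2$ after the last term) --- conceptually doable, but not what you wrote.

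Even with that repair, the step you yourself flag as ``the main obstacle'' --- restoring a legal node ordering --- remains a genuine unresolved gap rather than a routine patch. In Theorem~\ref{thm:maxfodd-sat} the out-of-order $P_T$ nodes could be pushed to the bottom precisely because the diagram was a single chain into one positive sink; your diagram has two positive leaves ($1$ and $2$) reached at different depths, and the atoms $P_T$ and $P_{x_i}$ occur interleaved across blocks in the order dictated by the QBF's literals, so no atom ordering is consistent with the chain, and your claim of ``a linear amount of duplication'' is unsupported --- the paper explicitly warns that naive reordering can be exponentially costly, and in the GFODD equivalence proof it took the symmetric binary predicate $q(\cdot,\cdot)$ machinery to evade exactly this difficulty. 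The paper's actual proof sidesteps both problems at once: it reduces from FODD non-equivalence, already shown $\Sigma_2^p$-hard \emph{with ordering and the edge-removal structure handled} in Theorem~\ref{thm:maxfodd-equiv}. Given binary-leaved $B_1,B_2$ standardized apart, it builds $B$ with $\map_B(I)=\map_{B_1}(I)+\map_{B_2}(I)\in\{0,1,2\}$ using Apply (Theorem~\ref{thm:gfodd-combine} guarantees semantic correctness, and Apply preserves sortedness) and asks for value exactly $V=1$: value $1$ is achievable iff the two diagrams disagree on some interpretation. If you wish to keep the QBF route you must both switch to a DNF matrix and carry out a full ordering construction; as written, the proposal does not establish $\Sigma_2^p$-hardness.
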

\begin{proof}
The algorithm showing membership is as follows. 
We first observe
that by Lemma~\ref{lemma:foddsize} we can restrict our attention to small interpretations.
Given input $B$ and $V$ we guess an interpretation $I$ of the appropriate size. We then make two calls to an oracle for FODD Evaluation.
Let $V'$ be either the least leaf value greater than $V$ or one greater than the max leaf if $V$ is the maximum. We query the oracle for FODD Evaluation
 on $(B,I,V)$ and  $(B,I,V')$
and return Yes iff the oracle returns Yes on the first and No on the second. The algorithm returns Yes iff there is an interpretation $I$ with value $V$.  

For hardness we present a reduction from non-Equivalence of FODDs with binary leaves, 
which was shown to be $\Sigma_2^p$-hard in Theorem~\ref{thm:maxfodd-equiv}.
We are given $B_1=\max_{x_1} B_1(x_1)$ and $B_2=\max_{x_2} B_2(x_2)$ as input for FODD non-Equivalence where $B_1$ and $B_2$ are standardized apart so that
$x_1$, $x_2$ stand for disjoint sets of variables. We construct the diagram $B=\max_{x_1} \max_{x_2} B(x_1,x_2)$ where $B(x_1,x_2)=B_1(x_1)+B_2(x_2)$ can be calculated directly on the graph representation of $B_1$ and $B_2$ using the apply procedure of \cite{WangJoKh08}
(see Figure~\ref{fig:combineFODD}). Because $x_1$ and $x_2$ are disjoint, the diagram $B$ has the following behavior for any interpretation $I$: if $\map_{B_1}(I)=1$ and $\map_{B_2}(I)=1$ then $\map_B(I)=2$; otherwise if exactly one of them evaluates to 1 then $\map_B(I)=1$; and otherwise $\map_B(I)=0$. We produce $(B,V=1)$ as input for FODD Value.

Now, if $B_1$ and $B_2$ are not equivalent then there is an interpretation such that their maps are different, and without loss of generality we may assume $\map_{B_1}(I)=1$ and $\map_{B_2}(I)=0$. As argued above in this case  $\map_B(I)=1$ as needed.
For the other direction let $I$ be such that $\map_B(I)=1$. Then, again using the argument above, we have $\map_{B_1}(I)=1$ and $\map_{B_2}(I)=0$ or vice versa and the diagrams are not equivalent.
\end{proof}

\clearpage

\section{The Complexity of Reasoning with GFODD}
\label{sec:gfodd}

In this section we analyze the computational problems for GFODD.
We start with some observations on a notion of ``complements" for GFODDs.
Let $B$ be a GFODD associated with the ordered list of variables $(w_{i_1},\ldots,w_{i_m})$, and aggregation list $(A_1,\ldots, A_m)$ where each $A_i$ is $\min$ or $\max$. Let $B'= \mbox{complement}(B)$ 
(with respect to maximum value $M$) 
be the diagram corresponding to $B$ where we change leaf values and aggregation operators as follows:
Let $M$ be any value greater or equal to the max leaf value in $B$. Any leaf value $v$ is replaced with $M-v$. Each aggregation operator $A_i$ is replaced with $A'_i$ where
where if $A_i$ is $\min$ then $A'_i$ is $\max$ and vice versa.

\begin{theorem}
\label{thm:complement}
Let $B$ be a GFODD with $min$ and $max$ aggregation and maximum leaf value $\leq M$, and let $B'= \mbox{complement}(B)$.
For any interpretation $I$, $\map_B(I)=M-\map_{B'}(I)$.
\end{theorem}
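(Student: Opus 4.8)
The plan is to prove the identity $\map_B(I) = M - \map_{B'}(I)$ by induction on the aggregation list, peeling off one aggregation operator at a time from the innermost (last) variable to the outermost. The key observation driving the whole argument is the elementary duality between $\min$ and $\max$ under the map $v \mapsto M - v$, namely that for any finite multiset of values $\{a_1,\dots,a_n\}$ we have $M - \max\{a_1,\dots,a_n\} = \min\{M - a_1,\dots,M - a_n\}$ and symmetrically $M - \min\{a_i\} = \max\{M - a_i\}$. Since $B'$ is obtained from $B$ by replacing each leaf value $v$ with $M - v$ and simultaneously swapping every $\min$ with $\max$ and vice versa, these two changes should compose to reproduce exactly $M - \map_B(I)$.

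First I would fix the interpretation $I$ and set up the base case. For a fixed variable valuation $\zeta$, the path traversed in $B$ and in $B'$ is identical, because $B$ and $B'$ have the same graph structure and the same node labels; only the leaf values differ. Hence if $\zeta$ reaches a leaf valued $v$ in $B$, the same $\zeta$ reaches the corresponding leaf valued $M - v$ in $B'$, giving $\map_{B'}(I,\zeta) = M - \map_B(I,\zeta)$ for every $\zeta$. This is the ground-level version of the claim before any aggregation is applied.

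Next I would run the induction over the aggregation operators. Recall from the semantics that $\map_B(I)$ is computed by looping $j$ from $m$ down to $1$, at each stage aggregating the partial values over the variable $w_{i_j}$ using its operator $A_j$. I would define, for each suffix of the aggregation list, a partial value function and show by induction on the number of operators already processed that the partial value computed in $B'$ equals $M$ minus the partial value computed in $B$. The base case (zero operators processed) is exactly the per-$\zeta$ identity above. For the inductive step, suppose the partial values after processing the innermost operators satisfy the relation; when we apply the next operator $A_j$ in $B$ and its dual $A'_j$ in $B'$, the $\min$/$\max$ duality under $v \mapsto M - v$ lets us push the constant $M-(\cdot)$ through the aggregation and conclude the relation is preserved. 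After all $m$ operators are processed we obtain $\map_{B'}(I) = M - \map_B(I)$, which rearranges to the stated equation.

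The main obstacle I anticipate is purely bookkeeping rather than conceptual: I must be careful that aggregating over $w_{i_j}$ ranges over the same set of domain-element assignments in both diagrams and that the ``partial value'' after fixing the outer variables is a well-defined function of those outer assignments, so that the duality identity can be applied to precisely the right multiset at each stage. Since $B$ and $B'$ share variables, domain, and graph structure, and differ only in leaf values and operator types, the ranges of aggregation coincide and this bookkeeping goes through cleanly. No genuine difficulty beyond tracking the nested aggregation carefully is expected.
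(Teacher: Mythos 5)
Your proposal is correct and follows essentially the same route as the paper's own proof: both establish the per-valuation identity $\map_{B'}(I,\zeta)=M-\map_B(I,\zeta)$ (since $B$ and $B'$ share graph structure and differ only in leaf values), then exploit the duality $M-\max\{a_i\}=\min\{M-a_i\}$ to argue by induction backward from the innermost aggregation operator that the relation is preserved at each stage, with the prefix/suffix bookkeeping you describe matching the paper's formulation over prefixes $P=w_{i_1},\ldots,w_{i_p}$ and remainders $R$. No gaps; the argument goes through exactly as planned.
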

\begin{proof}
By the construction of $B'$, for any valuation $\zeta$,  we have that $\map_B(I,\zeta)=M-\map_{B'}(I,\zeta)$.
Considering the aggregation process,
note that $A_i\  \map_B(\ldots w_i) =$ $A_i$  $\ [M-\map_{B'}(\ldots w_i) ]$ $= M - A'_i\  \map_{B'}(\ldots w_i)$.
Now using this fact, we can argue by induction backward from the innermost (rightmost) aggregation that for any prefix of variables $P=w_{i_1},\ldots,w_{i_p}$, valuation $\zeta_p$ for these variables, and remaining variables $R=w_{i_{p+1}},\ldots,w_{i_m}$ , we have $A_R\ \map_B(I,(P=\zeta_p;R))=M-A'_R \ \map_{B'}(I,(P=\zeta_p;R))$. When the prefix is empty we get the statement of the theorem.
\end{proof}

Notice that for diagrams with binary leaves this yields $\map_B(I)=1-\map_{B'}(I)$, that is, negation. 
As an immediate application we get the following:

\begin{corollary}
\label{cor:equiv-symmetric}
The complexity of GFODD Equivalence for $\min$-$k$-alternating GFODD is the same as the complexity of 
GFODD Equivalence for 
$\max$-$k$-alternating GFODD.
\end{corollary}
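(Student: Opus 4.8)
The plan is to exhibit a polynomial-time many-one reduction in each direction between GFODD Equivalence restricted to $\min$-$k$-alternating diagrams and GFODD Equivalence restricted to $\max$-$k$-alternating diagrams, using the complement construction together with Theorem~\ref{thm:complement} as the engine. Since the classes of the polynomial hierarchy are closed under polynomial-time many-one reductions, inter-reducibility in both directions yields equality of complexity.

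First I would set up the reduction from $\min$-$k$-alternating to $\max$-$k$-alternating. Given an instance consisting of two $\min$-$k$-alternating GFODDs $B_1$ and $B_2$ together with the bound $N$, I would pick $M$ to be any value at least as large as the maximum leaf value occurring in either diagram, and form the complements $B_1' = \mbox{complement}(B_1)$ and $B_2' = \mbox{complement}(B_2)$, both taken with respect to this common $M$. The key structural point is that complementing swaps every $\min$ with $\max$ (and conversely) while leaving the block structure of the aggregation list intact, so it converts the leading block, and hence each block, into the opposite operator without changing the number $k$ of blocks; thus $B_1'$ and $B_2'$ are $\max$-$k$-alternating. The output of the reduction is the triple $(B_1', B_2', N)$, which is computable in polynomial time because complementing only flips leaf values and relabels aggregation operators.

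For correctness I would invoke Theorem~\ref{thm:complement}: for every interpretation $I$ we have $\map_{B_1}(I) = M - \map_{B_1'}(I)$ and $\map_{B_2}(I) = M - \map_{B_2'}(I)$. Subtracting the common constant $M$ is injective, so $\map_{B_1}(I) = \map_{B_2}(I)$ holds if and only if $\map_{B_1'}(I) = \map_{B_2'}(I)$, interpretation by interpretation, and in particular over all $I$ with at most $N$ objects. Hence $B_1$ and $B_2$ are equivalent over models of size at most $N$ exactly when $B_1'$ and $B_2'$ are, establishing that the map is a valid reduction. The reduction in the opposite direction is the very same construction, since the complement of a $\max$-$k$-alternating diagram is $\min$-$k$-alternating; indeed applying complement with the same $M$ twice recovers the original leaf values and aggregation list.

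I expect essentially no genuine obstacle here: the entire content is packaged in Theorem~\ref{thm:complement}, and the only points that require a sentence of care are (a) choosing a single $M$ that dominates the leaves of both diagrams so that the additive constant cancels uniformly, and (b) verifying that complementing preserves the alternation depth $k$ while merely flipping which operator leads each block. With these noted, the two equivalence problems are polynomial-time inter-reducible and therefore of identical complexity.
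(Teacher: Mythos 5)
Your proposal is correct and follows essentially the same route as the paper: the paper's proof is a one-line application of Theorem~\ref{thm:complement}, complementing both diagrams with $M$ taken to be the maximum leaf value across the two, exactly as you do. Your write-up merely makes explicit the details the paper leaves implicit (injectivity of $v\mapsto M-v$, preservation of the alternation depth $k$, polynomial-time computability, and the symmetry of the reduction), all of which are sound.
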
 
\begin{proof}
By Theorem~\ref{thm:complement},
two $\min$ diagrams $B_1, B_2$ are equivalent if and only if their complements $B'_1, B'_2$ are equivalent where we can use the maximum among the leaf values of the two diagrams as $M$.
\end{proof}

\begin{corollary}
\label{cor:minfodd}
The equivalence problem for $\min$-GFODD is $\Pi_2^p$-complete. 
\end{corollary}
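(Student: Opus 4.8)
The plan is to derive this corollary almost entirely from results already in hand, namely the complement construction of Theorem~\ref{thm:complement} (packaged as Corollary~\ref{cor:equiv-symmetric}) together with the $\Pi_2^p$-completeness of FODD Equivalence in Theorem~\ref{thm:maxfodd-equiv}. First I would make the two key identifications explicit: a $\min$-GFODD, in which every variable is aggregated with $\min$, is exactly a $\min$-$1$-alternating GFODD, and a FODD, in which every variable is aggregated with $\max$, is exactly a $\max$-$1$-alternating GFODD. Under these identifications, the equivalence problem for $\min$-GFODDs is precisely GFODD Equivalence restricted to $\min$-$1$-alternating diagrams, while FODD Equivalence is the $\max$-$1$-alternating case. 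Instantiating Corollary~\ref{cor:equiv-symmetric} at $k=1$ then says these two problems have the same complexity, and Theorem~\ref{thm:maxfodd-equiv} pins that complexity at $\Pi_2^p$-complete; chaining the two facts yields the corollary.

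To make the reduction fully explicit in both directions I would unfold the complement map of Theorem~\ref{thm:complement}. For membership, given $\min$-GFODDs $B_1,B_2$ I would form $B_1'=\mbox{complement}(B_1)$ and $B_2'=\mbox{complement}(B_2)$, which are FODDs, taking $M$ to be the maximum leaf value occurring in the two diagrams. By Theorem~\ref{thm:complement} we have $\map_{B_i}(I)=M-\map_{B_i'}(I)$ for every $I$, so $B_1$ and $B_2$ agree on an interpretation exactly when $B_1'$ and $B_2'$ do; hence $\min$-GFODD equivalence reduces in polynomial time to FODD Equivalence, placing it in $\Pi_2^p$. For hardness, given an instance $(B_1',B_2')$ of FODD Equivalence I would output its complements, which are $\min$-GFODDs, and appeal to the same identity to transfer the $\Pi_2^p$-hardness of Theorem~\ref{thm:maxfodd-equiv}.

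The one point requiring care, and the closest thing to an obstacle, is the treatment of the model-size bound. FODD Equivalence in Theorem~\ref{thm:maxfodd-equiv} is quantified over all interpretations, which is legitimate only because Lemma~\ref{lemma:foddsize} supplies a small witnessing interpretation; I would therefore check that the complement map preserves witnesses, so that a small interpretation separating $B_1'$ from $B_2'$ is equally a small interpretation separating $B_1$ from $B_2$, of size bounded by the number of variables and constants. Consequently the $\min$-GFODD case inherits the same small-model property, the unary bound $N$ in the GFODD Equivalence definition can be taken at least this large without changing the answer, and the two-line argument combining Corollary~\ref{cor:equiv-symmetric} and Theorem~\ref{thm:maxfodd-equiv} indeed establishes that $\min$-GFODD equivalence is $\Pi_2^p$-complete.
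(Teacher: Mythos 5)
Your proof is correct and follows essentially the same route as the paper, which leaves the corollary unproved precisely because it is the instantiation of Corollary~\ref{cor:equiv-symmetric} at $k=1$ (identifying $\min$-GFODDs with $\min$-$1$-alternating diagrams and FODDs with $\max$-$1$-alternating ones) combined with Theorem~\ref{thm:maxfodd-equiv}. Your additional check that the complement map of Theorem~\ref{thm:complement} preserves witnessing interpretations verbatim---so the small-model property of Lemma~\ref{lemma:foddsize} transfers and the unary bound $N$ is immaterial once it exceeds the number of variables and constants---is a sound elaboration of a point the paper leaves implicit.
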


We can now turn to analysis of the computational problems.
Evaluation is similar to the FODD case but the hardness proof is more involved due to the interaction between quantifier order and node ordering in the diagram. 

\begin{theorem}
\label{thm:gfodd-eval}
GFODD Evaluation for $\max$-$k$-alternating GFODDs is $\Sigma_k^p$-complete.
GFODD Evaluation for $\min$-$k$-alternating GFODDs is $\Pi_k^p$-complete.
\end{theorem}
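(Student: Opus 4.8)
The plan is to prove the $\max$ statement in full and obtain the $\min$ statement from it by complementation. By Theorem~\ref{thm:complement} the complement of a $\max$-$k$-alternating GFODD is $\min$-$k$-alternating, and for binary leaves $\map_B(I)=1-\map_{B'}(I)$; since $B\mapsto B'$ is computable in polynomial time and $\Sigma_k^p,\Pi_k^p$ are complementary classes, $\Sigma_k^p$-completeness of $\max$-$k$ evaluation transfers directly to $\Pi_k^p$-completeness of $\min$-$k$ evaluation. For membership I would group the aggregation list of a $\max$-$k$-alternating GFODD into its $k$ strictly alternating blocks and write $\zeta_j$ for the restriction of a valuation to the variables of block $j$. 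Applying the identity $\max_a\min_b f\geq V \iff \exists a\,\forall b\,(f\geq V)$ one block at a time yields
\[
\map_B(I)\geq V \iff \exists\zeta_1\,\forall\zeta_2\,\exists\zeta_3\cdots Q_k\zeta_k\ \big(\map_B(I,\zeta_1\cdots\zeta_k)\geq V\big).
\]
Because $I$ is supplied explicitly with a polynomial-size domain, each $\zeta_j$ is polynomially bounded, and $\map_B(I,\zeta)$ is computed in polynomial time by tracing a single root-to-leaf path; hence the predicate lies in $\Sigma_k^p$.

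For hardness I would reduce from $\Sigma_k$-QBF, i.e.\ deciding truth of $\exists X_1\,\forall X_2\cdots Q_k X_k\,\phi$ with $\phi$ in 3CNF. I fix the interpretation $I$ to the two-object domain $\{a,b\}$ with $P_T(a)$ true and $P_T(b)$ false, so that every valuation sends each variable to $a$ (``true'') or $b$ (``false''). Following the reduction of Theorem~\ref{thm:maxfodd-sat}, I introduce a fresh diagram variable per literal occurrence together with a shadow variable per Boolean variable, and I place all variables associated with a Boolean variable of block $j$ into aggregation block $j$, using $\max$ for existential blocks and $\min$ for universal blocks; this produces exactly $k$ strictly alternating aggregation blocks. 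The diagram chains, in quantifier-block order, a consistency gadget for each block (equating each Boolean variable's occurrences to its shadow) followed by the clause-evaluation gadget that outputs $1$ iff $\phi$ is satisfied. The essential new ingredient relative to the FODD case is that inconsistency must be routed \emph{asymmetrically}: within an existential ($\max$) block a failed equality is sent to the $0$ leaf, whereas within a universal ($\min$) block it is sent to the $1$ leaf.

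Correctness then follows from a forward minimax induction over the blocks. With a consistent prefix fixed, an existential block realizes $\max\{\text{consistent subgame value},\,0\}$, which equals the maximum over consistent assignments, and a universal block realizes $\min\{\text{consistent subgame value},\,1\}$, which equals the minimum over consistent assignments; this is valid because an inconsistency at the current block short-circuits the path to a leaf and forces $\map_B(I,\zeta)$ to the constant $0$ or $1$ irrespective of all deeper variables. Unwinding to the empty prefix, where all occurrences of each Boolean variable agree and the clause gadget returns $\phi$, gives $\map_B(I)=1$ iff the QBF is true, and since $\Sigma_k$-QBF is $\Sigma_k^p$-complete this establishes hardness.

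I expect the main obstacle to be reconciling this asymmetric routing with the node-ordering requirement. As in Theorem~\ref{thm:maxfodd-sat} the equality atoms of the consistency gadgets and the $P_T$-atoms of the clause gadget interleave and must be reordered into a sorted diagram; here I would exploit the fact that the variable-name order used for sorting is independent of the aggregation order, reorder so that equalities precede $P_T$-atoms with occurrence variables numbered to make the clause-testing order lexicographic, and then verify that this reordering preserves the distinct $0$/$1$ failure leaves of the two kinds of consistency gadgets so that $\map_B(I)$ is unchanged on every interpretation.
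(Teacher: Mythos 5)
Your proposal is correct, and its hardness core coincides with the paper's proof: the same fixed two-object interpretation with $P_T$, the same shadow-plus-occurrence variables grouped into aggregation blocks mirroring the quantifier blocks, consistency gadgets chained in quantifier order with exactly the asymmetric default routing ($0$ exits for $\max$ blocks, $1$ exits for $\min$ blocks), then the clause gadget, and a block-by-block induction in which an inconsistency short-circuits to a value that is neutral for that block's aggregation --- the paper runs this induction backward from the innermost block while you run it forward, but the two are interchangeable since, as you note, $\max\{x,0\}=x$ and $\min\{x,1\}=x$ on $\{0,1\}$-valued subgames. You deviate in two places, both legitimately. For membership, the paper argues by induction on $k$, instantiating the outermost block and querying a $\Pi_{k-1}^p$ oracle on the instantiated diagram; this forces it to remark that the claim must hold for unsorted diagrams, since instantiation destroys sorting. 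Your flat characterization $\map_B(I)\geq V \iff \exists\zeta_1\,\forall\zeta_2\cdots Q_k\zeta_k\,(\map_B(I,\zeta)\geq V)$, combined with the standard alternating-quantifier characterization of $\Sigma_k^p$, sidesteps that issue entirely and is arguably cleaner. For the $\min$ case, the paper's single reduction handles $\forall$-first prefixes natively (its running example is $\forall\exists\forall\exists$) and states membership symmetrically, whereas you transfer everything via Theorem~\ref{thm:complement}; this is sound, since complementation changes only leaf values and aggregation operators (hence is polynomial and preserves sorting and alternation depth) and negating a $\Pi_k$-QBF instance feeds your $\max$ reduction --- though for membership with non-binary leaves you should convert the threshold explicitly ($\map_B(I)\geq V$ iff $\map_{B'}(I)$ fails to reach the least leaf value of $B'$ exceeding $M-V$), a detail you elide. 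Finally, your anticipated ordering obstacle is milder than you fear: unlike the satisfiability reduction of Theorem~\ref{thm:maxfodd-sat}, where $P_T$ nodes had to be pushed below equalities, here every equality atom precedes every $P_T$ atom by construction, so the predicate order $``="\prec P_T$ together with variable names chosen in aggregation-block order already yields a sorted diagram, exactly as the paper asserts.
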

\begin{proof}
We prove membership by induction on $k$. Since the inductive step includes diagrams that  do not satisfy the sorting order we show that the claim holds in this more general case.
Consider the input $(B,I,V)$.
For the base case, $k=1$, 
we guess a valuation $\zeta$, calculate $v=\map_B(I,\zeta)$, and return Yes iff $v\geq V$.
In the $\max$ case, if the true value is at least $V$ then we say Yes for some $\zeta$, and if the true value is less than $V$ then $\map_B(I,\zeta)<v$ for all $\zeta$ and therefore we always say No. Thus the problem is in NP.
In the $\min$ case, if the true value is at least $V$ then all $\zeta$ yield Yes, and if the true value is less than $V$ then some $\zeta$ yields No. Thus the problem is in co-NP.

For the inductive step assume that the claim holds for $k-1$ and consider the input $(B,I,V)$ with an interpretation $I$, value bound $V$ and a $max$-$k$-alternating
diagram $B=\max_{\ww_1}\min_{\ww_2}\ldots Q^A_{\ww_k}$ $B(\ww_1,\ldots,\ww_k)$
where in order to simplify the notation each $\ww_i$ may be a single variable or a set of variables
and we use the boldface notation to denote this fact.

Now for each tuple $i$ of domain objects in $I$  (which is appropriate for the number of variables in $\ww_1$) let diagram $B'$ be $B'=\min_{\ww_2}\ldots Q^A_{\ww_k} B(\ww_1=i,\ldots,\ww_k)$. Clearly $B'$ is appropriate for evaluation on $I$ and by the inductive hypothesis we can appeal to a $\Pi_{k-1}^p$ oracle 
to solve GFODD Evaluation on $(B',I,V)$. Our algorithm guesses a value $i$, calculates $B'$, appeals to the oracle, and returns the same answer.
Now, if the true value is $< V$ then by definition any call to the oracle would yield No and we correctly answer No.
If the true value is $\geq V$ then for some $i$ the oracle would return Yes. Therefore we nondeterministically return Yes
and our algorithm is in NP$^{\Sigma_{k-1}^p}$.
The argument for the other aggregation prefix is symmetric and argued in the same manner yielding an algorithm in co-NP$^{\Sigma_{k-1}^p}$.

To show hardness we give a reduction from $QBF_k$. Given a quantified 3CNF Boolean formula we transform this into a GFODD $B$ and interpretation $I$ so that the following claim holds:

\bigskip
\noindent
{\bf Claim 1}:
$B$ evaluates to 1 in $I$ if and only if the quantified Boolean formula is satisfied. 
\smallskip

This claim establishes the theorem.
The reduction uses a similar structure to the one used for FODD satisfiability with two main differences. First because here we consider evaluation and we can control $I$ we do not need to test for an embedding of a Boolean predicate in $I$, that is, the first portion in that construction is not needed. On the other hand the construction and proof are more involved because of the alternation of quantifiers. 

The interpretation $I$ has two objects, a and b, where $P_T(a)=$true, and $P_T(b)=$false. Namely, $I = \{ [a, b], P_T(a)=$ true, and $P_T(b)=$ false$\}$.

Let the QBF formula be $Q_1x_1 Q_2x_2 \ldots Q_m x_m f$ where $Q$ is a quantifier $\forall$ or $\exists$ and the quantifiers come in $k$ alternating blocks.
As above, we start the construction by creating a set of ``shadow variables" corresponding to each QBF variable $x_i$.
The corresponding GFODD variables include $w_i$ and the set of $v_{(a,b)}$ that refer to $x_i$ or $\overline{x_i}$ in the QBF. We define $\ww_i$ to be the set of variables in the block corresponding to $x_i$ and associate these variables with an aggregation operator $Q^A_i$ where if
$Q_i$ is a $\exists$ then $Q^A_i$ is $\max$ and if 
$Q_i$ is a $\forall$ then $Q^A_i$ is $\min$. 
Using these variables, we build GFODD fragments we call variable consistency blocks. For each $x_i$, this gadget ensures that if two literals in the QBF refer to the same variable then the corresponding variables in the GFODD will have the same value. 
If this holds then a valuation goes through the block and continues to the next block. Otherwise, it exits to a default value, where for $\max$ blocks the default value is 0, and for $\min$ blocks the default value is 1.

\begin{figure}[t]
\centering
\includegraphics[scale = 0.50]{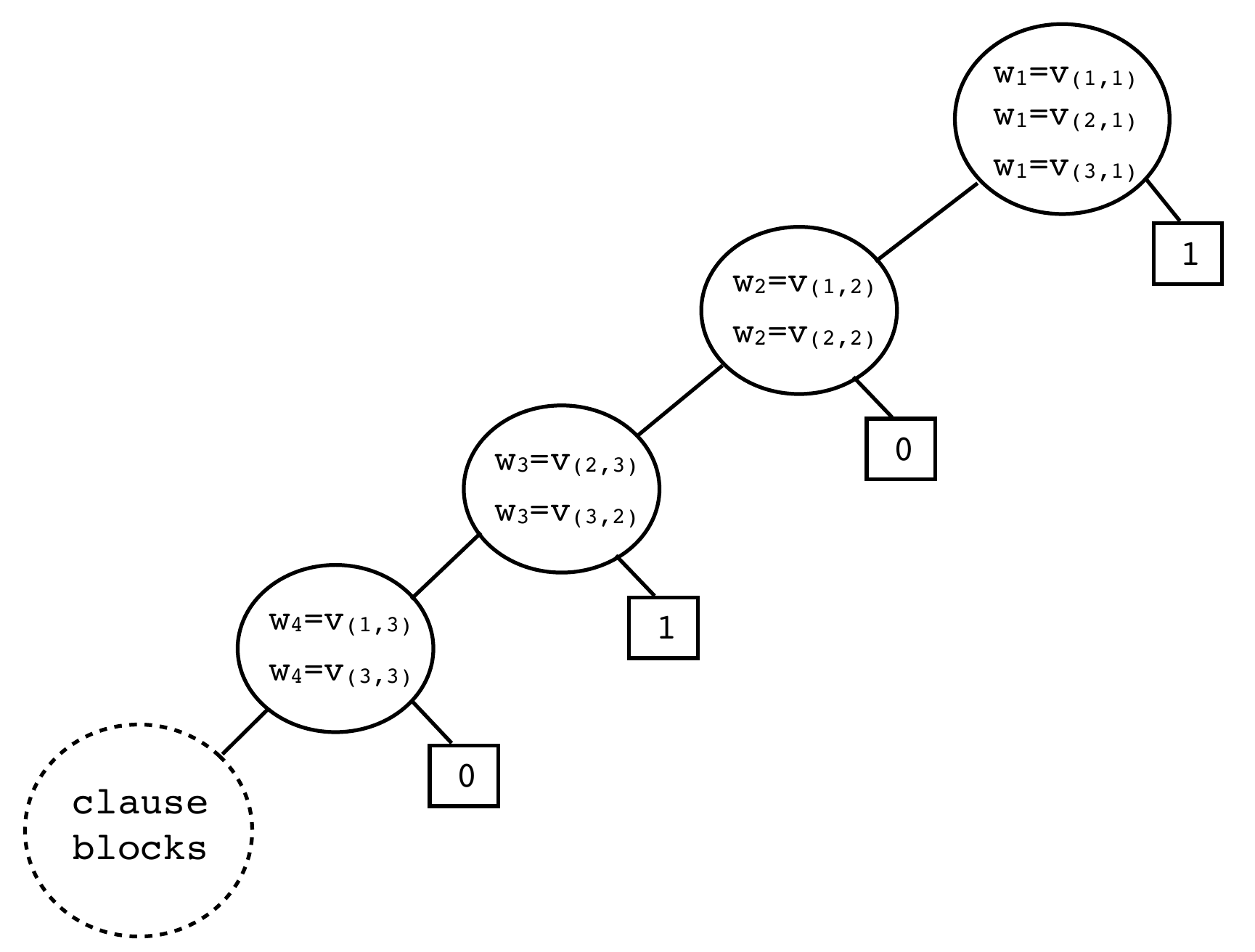}
\caption{Example of variable consistency blocks for reduction from QBF to GFODD Evaluation for the formula $\forall x_1 \exists x_2 \forall x_3 \exists x_4 (x_1 \vee \overline{x_2}  \vee x_4) \wedge (\overline{x_1} \vee x_2 \vee x_3) \wedge (x_1 \vee x_3 \vee \overline{x_4})$. }
\label{fig:gfodd-eval-consistency}
\end{figure}

Consider the expression 
$$\forall x_1 \exists x_2 \forall x_3 \exists x_4 (x_1 \vee \overline{x_2}  \vee x_4) \wedge (\overline{x_1} \vee x_2 \vee x_3) \wedge (x_1 \vee x_3 \vee \overline{x_4}),$$ 
which has the same clauses as in the previous proof but where we have changed the quantification.
Figure~\ref{fig:gfodd-eval-consistency} shows the variable consistency blocks for this example. 
Since,  $v_{(1,1)}$, $v_{(2,1)}$, and $v_{(3,1)}$  refer to $x_1$ we need to ensure that when they are evaluated they are evaluated consistently and
this is done by the first block. Because $x_1$ is a $\forall$ variable the default output value is 1. The consistency blocks are chained {\em in the same order as in the quantification of the QBF}. 
Once every consistency block has been checked, we continue to the clause blocks whose construction is exactly the same as in the previous proof (see Figure~\ref{fig:maxfodd-sat-clauses}). This yields the diagram $B$ where
we set the aggregation function to be $Q^A_1 \ww_1, Q^A_2 \ww_2, \ldots, Q^A_m \ww_m$.
Note that if the QBF has $k$ alternating blocks of quantifiers then $B$ has aggregation depth $k$.
The output of the reduction is the pair $(B,I)$.
The diagram is ordered with $``="\prec P_T$ and variables ordered lexicographically. 

We next show that Claim 1 holds.
We start by showing a correspondence between assignments to the Boolean formula $f$ and object assignments from $B$ to $I$. 
Let $v$ be a Boolean assignment. If $v$ assigns $x_i$ to 1 then $\zeta(v)$ maps the entire $\ww_i$ block to $a$. Otherwise $\zeta(v)$ maps the block to $b$. It is then easy to see that for all $v$, $\zeta(v)$ satisfies the consistency blocks and $f(v)=1$ if and only if $\map_B(I,\zeta(v))=1$.
This, however, does not complete the proof because $\map_B(I)$ must also consider valuations $\zeta$ that do not arise as maps of assignments $v$.

We divide the set of valuations to the GFODD into two groups. The first group of legal valuations, called {\em Group 1} below, is the set of valuations that is consistent with some $v$. 

The second group, {\em Group 2}, includes valuations that do not arise as $\zeta(v)$ and therefore they violate at least one of the consistency blocks. 
Let  $\zeta$ be such a valuation
and let $Q^A_j$ be the first block from the left whose constraint is violated. By the construction of $B$, in particular the order of equality blocks along paths in the GFODD, we have that the evaluation of the diagram on $\zeta$ ``exits" to a default value on the first violation. Therefore, if $Q_j$ is a $\forall$ then $\map_{B}(I,\zeta)=1$ and if $Q_j$ is a $\exists$ then $\map_{B}(I,\zeta)=0$.

We can now show the correspondence in truth values. Consider any partition of the blocks $1,\ldots, m$ into a prefix $1,\ldots, j$ and remainder $(j+1),\ldots, m$, and any Boolean assignment $v$ to the prefix blocks. We claim that for all such partitions

\begin{eqnarray*}
\lefteqn{ Q_{j+1} x_{j+1}, \ldots, Q_m x_m, \ \ f\left((x_1, \ldots,x_j)= v, (x_{j+1},\ldots,x_m)\right) = } \\
& & 
Q^A_{j+1} \ww_{j+1}, \ldots, Q^A_m \ww_m, \ \  \map_{B}(I,[(\ww_1,\ldots,\ww_j)=\zeta(v),(\ww_{j+1},\ldots,\ww_m)]).
\end{eqnarray*}

Note that when $j=0$, that is, the prefix is empty, the claim implies that $\map_{B}(I)$ is equal to 
$Q_1 x_1, \ldots, Q_m x_m, f$, completing the proof.
We prove the claim by induction, backwards from $m$ to 0. 
For the base case, $j=m$, and the second part is empty. The claim then follows because the prefix includes all variables and there is a 1-1 correspondence in truth values for substitutions in group 1. 

For the inductive step, the valuation $v$ covers the first $j-1$ blocks. 
Note that, by the inductive assumption, for any group 1 substitution $v_j$ for $x_j$ and corresponding, $\zeta(v_j)$ for $\ww_j$,
\begin{eqnarray*}
\lefteqn{ Q_{j+1} x_{j+1}, \ldots, Q_m x_m, \ \ f\left( (x_1,\ldots,x_{j-1})=v, (x_{j}=v_j),(x_{j+1},\ldots,x_m) \right)  = }\\
& & 
Q^A_{j+1} \ww_{j+1}, \ldots, Q^A_m \ww_m, \ \ \map_{B}(I,[(\ww_1,\ldots, \ww_{j-1})=\zeta(v),(\ww_{j}=\zeta(v_j)),(\ww_{j+1},\ldots,\ww_m)]).
\end{eqnarray*}

On the other hand, for any group 2 substitution $\zeta_j$ for $\ww_j$ and any values for $(\ww_{j+1},\ldots,\ww_m)$ we have that
the leftmost block whose constraint is violated for the corresponding combined $\zeta$ is block $j$ and therefore
$\map_{B}(I,[(\ww_2,\ldots,\ww_{j-1})=\zeta(v),(\ww_{j}=\zeta_j),(\ww_{j+1},\ldots,\ww_m)])$ gets the default value for that block. 
Therefore, 
the aggregation over the $j$th block is determined by group 1 valuations, which are in turn identical to the QBF value and 

\begin{eqnarray*}
\lefteqn{ Q_{j} x_{j}, \ldots, Q_m x_m, \ \ f(x_1,\ldots,x_{j-1})=v, (x_{j},\ldots,x_m)) = } \\
& &
Q^A_{j} \ww_{j}, \ldots, Q_m \ww_m,  \ \ \map_{B}(I,[(\ww_2,\ldots,\ww_{j-1})=\zeta(v),(\ww_{j},\ldots,\ww_m)])
\end{eqnarray*}
as required.
\end{proof}

\begin{figure}[t]
\centering
\includegraphics[scale = 0.50]{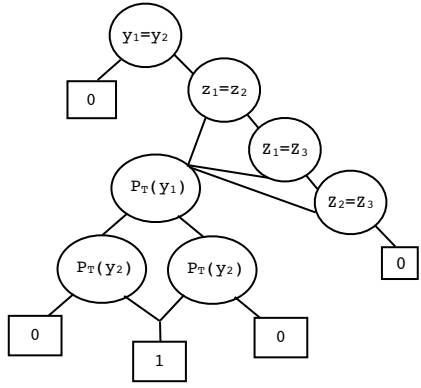}
\caption{The $B_1$ diagram for GFODD Satisfiability reduction.
The aggregation function is $\max_{y_1} \max_{y_2} \min_{z_1} \min_{z_2} \min_{z_3}$. 
}
\label{fig:gfodd-sat-B1}
\end{figure}

\bigskip
It turns out that the complexity of satisfiability is different for min and max diagrams, and their analysis requires different proofs. We therefore start with $\max$-$k$-alternating GFODDs. The case of $\min$-$k$-alternating  GFODDs is analyzed after the analysis of equivalence because it is using similar techniques.

\begin{theorem}
\label{thm:godd-sat}
GFODD  Satisfiability for $\max$-$k$-alternating GFODDs  (where $k\geq 2$) is $\Sigma_k^p$-complete.
\end{theorem}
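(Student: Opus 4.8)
The plan is to merge the outer existential over interpretations with the first ($\max$) aggregation block, so that only $k-1$ alternations remain to be charged to an oracle. Write the input as $B=\max_{\ww_1}\min_{\ww_2}\ldots Q^A_{\ww_k}\,B(\ww_1,\ldots,\ww_k)$ with binary leaves. Since $N$ is given in unary and the arity is fixed, any interpretation $I$ with at most $N$ objects has size polynomial in the input, and $\map_B(I)=1$ iff there is a valuation $\zeta_1$ of $\ww_1$ with $\min_{\ww_2}\ldots Q^A_{\ww_k}\map_B(I,[\ww_1=\zeta_1,\ldots])=1$. I would therefore guess, in a single nondeterministic step, both $I$ and $\zeta_1$, substitute the chosen objects for $\ww_1$ (this may destroy sortedness, but Theorem~\ref{thm:gfodd-eval} was proved for unsorted diagrams too), and make one oracle query asking whether the resulting $\min$-$(k-1)$-alternating diagram evaluates to $\geq 1$ on $I$. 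By Theorem~\ref{thm:gfodd-eval} that evaluation problem is in $\Pi_{k-1}^p$, and since $\Pi_{k-1}^p$ and $\Sigma_{k-1}^p$ are interchangeable as oracles, the procedure lies in $\C{NP}^{\Sigma_{k-1}^p}=\Sigma_k^p$. The hypothesis $k\geq 2$ guarantees $k-1\geq 1$ so that Theorem~\ref{thm:gfodd-eval} applies; the degenerate case $k=1$ is exactly FODD Satisfiability, already $\C{NP}$-complete by Theorem~\ref{thm:maxfodd-sat}.

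\paragraph{Hardness: the reduction.}
For $\Sigma_k^p$-hardness I would reduce from the $\Sigma_k$ form of $QBF_k$, namely $\Phi=\exists\mathbf{x}_1\forall\mathbf{x}_2\exists\mathbf{x}_3\cdots Q_k\mathbf{x}_k\,\phi$ with $\phi$ a 3CNF formula, reusing the gadgets of Theorems~\ref{thm:maxfodd-sat} and~\ref{thm:gfodd-eval}. Because the interpretation is now existentially chosen rather than fixed, I would (as in FODD Satisfiability) prepend the $P_T$ verification gadget of Figure~\ref{fig:fodd-sat-pt-verify}, selecting two objects $y_1,y_2$ and testing $P_T(y_1)\neq P_T(y_2)$ so that they serve as the logical values true and false. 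I would then introduce the shadow variables and the variable-consistency blocks, forcing every shadow of $x_i$ to be bound to $y_1$ or to $y_2$, followed by the clause blocks for $\phi$ (Figure~\ref{fig:maxfodd-sat-clauses}). The aggregation prefix is set to mirror the quantifier prefix, $\exists\mapsto\max$ and $\forall\mapsto\min$, with the $P_T$ gadget and the $\mathbf{x}_1$ block folded into the single leading $\max$ block, yielding a $\max$-$k$-alternating diagram. The bound $N$ can be taken to be a small constant (e.g.\ $N=2$), which already admits the canonical witness and stays within the ``$N$ linear in the diagram size'' regime.

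\paragraph{Correctness, main obstacle, and ordering.}
The easy direction is that if $\Phi$ is true then the canonical interpretation $I$ with objects $a,b$, $P_T(a)$ true and $P_T(b)$ false, gives $\map_B(I)=1$: the group~1 / group~2 valuation analysis of Theorem~\ref{thm:gfodd-eval} carries over, using default value $0$ on a violated $\max$ block and $1$ on a violated $\min$ block so that illegal valuations never influence the relevant aggregation. The step I expect to be the main obstacle is the converse, showing that \emph{no} interpretation of size $\leq N$ satisfies $B$ when $\Phi$ is false. One must exclude spurious witnesses: an $I$ whose $P_T$ is constant fails the leading gadget for every choice of $y_1,y_2$ and hence yields $\map_B(I)=0$ via the default $0$ of the leading $\max$ block; for any well-formed $I$ the consistency blocks confine the universally quantified ($\min$) shadows to the two reference objects, so additional objects contribute only default-$1$ leaves that do not lower a $\min$, and the value of $B$ collapses to precisely the truth value of $\Phi$. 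The delicate point is to verify that the existential choice of $I$ supplies \emph{only} the Boolean encoding (canonical up to renaming $a,b$) and introduces no extra effective alternation, so that hardness meets the $\Sigma_k^p$ upper bound rather than climbing to $\Sigma_{k+1}^p$. Finally, exactly as in Theorems~\ref{thm:maxfodd-sat} and~\ref{thm:gfodd-eval}, I would argue that $B$ can be made to respect a fixed node ordering, with $``="\prec P_T$ and lexicographic variable order, at only linear cost by pushing the $P_T$ tests toward the leaves and interleaving the equality blocks, which leaves $\map_B(I)$ unchanged for every $I$.
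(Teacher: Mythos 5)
Your membership argument coincides with the paper's: nondeterministically guess $I$ together with a tuple for the leading $\max$ block $\ww_1$, substitute, and make one query to a $\Pi_{k-1}^p$ oracle for evaluation of the residual $\min$-$(k-1)$-alternating diagram via Theorem~\ref{thm:gfodd-eval} (which, as you note, was proved without the sortedness assumption), placing the problem in $\C{NP}^{\Sigma_{k-1}^p}=\Sigma_k^p$.

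Your hardness reduction is correct but takes a genuinely different route. The paper is modular: it reuses the evaluation-proof diagram verbatim as $B_2$, builds a separate diagram $B_1$ (Figure~\ref{fig:gfodd-sat-B1}) whose aggregation $\max_{y_1}\max_{y_2}\min_{z_1}\min_{z_2}\min_{z_3}$ forces any satisfying interpretation to be isomorphic to the two-object $I^*$ --- the three $\min$-variables $z_1,z_2,z_3$ reject any interpretation with a third object --- and then forms $B=apply(B_1,B_2,\wedge)$, invoking Theorem~\ref{thm:gfodd-combine} (safety of $\wedge$ with respect to $\min/\max$) to interleave the two aggregation lists within $k$ alternations; this interleaving is exactly where the paper uses $k\geq 2$. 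You instead build one chained diagram with no $apply$ step, replace the semantic two-object enforcement by fixing $N=2$ in the output instance (legitimate, since the reduction chooses $N$, and within the paper's stated regime of $N$ linear in the diagram size), and confine the shadow blocks to $\{y_1,y_2\}$ by augmenting the consistency blocks with equality tests against $y_1,y_2$ --- a device the paper itself deploys only later, in the extended construction for Theorem~\ref{thm:gfodd-equiv}. What each buys: the paper's version stays sorted for free (all equalities precede all $P_T$ nodes in both $B_1$ and $B_2$, and $apply$ preserves order) and rejects oversized interpretations inside the diagram, so it is correct for every $N\geq 2$; yours is more elementary and self-contained. One caution on your final ordering step: relocating the $P_T(y_1),P_T(y_2)$ tests below the consistency blocks is \emph{not} value-preserving per valuation, unlike the FODD satisfiability case you cite --- a valuation that fails the gadget but first violates a $\min$ consistency block now exits with default $1$ rather than $0$ --- so the claim must be that the aggregated $\map_B(I)$ is unchanged, proved by the same backward induction over blocks: for a $y$-prefix failing the gadget, every block-consistent extension evaluates to $0$, and since violated $\min$ blocks default to $1$ and violated $\max$ blocks to $0$, each aggregation collapses to $0$. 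That argument does go through, so the deficiency is one of detail rather than approach.
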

\begin{proof}
We first show membership.
Let $B$ be a GFODD with aggregation  $\max \ww_1, \min \ww_2, \ldots, Q^A_k \ww_k$. Our algorithm nondeterministically chooses an interpretation $I$ and a tuple of values for $\ww_1$, from the domain of objects for $I$. Let $i$ refer to this tuple of objects. We create a new GFODD,  $B'=\min \ww_2 \ldots Q^A_k B(\ww_1=i,\ldots,w_k)$ and appeal to a $\Pi_{k-1}^p$ oracle to solve GFODD evaluation on $(B', I)$. If the oracle query returns 1 then we accept and otherwise we reject. 
The result is clearly correct using an algorithm in $NP^{\Sigma_{k-1}^p}$. 

The hardness argument is similar to the proof for GFODD evaluation.
The main extension is that in the current proof we verify that any satisfying $I$ embeds the interpretation from the previous proof. 
The reduction gets a $QBF_k$ formula, $Q_1x_1Q_2x_2 \ldots Q_mx_m f$, with $Q_i$ either a $\forall$ or $\exists$ quantifier. 
We first construct two diagrams $B_1$ and $B_2$, where $B_2$, the QBF validation diagram, is exactly as in the proof of Theorem~\ref{thm:gfodd-eval}, that is, it includes consistency blocks followed by clause blocks. 
The diagram $B_1$ has two portions. The first verifies that $I$ has at least two objects and the truth values of $P_T()$ on these objects are different. 
The second portion verifies that $I$ has at most two objects. 
This is implemented using $\min$ variables such that if we identify three distinct objects we set the value to 0. 
The two portions are put together so as to respect predicate order, and the final diagram $B_1$ is shown in Figure~\ref{fig:gfodd-sat-B1}.
The aggregation function for $B_1$ is $\max_{y_1} \max_{y_2} \min_{z_1} \min_{z_2} \min_{z_3}$. 

Let $I^* = \{ [a, b], P_T(a)=$true, and $P_T(b)=$false$\}$ be the intended interpretation. 
We have the following two claims:
\\ 
(C1) for all $I$, $\map_{B_1}(I)=1$ if and only if $I$ is isomorphic to $I^*$.
\\
(C2)
if $I$ is isomorphic to $I^*$ then $\map_{B_2}(I)=1$ if and only if $(Q_1x_1Q_2x_2 \ldots Q_mx_m f) = 1$.

C2 is exactly the same as Claim~1 in the proof of Theorem~\ref{thm:gfodd-eval}. 
For C1, given $I$ which is isomorphic to $I^*$, the valuation of $y_1,y_2$ to $a,b$ and any valuation to the $z$'s yields a map value of 1. 
Therefore, considering the aggregation order we see that for $(y_1,y_2) = (a,b)$  in $B_1$ the minimum over $z$ yields 1, and then the maximum over $y$'s is 1. 
For the other direction, we need to consider interpretations not isomorphic to $I^*$.
If $I$ has only one object then its map is 0 for all valuations, and therefore the aggregated value is 0. 
If $I$ has at least 3 objects then for any fixed valuation for $y$ the minimum over $z$ is 0, implying that the maximum over $y$ also yields 0 and $\map_{B_1}(I)=0$.
Finally consider any $I$ with two objects where $P_T()$ has the same truth value on the two objects. In this case the map is 0 for any valuation and
thus the final map value is 0.
We have therefore shown that C1 holds.

For our reduction, we produce $B=apply(B_1,B_2,\wedge)$ where for the aggregation we make use of Theorem~\ref{thm:gfodd-combine} and interleave the aggregation functions of $B_1$ and $B_2$ so that $B$ has at most $k$ alternations of quantifiers. 
This is always possible because the QBF starts with a $\exists$ quantifier and $k\geq 2$.

By the claims C1 and C2 and Theorem~\ref{thm:gfodd-combine} we get that $\map_B(I)=1$ if and only if $I$ is isomorphic to $I^*$  and $(Q_1x_1Q_2x_2 \ldots Q_mx_m f) = 1$. Therefore, the QBF is true if and only if there exists an interpretation $I$ (which must be isomorphic to $I^*$) that satisfies $B$.
\end{proof}

\bigskip
Equivalence is one level higher in the hierarchy; using a reduction from QBF we show how to ``peel off" one level of quantifiers and push that into the ``existential quantification" over interpretations that potentially witness non-equivalence.

\begin{theorem}
\label{thm:gfodd-equiv}
GFODD Equivalence  and {GFODD Edge Removal}  for diagrams with aggregation depth $k$ (where $k\geq 2$)
are $\Pi_{k+1}^p$-complete.
\end{theorem}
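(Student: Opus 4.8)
The plan is to prove membership in $\Pi_{k+1}^p$ by a guess-and-evaluate argument and hardness by a reduction from quantified Boolean formulas that reuses the gadgets of Theorems~\ref{thm:gfodd-eval} and~\ref{thm:godd-sat}. Since Edge Removal is a special case of Equivalence, I would prove membership for Equivalence and hardness for Edge Removal (or, as in Theorem~\ref{thm:maxfodd-equiv}, prove hardness first for Equivalence and then patch the construction into Edge-Removal form).

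For membership it suffices to place non-equivalence in $\Sigma_{k+1}^p$. Given $B_1,B_2$ of aggregation depth $k$ and the bound $N$ in unary, I would existentially guess an interpretation $I$ with at most $N$ objects; since $N$ is unary and the arity is fixed, $I$ has polynomial size and can be written down explicitly. It then remains to check $\map_{B_1}(I)\neq\map_{B_2}(I)$. As noted after the definition of Evaluation, given an oracle for GFODD Model Evaluation one computes an exact map value in polynomial time by scanning the polynomially many leaf values as thresholds and taking the largest accepted one. By Theorem~\ref{thm:gfodd-eval} evaluation of a depth-$k$ diagram lies at level $k$ of the hierarchy ($\Sigma_k^p$ or $\Pi_k^p$), and either class is an equally good oracle, so the two map values are obtained in $P^{\Sigma_k^p}$ and compared. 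The whole procedure is in $\mathrm{NP}^{\Sigma_k^p}=\Sigma_{k+1}^p$, so Equivalence is in $\Pi_{k+1}^p$ and Edge Removal inherits this.

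For hardness I would reduce the $\Sigma_{k+1}^p$-complete problem of deciding a $QBF$ with prefix $\exists X_1\,\forall X_2\,\exists X_3\cdots$ of $k+1$ alternating blocks to \emph{non}-equivalence. The key idea, as the header suggests, is to peel off the outer block $\exists X_1$ and realize it by the existential choice of a witness interpretation, while the remaining $k$ blocks $\forall X_2\,\exists X_3\cdots$ are realized by a $\min$-leading aggregation of depth exactly $k$, exactly as in Theorem~\ref{thm:gfodd-eval}. Concretely, a validity gadget in the style of Theorem~\ref{thm:godd-sat} forces the witnessing $I$ to consist of two distinguishable truth objects $a,b$ (with $P_T(a)$ true, $P_T(b)$ false) together with an encoding of a Boolean assignment $\alpha$ to $X_1$, so that ranging over valid $I$ is the same as ranging over $\alpha$. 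I would let $B_1$ compute, on valid $I$, $\map_{B_1}(I)=[\Phi(\alpha)]$ where $\Phi(\alpha)=\forall X_2\exists X_3\cdots f$ is evaluated by the consistency and clause blocks of the inner $QBF_k$ gadget, and $\map_{B_1}(I)=0$ on invalid $I$; and I would take $B_2$ to be $B_1$ with all leaves set to $0$, so $B_2\equiv 0$ while sharing the aggregation list of $B_1$. Then $B_1\equiv B_2$ iff $\Phi(\alpha)$ is false for every $\alpha$, i.e.\ iff $\exists X_1\forall X_2\cdots f$ is false; equivalently, non-equivalence holds iff the $QBF$ is true, giving $\Sigma_{k+1}^p$-hardness of non-equivalence and hence $\Pi_{k+1}^p$-hardness of Equivalence. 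By Corollary~\ref{cor:equiv-symmetric} the same bound holds whether the diagrams are $\min$- or $\max$-leading.

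I expect the technical weight to sit in the same bookkeeping that dominates the earlier GFODD proofs, in three places. First, the validity gadget must certify both ``at least two distinguishable objects'' (an existential, hence $\max$, task) and ``at most two objects'' (a universal, hence $\min$, task) and be folded, via the Apply/interleaving mechanism of Theorem~\ref{thm:gfodd-combine}, into the $\min$-leading depth-$k$ aggregation of $\Phi$ without creating a new alternation; arranging the $\max$ and $\min$ validity variables to land in existing blocks is precisely where the hypothesis $k\geq 2$ is needed, mirroring why the FODD case ($k=1$) required the separate arrowing argument of Theorem~\ref{thm:maxfodd-equiv}. Second, respecting the node-ordering constraint forces a careful interleaving of equality tests, $P_T$ tests, and clause blocks, and---for the constant-free version of the statement---replacing any constants used to name the $X_1$-objects by an in-interpretation selection gadget; as before these rearrangements should cost only linear blow-up and preserve $\map$ on every valuation. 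Third, to obtain the Edge-Removal special case I would, following Theorem~\ref{thm:maxfodd-equiv}, pad $B_1$ so that $B_2$ is recovered by redirecting a single edge to a zero leaf, which requires funnelling all accepting outcomes through one designated edge; I expect this consolidation, together with the ordering surgery, to be the main obstacle.
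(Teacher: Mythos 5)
Your membership argument is exactly the paper's: guess a small $I$ (polynomial because $N$ is unary and arity is fixed), compute both map values by thresholding over the polynomially many leaf values with a $\Sigma_k^p$ evaluation oracle (Theorem~\ref{thm:gfodd-eval}), so non-equivalence is in $\mathrm{NP}^{\Sigma_k^p}=\Sigma_{k+1}^p$. For plain Equivalence your hardness route is correct but dual to the paper's, and the difference is worth noting. The paper starts from a $\forall$-leading QBF and peels the outer $\forall$ into the universal quantification over interpretations, outputting the pair $(B_1,B)$ where $B_1$ is the legality verifier and $B=apply(B_1,B_2,\wedge)$ refines it; equivalence of two nontrivial diagrams then holds iff the QBF is true. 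You instead start from an $\exists$-leading QBF, peel the outer $\exists$ into the existential guess of a witness interpretation, and compare $B$ against the identically-zero diagram. Your version is really a reduction to \emph{satisfiability} of $\min$-leading depth-$k$ diagrams --- i.e., you are re-deriving Theorem~\ref{thm:gfodd-min-sat} and then observing that satisfiability is non-equivalence with the zero diagram --- whereas the paper proves equivalence hardness directly and later derives Theorem~\ref{thm:gfodd-min-sat} from it. Both give the $\Pi_{k+1}^p$ bound for Equivalence, and your appeals to the legality gadgets, the consistency blocks with flipped defaults, the symmetric $q(\cdot,\cdot)$ trick for ordering, and to Corollary~\ref{cor:equiv-symmetric} and Theorem~\ref{thm:gfodd-combine} are all in the right places, including your diagnosis of where $k\geq 2$ is used.

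The genuine gap is the Edge Removal part, which the theorem also asserts. Taking $B_2$ to be ``$B_1$ with all leaves set to $0$'' is not an edge-removal instance: the definition requires $B_2$ to arise from $B_1$ by redirecting a \emph{single} edge to a zero leaf. In your $\min$-leading construction the diagram has many $1$-valued exits --- every consistency block with $\min$ aggregation exits to the default value $1$ at its own depth, in addition to the final clause-chain success --- and no single edge lies on all root-to-$1$ paths, so no one redirection can produce the all-zero diagram syntactically. Your proposed repair, ``funnelling all accepting outcomes through one designated edge,'' does not work as stated: the only way to funnel is a bottom pre-terminal node with an always-true test, and an always-true equality node at the bottom violates the sorted order ($``="$ must precede $P_T$ and $q$), while rerouting the default-$1$ exits through any genuinely contingent test changes map values on illegal interpretations and breaks the C1 argument. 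What actually closes the gap is cheaper than funnelling: redirect only the single success edge at the end of the clause chain to $0$, and prove that the resulting diagram $B^-$ is semantically identical to zero. On legal interpretations every group-1 valuation now reaches $0$ whether or not it satisfies the clauses, and the group-2 defaults are the identity elements of their blocks' aggregations ($1$ for $\min$, $0$ for $\max$), so the same backward induction you use for C2 drives the aggregate to $0$; on illegal interpretations monotonicity of $\min$/$\max$ under the pointwise decrease gives $\map_{B^-}(I)\leq\map_B(I)=0$. The pair $(B,B^-)$ is then a legitimate Edge Removal instance equivalent iff the QBF is false. This observation --- that the default values wash out through aggregation, so no syntactic consolidation of $1$-exits is needed --- is the missing idea; the paper avoids the issue entirely by engineering its pair $(B_1,B)$ so that the two diagrams differ in exactly one edge by construction.
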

\begin{proof}
By Corollary~\ref{cor:equiv-symmetric}, it suffices to show that this holds for 
$\max$-$k$-alternating GFODDs.
Since Edge Removal is a special case of Equivalence it suffices to show membership for Equivalence and hardness for Edge Removal. 

To show membership we show that the complement, nonequivalence, is in $\Sigma_{k+1}^p$. Given two 
$\max$-$k$-alternating GFODDs
$B_1$ and $B_2$ as input,  we guess an interpretation $I$ of the appropriate size, and then appeal to an oracle for GFODD Evaluation to calculate $\map_{B_1}(I)$ and $\map_{B_2}(I)$.   Using these values we return Yes or No accordingly.
To calculate the map values, let $B$ be one of these diagrams, and let the leaf values of the diagram be $v_1,v_2,\ldots,v_\ell$. We make $\ell$ calls to GFODD Evaluation 
with $(B,I,v_i)$ as input. 
Each call requires an oracle in $\Sigma_k^p$ and
$\map_{B}(I)$ is the largest value on which the oracle returns Yes.
Clearly if a witness for nonequivalence exists then this process can discover it and say Yes (per non-equivalence), and otherwise it will always say No. Therefore non-equivalence is in NP$^{\Sigma_{k}^p}$, that is $\Sigma_{k+1}^p$ and equivalence is in $\Pi_{k+1}^p$.

We reduce QBF satisfiability with $k\geq 3$ alternations of quantifiers to equivalence of $\max$-$(k-1)$-alternating GFODDs. 
The reduction is conceptually similar to the one from the previous theorem but the details are more involved. 
In particular, here we assume a QBF whose first quantifier is $\forall$,  that is,
$\forall x_1, Q_2x_2 \ldots Q_mx_m f(x_1,x_2,\ldots,x_m)$ where this form has 
$k$ blocks of quantifiers. To simplify the notation it is convenient to collect adjacent variables having the same quantifiers into groups so that the QBF 
has the form
$\forall \xx_1 \ldots Q_k\xx_k f(\xx_1,\xx_2,\ldots,\xx_k)$ where $\xx_i$ refers to a set of variables.

We next define a notion of ``legal interpretations" for our diagrams. 
A legal interpretation embeds the binary interpretation $I^*$ from the previous proof and in addition includes a truth setting for all the variables in the first $\forall$ block of the QBF. 
The reduction constructs diagrams $B_1$, $B_2$, and $B=apply(B_1,B_2,\wedge)$ such that the following claims hold:
\\ 
(C1) for all $I$, $\map_{B_1}(I)=1$ if and only if $I$ is legal.
\\
(C2)
if $I$ is legal and it embeds the substitution $\xx_1=\alpha$ then $\map_{B_2}(I)=1$ if and only if $Q_2\xx_2 \ldots Q_k\xx_k f((\xx_1=\alpha),\xx_2,\ldots,\xx_k) = 1$.

We then output the diagrams $(B_1,B)$ for GFODD equivalence. Note that, by C1 and
Theorem~\ref{thm:gfodd-combine},
for non-legal interpretations we have $\map_{B_1}(I)= \map_{B}(I)=0$ and therefore if the diagrams are not equivalent it must be because of legal interpretations. 
Now, if the QBF is satisfied then, by definition, for all $\xx_1=\alpha$ we have that $Q_2\xx_2 \ldots Q_k\xx_k f((\xx_1=\alpha),\xx_2,\ldots,\xx_k) = 1$. Therefore, by C2, for all legal $I$, 
$\map_{B_2}(I)=1$ and by C1 and the construction $\map_{B}(I)=1$. Thus $B$ and $B_1$ are equivalent. 

On the other hand, if the QBF is not satisfied then there is a substitution $\xx_1=\alpha$ where $Q_2\xx_2 \ldots Q_k\xx_k f((\xx_1=\alpha),\xx_2,\ldots,\xx_k) = 0$. Therefore, by C2, for the corresponding interpretation $I'$, 
$\map_{B_2}(I')=0$ and by Theorem~\ref{thm:gfodd-combine}
we also have $\map_{B}(I')=0$. But by C1, $\map_{B_1}(I')=1$ and therefore $B_1$ and $B$ are not equivalent. 

We now proceed with the reduction, starting first with a simplified construction ignoring ordering of node labels and edge removal structure, and then elaborating to enforce these constraints. 
The set of predicates includes $P_T()$ and for every QBF variable $x_i$ in the first $\forall$ block we use a predicates $P_{x_i}()$. 
Notice that each $x_i$ is a member of $\xx_1$ (the first $\forall$ group) where the typeface distinguishes the individual variables in the first block, from blocks of variables.
In the simplified construction, a legal interpretation has exactly two objects, say $a$ and $b$, where $P_T(a)\not = P_T(b)$ and where for each $P_{x_i}()$ we have $P_{x_i}(a) = P_{x_i}(b)$. In other words, the assignment of an object to $v$ in $P_T(v)$ simulates an assignment to Boolean values, but the truth value of $P_{x_i}(v)$ is the same regardless of which object is assigned to $v$. 

\begin{figure}[t]
\centering
\includegraphics[scale = 0.85]{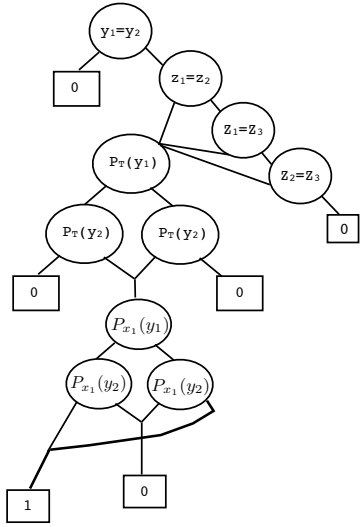}
\caption{The $B_1$ diagram for GFODD Equivalence reduction.}
\label{fig:gfodd-equiv-B1}
\end{figure}

In our example QBF
$\forall x_1 \exists x_2 \forall x_3 \exists x_4 (x_1 \vee \overline{x_2}  \vee x_4) \wedge (\overline{x_1} \vee x_2 \vee x_3) \wedge (x_1 \vee x_3 \vee \overline{x_4})$
the first block includes only the variable $x_1$ and the following interpretation is legal:
$I = \{ [a, b], P_T(a)=$true, $P_T(b)=$false, $P_{x_1}(a) = P_{x_1}(b)=$false$\}$.

The diagram $B_1$ has three portions where the first two are exactly as in the previous proof, thus verifying that $I$ has two objects and that $P_T()$ behaves as stated. The third portion verifies that each $P_{x_i}()$ behaves as stated, where we use a sequence of blocks, one for each $P_{x_i}()$.
The combined diagram $B_1$ is shown in Figure~\ref{fig:gfodd-equiv-B1} and the aggregation function is $\max_{y_1,y_2},\min_{z_1,z_2,z_3}$.

To see that C1 holds consider all possible cases for non-legal interpretations. If $I$ has at most one object the map is 0 for all valuations and thus the aggregation is 0. If $I$ has at least 3 objects, then for any values for $y_1,y_2$ the min aggregation over $z$ yields 0, and therefore the map is 0. If $I$ has 2 objects but it violates the condition on $P_T$ or $P_{x_i}$ then again the map is 0 for any valuation and the aggregation is 0. On the other hand, if $I$ is legal, then the $z$ block never yields 0 and the correct mapping to $y_1,y_2$ yields 1. Therefore the aggregation is 1.

The diagram $B_2$ is constructed by modifying $B_2$ from the proof of Theorem~\ref{thm:gfodd-eval}. The first modification is to handle the first $\forall$ block differently.  
As it turns out, all we need to do is replace the $\min$ aggregation for the $\ww_1$ block with maximum aggregation and accordingly replace the default value on that block to 0. 
To avoid confusion we recall that the current proof uses a slightly different notation from the previous one. In the current proof $\xx_i$ is a set of variables from the QBF and therefore $\ww_i$ is a set of blocks of variables, all of which have the same aggregation function.
The modified variable consistency diagram is shown in Figure~\ref{fig:gfodd-equiv-consistency}.
The clause blocks have the same structure as in the previous construction but use $P_{x_i}(V_{(i_1,i_2)})$ when $x_i$ is a $\forall$ variable from the first block and use $P_{T}(V_{(i_1,i_2)})$ otherwise.
This is shown in Figure~\ref{fig:gfodd-equiv-clauses}. 
$B_2$ includes the variable consistency blocks followed by the clause blocks. Note that the new clause blocks are not sorted in any consistent order because the predicates $P_{x_i}()$ and $P_{T}()$ appear in an arbitrary ordering determined by the appearance of literals in the QBF. Other than this violation, all other portions of the diagrams described are sorted where the predicate order has $= \prec P_T \prec P_{x_i}$ and where variables $w_i$ are before $v_{(i_1,i_2)}$ and variables within group are sorted lexicographically. 
The combined aggregation function is $\max_{\ww_1}, \max_{\ww_2},\min_{\ww_3},\ldots,Q^A_{\ww_k}$.

\begin{figure}[t]
\centering
\includegraphics[scale = 0.50]{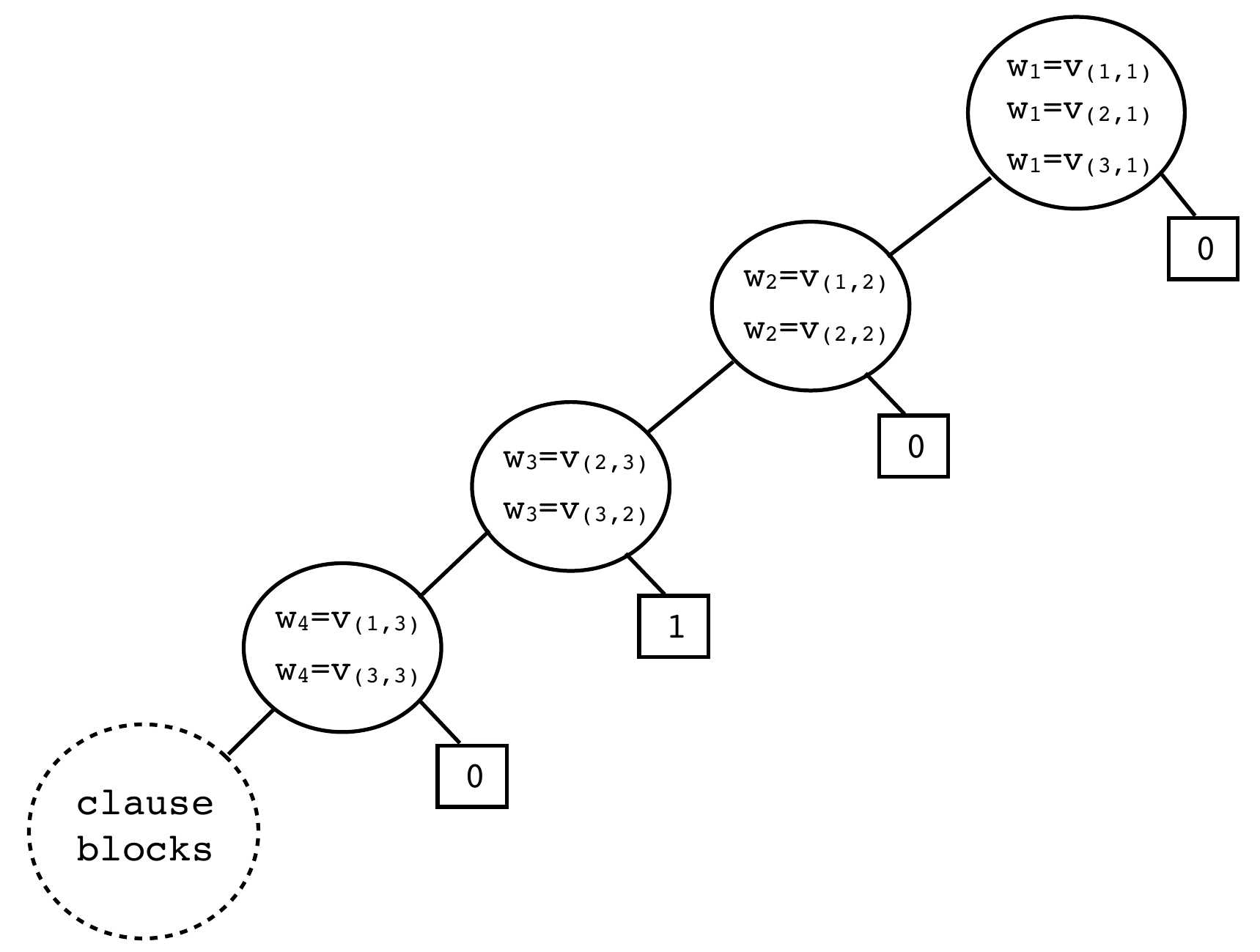}
\caption{The block consistency diagram for the GFODD equivalence reduction. The only difference from Figure~\ref{fig:gfodd-eval-consistency} is the default value on the $\ww_1$ block.}
\label{fig:gfodd-equiv-consistency}
\end{figure}

\begin{figure}[t]
\centering
\includegraphics[width = 0.27\textwidth]{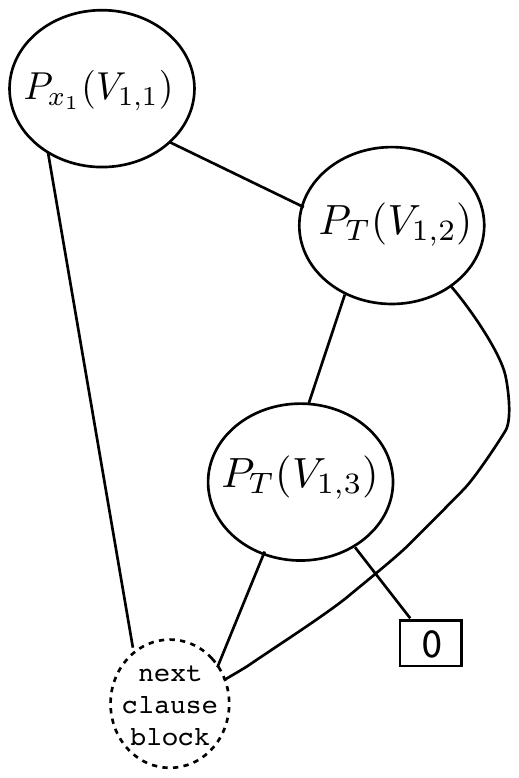}
\includegraphics[width = 0.27\textwidth]{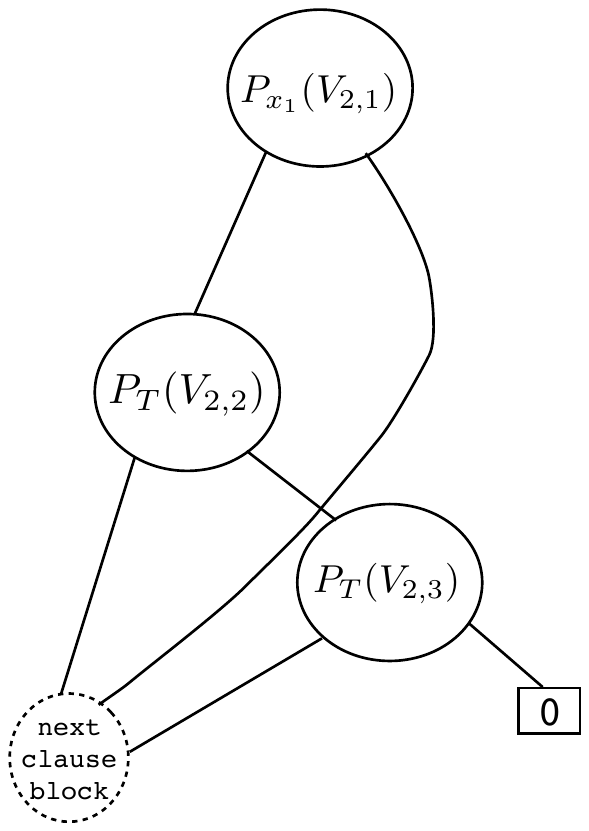}
\includegraphics[width = 0.38\textwidth]{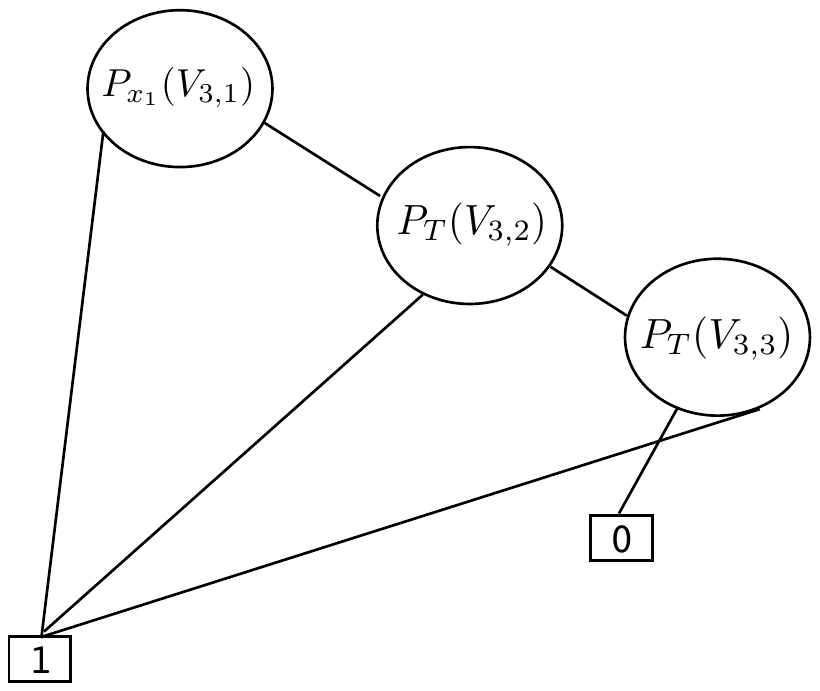}
\caption{The clause blocks for the GFODD equivalence reduction. For all occurrences of the universal variables of the first block ($x_1$ in this example) we have been replaced $P_T$ with $P_{x_i}$.}
\label{fig:gfodd-equiv-clauses}
\end{figure}

We next show that claim C2 holds, which will complete the proof of the simplified construction. 
Consider any legal $I$, let the corresponding truth values for variables in $\xx_1$ be denoted $\alpha$, and consider valuations for the QBF extending $\xx_1=\alpha$. Now consider any valuation $v$ to the remaining variables in the QBF and the induced substitution to the GFODD variables $\zeta(v)$ that is easily identified from the construction. Add any consistent group assignment to $\ww_1$ (that is, we assign $a$ or $b$ to each subgroup of variables in that group)
to $\zeta(v)$ to get $\hat{\zeta}(v)$.
By the construction of $B_2$ we have that $f([\xx_1=\alpha ,(\xx_2,\ldots,\xx_k)=v]) = \map_{B_2}(I,\hat{\zeta}(v))$. To see this note that there are no quantifiers in this expression, there is a 1-1 correspondence between the valuations of $\xx_2,\ldots,\xx_k$ and $\ww_2,\ldots,\ww_k$, and that as long as the assignment to the $\ww_1$ block is group consistent it does not affect the value returned. 
We call this set of valuations, that arise as translations of substitutions for QBF variables, {\em Group 1}.

The second group, {\em Group 2}, includes valuations that do not arise as $\hat{\zeta}(v)$ and therefore they violate at least one of the consistency blocks. 
Let  $\zeta$ be such a valuation
and let $Q^A_j$ be the first block from the left whose constraint is violated. By the construction of $B_2$, in particular the order of equality blocks along paths in the GFODD, we have that the evaluation of the diagram on $\zeta$ ``exits" to a default value on the first violation. Therefore, if $j=1$, that is the violation is in the block of $\ww_1$, $\map_{B}(I,\zeta)=0$ and for $j\geq 2$ if
$Q_j$ is a $\forall$ then $\map_{B}(I,\zeta)=1$ and if $Q_j$ is a $\exists$ then $\map_{B}(I,\zeta)=0$.

We can now show the correspondence in truth values. Consider any partition of the blocks $2,\ldots, k$ into a prefix $2,\ldots, j$ and remainder $(j+1),\ldots, k$, and any valuation $v$ to the prefix blocks. We claim that for all such partitions 
\begin{eqnarray*}
\lefteqn{Q_{j+1} \xx_{j+1}, \ldots, Q_k \xx_k, f(\xx_1=\alpha,(\xx_2,\ldots,\xx_j)=v, \xx_{j+1},\ldots,\xx_k) = }
\\
&  &
Q_{j+1} \ww_{j+1}, \ldots, Q_k \ww_k, \map_{B_2}(I,[\ww_1=a,(\ww_2,\ldots,\ww_j)=\zeta(v),(\ww_{j+1},\ldots,\ww_k)]).
\end{eqnarray*}
Note that when $j=1$, that is, the prefix is empty, this yields that $Q_{2} \xx_{2}, \ldots, Q_k \xx_k, f(\xx_1=\alpha,\xx_2,\ldots,\xx_k)$ is equal to 
$Q^A_{2} \ww_{2}, \ldots, Q^A_k \ww_k, \map_{B_2}(I,[\ww_1=a,\ww_2,\ldots,\ww_k)])$.
Now because the default value for violations of $\ww_1$ is 0 and because the aggregation for $\ww_1$ is $\max$ the latter expression is equal to 
$Q^A_{1} \ww_{1}, \ldots, Q^A_k \ww_k, \map_{B_2}(I,[\ww_1,\ww_2,\ldots,\ww_k)])$.
This means that $\map_{B_2}(I)$ is equal to 
$Q_{2} \xx_{2}, \ldots, Q_k \xx_k, f(\xx_1=\alpha,\xx_2,\ldots,\xx_k)$
completing the proof of C2.

We prove the claim by induction, backwards from $k$ to 1. 
For the base case, $j=k$, and the second part is empty. The claim then follows because the prefix includes all variables and there is a 1-1 correspondence in truth values for substitutions in group 1. 

For the inductive step, the valuation $v$ covers the first $j-1$ blocks. 
Note that, by the inductive assumption, for any group 1 substitution $v_j$ for $\xx_j$ and corresponding, $\zeta(v_j)$ for $\ww_j$,
$
Q_{j+1} \xx_{j+1}, \ldots, Q_k \xx_k,$ $ f(\xx_1=\alpha,(\xx_2,\ldots,\xx_{j-1})=v, (\xx_{j}=v_j),\xx_{j+1},\ldots,\xx_k)$
 $= Q^A_{j+1} \ww_{j+1}, \ldots, Q^A_k \ww_k,$ $\map_{B_2}(I,[\ww_1=a,(\ww_2,\ldots,$ $\ww_{j-1})=\zeta(v),(\ww_{j}=\zeta(v_j)),(\ww_{j+1},\ldots,\ww_k)])
$.
On the other hand, for any group 2 substitution $\zeta_j$ for $\ww_j$ and any values for $(\ww_{j+1},\ldots,\ww_k)$ we have that
the violating block for the corresponding combined $\zeta$ is block $j$ and therefore
$\map_{B_2}(I,[\ww_1=a,(\ww_2,\ldots,\ww_{j-1})=\zeta(v),(\ww_{j}=\zeta_j),(\ww_{j+1},\ldots,\ww_k)])$ gets the default value for that block. 
Therefore, 
the map is determined by group 1 valuations, which are in turn identical to the QBF value and 
$
Q_{j} \xx_{j}, \ldots, Q_k \xx_k,$ $f(\xx_1=\alpha,(\xx_2,\ldots,\xx_{j-1})=v, (\xx_{j},\ldots,\xx_k))
= Q^A_{j} \ww_{j}, \ldots, Q^A_k \ww_k, \map_{B_2}(I,[\ww_1=a,(\ww_2,\ldots,\ww_{j-1})$ $=\zeta(v)$ $,(\ww_{j},\ldots,\ww_k)])
$ as required. Therefore, the claim on the correspondence of values holds. This completes the proof of C2.

\paragraph{Extending the reduction to handle ordering  and edge removal special case:}
The main idea in the extended construction is to replace the unary predicates $P_T$ and $P_{x_i}$ with one binary predicate $q(\cdot,\cdot)$ where the ``second argument" in $q()$ serves to identify the corresponding predicate and hence its truth value. In addition we force $q()$ to be symmetric so that for any $A$ and $B$ the truth value of $q(A,B)$ is the same as the truth value of $q(B,A)$. In this way we have freedom to use either $q(A,B)$ or $q(B,A)$ as the node label which provides sufficient flexibility to handle the ordering issues. To implement this idea we need a few additional constructions.

Let the set of variables in the first $\forall$ block of the QBF be $x_1,x_2,\ldots,x_\ell$. 
The set of predicates in the extended reduction includes
unary predicates $T(),  y_1(), y_2(), x_1(), x_2(),\ldots,x_\ell(), $ and one binary predicate $q(\cdot,\cdot)$. A legal interpretation includes exactly $\ell+3$  objects which are uniquely identified by the unary predicates. We therefore slightly abuse notation and use the same symbols for the objects and predicates. 
In particular, the  atoms $y_1(y_1), y_2(y_2)$, $T(T)$, and $x_1(x_1), x_2(x_2), \ldots, x_\ell(x_\ell)$ are true in the interpretation and only these atoms are true for these unary predicates (e.g., $x_1(T)$ is false).
The truth values of $q()$ reflect the simulation of $P_T()$ and $P_{x_i}()$ in addition to being symmetric. 
Thus  the truth values of $q(y_1,T)$ and $q(T,y_1)$ are the same and they are the negation of the truth values of $q(y_2,T)$ and $q(T,y_2)$. 
For all $i$, the truth values of $q(y_1,x_i)$, $q(x_i,y_1)$, $q(y_2,x_i)$ and $q(x_i,y_2)$ are the same.
The truth values of other instances of $q()$, for example, $q(x_2,T)$, can be set arbitrarily. For example, the following interpretation is legal when $\ell=1$:
$I = \{ [a, b, c, d], y_1(a), y_2(b), T(c), x_1(d), q(c,a)=q(a,c)=$ true, $q(c,b)=q(b,c)=$ false, $q(d,a)=q(a,d)=q(d,b)=q(b,d)=$ false, $q(\cdot,\cdot)=$ false$\}$ where $q(\cdot,\cdot)$ refers to any instance not explicitly mentioned in the list.

We next define the diagram $B_1$ that is satisfied only in legal interpretations. 
We enforce exactly $\ell+3$ objects using two complementary parts. The first includes ${\ell+4 \choose 2}$ inequalities on a new set of $\ell+4$ variables $z_1,\ldots,z_{\ell+4}$ with min aggregation. If we identify $\ell+4$ distinct objects we set the value to 0.
This is shown in Figure~\ref{fig:gfodd-equiv-few-objects}.

To enforce at least $\ell+3$ objects and identify them we use the following gadget. For each of the unary predicates we have a diagram identifying its object and testing its uniqueness where we use both max and min variables. This is shown for the predicate $T()$ in Figure~\ref{fig:gfodd-equiv-unary-T}. The node $T(T)$ with max variable $T$ identifies the object $T$. The nodes $T(r_1),T(r_2)$ with min variables $r_1,r_2$ make sure that $T$ holds for at most one object. 
We chain the diagrams together as shown in Figure~\ref{fig:gfodd-equiv-sorted-unary} where the variables $r_1,r_2$ are shared among all unary predicates. The corresponding aggregation function is $\max_{T,y_1,y_2,x_1,\ldots, x_{\ell}}, \min_{r_1,r_2}$. This diagram associates each of the $\ell+3$ objects with one of the unary predicates and in this way provides a reference to specific objects in the interpretation.

The symmetry gadget for $q()$ is shown in Figure~\ref{fig:gfodd-equiv-q-symmetric} where the variables $m_1,m_2$ are min variables. If an input interpretation has two objects $A,B$ where $q(A,B)$ has a truth value different than $q(B,A)$ then minimum aggregation will map the interpretation to 0.

The truth value gadget for the simulation of $P_T$ is shown in Figure~\ref{fig:gfodd-equiv-q-of-PT} 
and
the truth value gadget for the simulation of $P_{x_i}$ is shown in Figure~\ref{fig:gfodd-equiv-q-of-xi}.
These diagram fragments refer to variables in other portions and they will be connected and aggregated together.

\begin{figure}[t]
\centering
\includegraphics[scale = 0.55]{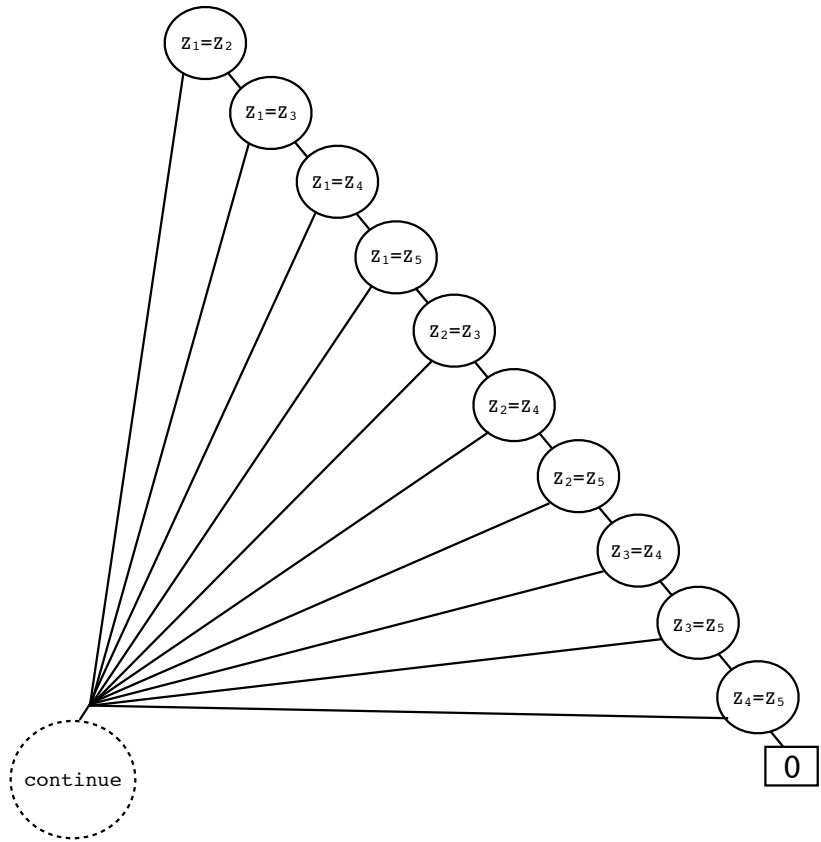}
\caption{GFODD equivalence reduction. The diagram verifying that the interpretation has less than ${\ell +4}$ objects. Here $\ell=1$ and $\ell+4=5$.}
\label{fig:gfodd-equiv-few-objects}
\end{figure}

\begin{figure}[t]
\centering
\includegraphics[scale = 0.50]{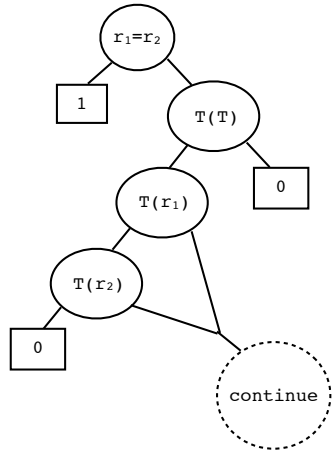}
\caption{GFODD equivalence reduction. The diagram for the unary predicate $T()$ identifies one object for which $T()$ holds, and makes sure that $T()$ does not hold for two objects.}
\label{fig:gfodd-equiv-unary-T}
\end{figure}

\begin{figure}[t]
\centering
\includegraphics[scale = 0.50]{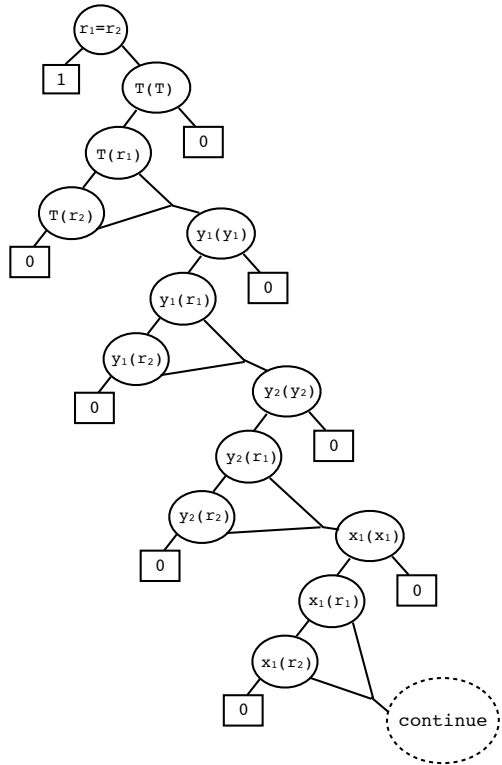}
\caption{GFODD equivalence reduction. The diagram verifying that each unary predicate corresponds to exactly one object in the interpretation.
The corresponding aggregation function is $\max_{T,y_1,y_2,x_1,\ldots, x_{\ell}}, \min_{r_1,r_2}$.
}
\label{fig:gfodd-equiv-sorted-unary}
\end{figure}

\begin{figure}[t]
\centering
\includegraphics[scale = 0.50]{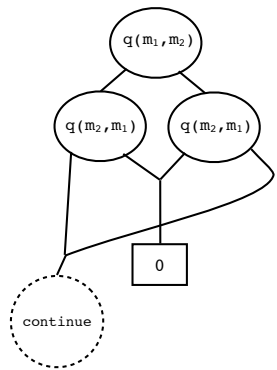}
\caption{GFODD equivalence reduction. Diagram fragment verifying that $q()$ is symmetric.
The corresponding aggregation function is $\min_{m_1,m_2}$.
}
\label{fig:gfodd-equiv-q-symmetric}
\end{figure}

\begin{figure}[t]
\centering
\includegraphics[scale = 0.50]{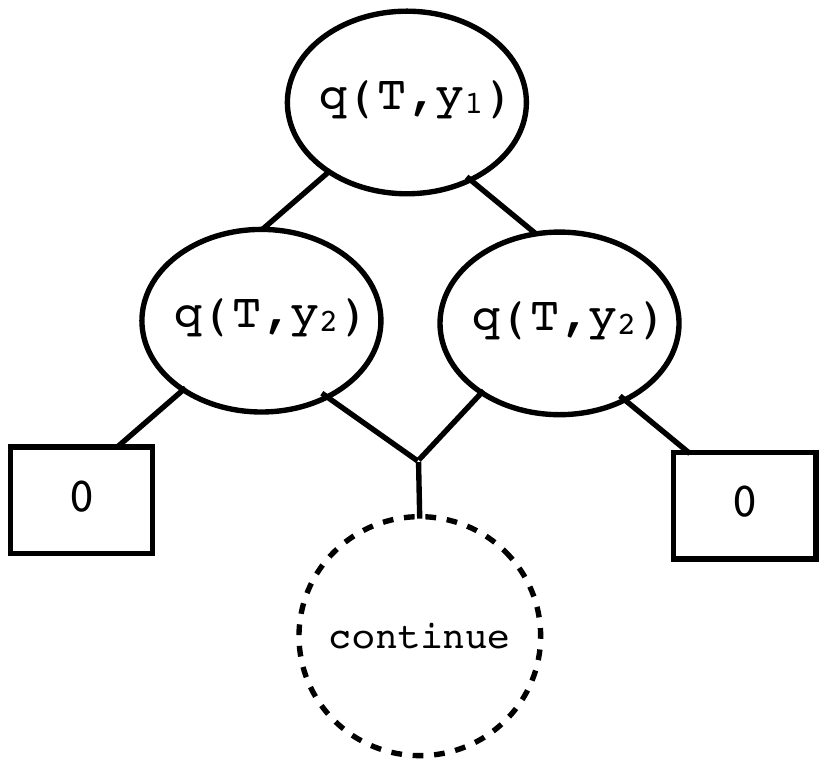}
\caption{GFODD equivalence reduction. Diagram fragment verifying that $q()$ simulates truth values of $P_T()$ from the simple construction.}
\label{fig:gfodd-equiv-q-of-PT}
\end{figure}

\begin{figure}[t]
\centering
\includegraphics[scale = 0.50]{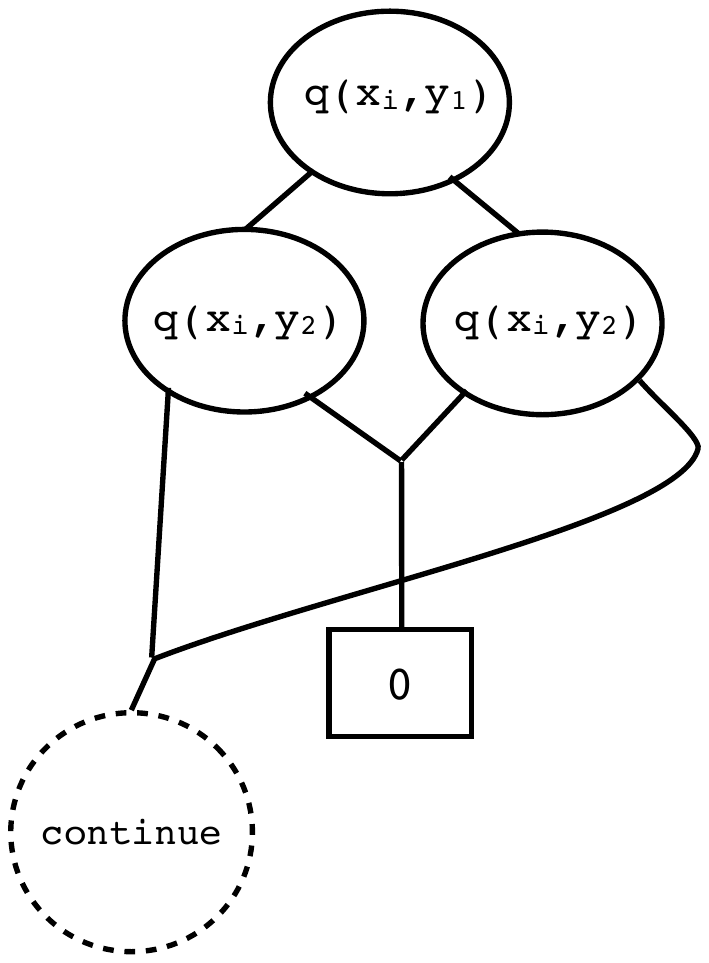}
\caption{GFODD equivalence reduction. Diagram fragment verifying that $q()$ simulates truth values of $P_{x_i}()$ from the simple construction.}
\label{fig:gfodd-equiv-q-of-xi}
\end{figure}

\begin{figure}[t]
\centering
\includegraphics[scale = 0.70]{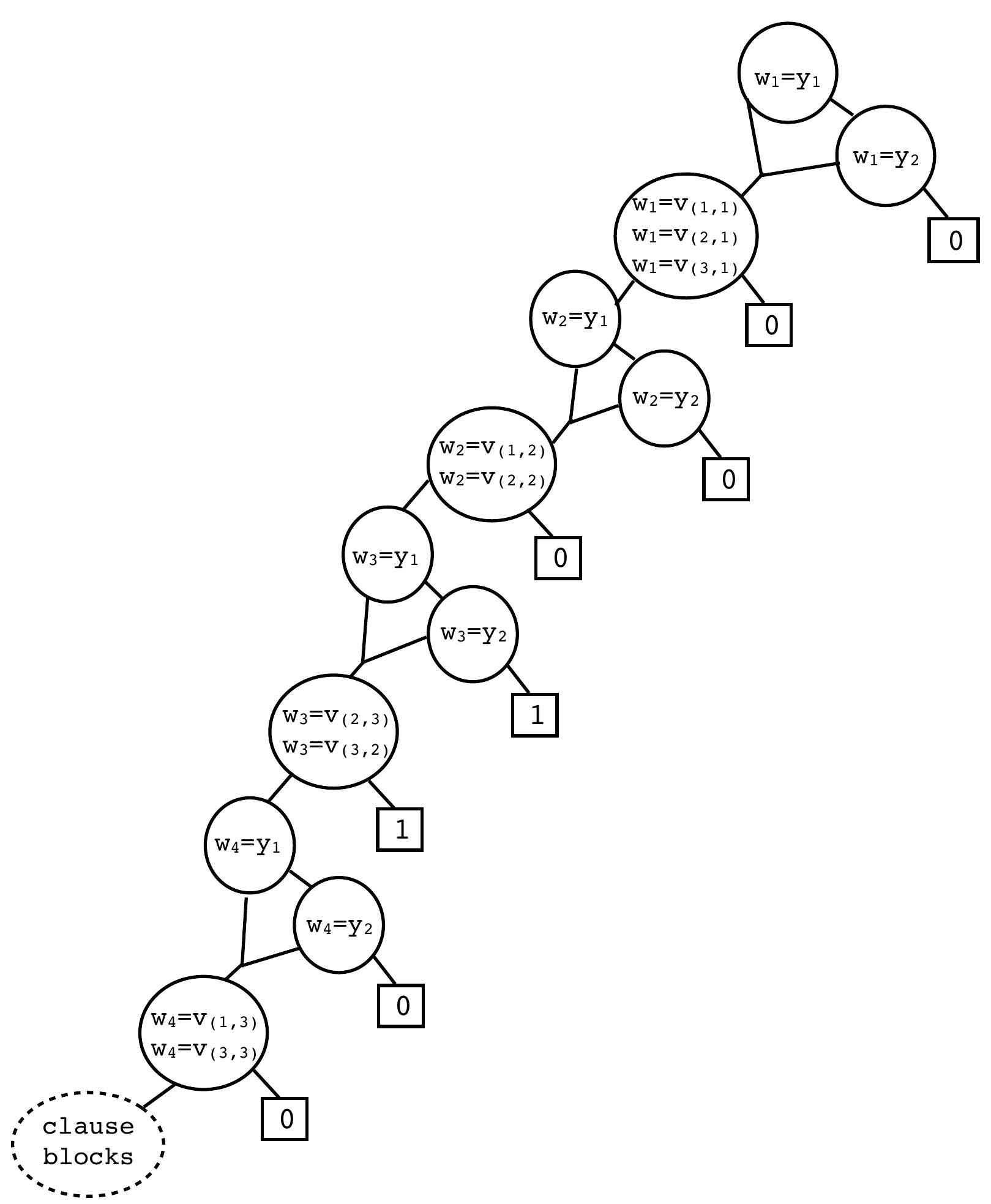}
\caption{The diagram checking block consistency for the ordered version of the GFODD equivalence reduction.}
\label{fig:gfodd-equiv-sorted-consistency}
\end{figure}

\begin{figure}[t]
\centering
\includegraphics[scale = 0.85]{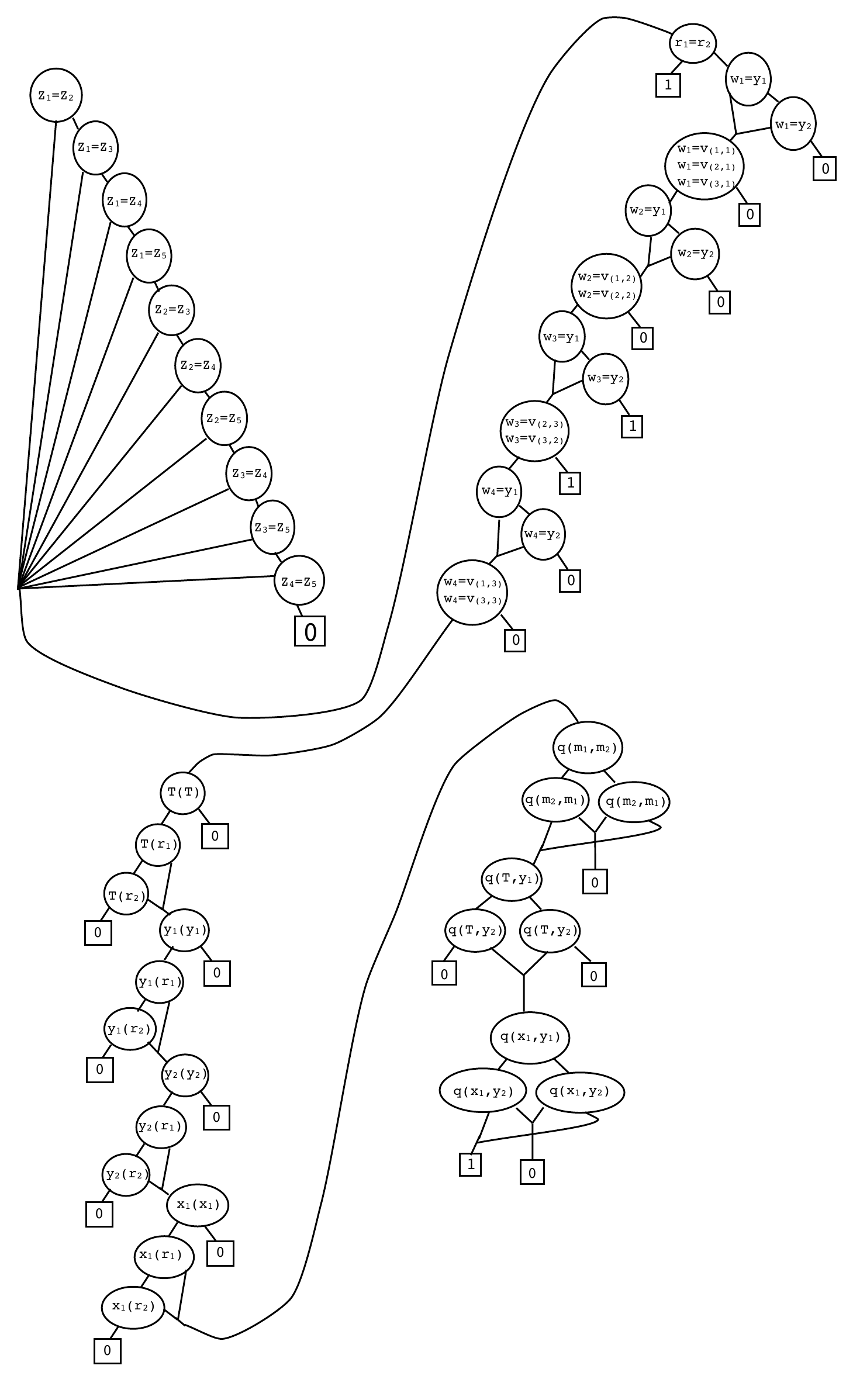}
\caption{The final version of the diagram $B_1$ for the ordered version of the GFODD equivalence reduction.
The complete aggregation function is 
$\max_{T,y_1,y_2,x_1,\ldots,x_{\ell}} $
$\max_{\ww_1,\ww_2}$
$\min_{m_1,m_2}\min_{r_1,r_2}\min_{z_1,\ldots,z_{\ell+4}}$
$\min_{\ww_3} \ldots Q^A_{\ww_k}.
$
}
\label{fig:gfodd-equiv-sorted-B1}
\end{figure}

\clearpage

Finally, we add a component that is not needed for verifying that $I$ is legal but will be useful later when we include the clauses.
In particular, we include a variable consistency block which is similar to the one in the simple construction (see Figure~\ref{fig:gfodd-equiv-consistency}) but where we force each subgroup in $\ww_i$ to bind to the same object as $y_1$ or $y_2$. This is shown in Figure~\ref{fig:gfodd-equiv-sorted-consistency}.

We chain the diagrams together 
as shown in Figure~\ref{fig:gfodd-equiv-sorted-B1} to get $B_1$ where we have moved the node labelled $r_1=r_2$ to be above the block consistency gadget.
Note that the diagram is sorted where for predicate order we have $``=" \prec T \prec y_1 \prec y_2 \prec x_i \prec q$ and for variables we have 
$z_i,T,y_i,x_i \prec r_j \prec w_l$, and $y_i \prec v_{(i_1,i_2)}$, and $m_i \prec T \prec x_j$.
The complete aggregation function is 
$$\max_{T,y_1,y_2,x_1,\ldots,x_{\ell}} 
\max_{\ww_1,\ww_2}
\min_{m_1,m_2}\min_{r_1,r_2}\min_{z_1,\ldots,z_{\ell+4}}
\min_{\ww_3} \ldots Q^A_{\ww_k}.
$$

We next show that the claim C1 holds for the extended reduction.\footnote{
Note that we can remove the variable consistency block which complicates the argument (and does not test anything per legality of $I$) and still maintain correctness of C1. But including it here simplifies the argument for diagram $B$ below and thus simplifies the overall proof.
}
We first consider all possible cases for illegal interpretations. 
\begin{itemize}
\item
If the interpretation has $\geq \ell+4$ objects then the top portion of the diagram yields 0 for some valuation of $z$'s.
Consider any valuation $\zeta_p$ to the prefix of variables
${T,y_1,y_2,x_1,\ldots,x_{\ell}}$, 
${\ww_1,\ww_2}$,
${m_1,m_2}, {r_1,r_2}$,
and any $\zeta$ which is an extension of this valuation, has the violating combination for  
${z_1,\ldots,z_{\ell+4}}$
and any valuation
for the other variables.
We have $\map_{B_1}(I,\zeta)=0$. Therefore, all aggregations from $\ww_k$ to $\ww_3$ yield a value of 0 for this prefix.
Now continuing backwards, the minimization over $z$ yields a value of 0 for $\zeta_p$, and this holds for any $\zeta_p$. Therefore, all remaining aggregations yield 0 and the final value is 0.
\item
Next, consider the case where the interpretation has $\leq \ell+3$ objects but one of the unary predicates is always false (i.e., it does not ``pick" any object).
The situation is similar to the previous case, but here we get a value of 1 for $\zeta$ where $r_1=r_2$ or where there is a block violation for some $\ww_i$ block with $\min$ aggregation.

Consider any valuation $\zeta_p$ to the prefix of variables
${T,y_1,y_2,x_1,\ldots,x_{\ell}}$, 
${\ww_1,\ww_2}$,
${m_1,m_2}$, which is block consistent on ${\ww_1,\ww_2}$
and any $\zeta$ which is an extension of this valuation, has  
$r_1\not = r_2$,
and any valuation
for the other variables.
We have two cases: if $\zeta$ is in group 1 the map is 0 because the unary predicate test fails, and if $\zeta$ is in group 2 the map is the default value of the first violated block $\ww_i$. Now, because there is at least one group 1 valuation, we can argue inductively backwards from $k$ that all aggregations from $k$ to $3$ yield 0, and the same holds for the minimization over $z$. 
Next, because the value is 0 for $r_1\not=r_2$, the minimization over $r$ yields 0 for $\zeta_p$. 
Considering any variants of $\zeta_p$ which are not block consistent on ${\ww_1,\ww_2}$ but otherwise identical we see that their value is also 0. 
As a result all remaining aggregations yield 0.  
\item
Next consider the case where the previous two conditions are satisfied but where one of the unary predicates holds for two or more objects. The argument is identical to the previous case, except that $\zeta$
has  the violating pair for $r_1, r_2$ (instead of any $r_1\not = r_2$).
\item
Next consider any interpretation that has exactly $\ell+3$ objects and where the unary predicates identify the objects corresponding to ${T,y_1,y_2,x_1,\ldots,x_{\ell}}$ but where $q()$ is not symmetric.
In this case we consider a valuation
$\zeta_p$ to the prefix of variables
${T,y_1,y_2,x_1,\ldots,x_{\ell}}$, 
${\ww_1,\ww_2}$,
and extensions $\zeta$ that have the violating pair for $m_1,m_2$.
As above, starting with $\zeta$ that are block consistent on $\ww_1, \ww_2$ and have $r_1\not= r_2$ we can argue that the aggregations down to $m$ yield 0, and as a result that the aggregation over $m$ yields 0. 
As $\zeta_p$ is arbitrary, it follows that the final value is 0. 
\item
The only remaining cases are
interpretations that are illegal only because the $q()$ simulation of $P_T()$ or $P_{x_i}()$ is not as required. In this case, the same argument as in the 2nd item in this list shows that the value is 0. 
\end{itemize}

\begin{figure}[t]
\centering
\includegraphics[scale = 0.70]{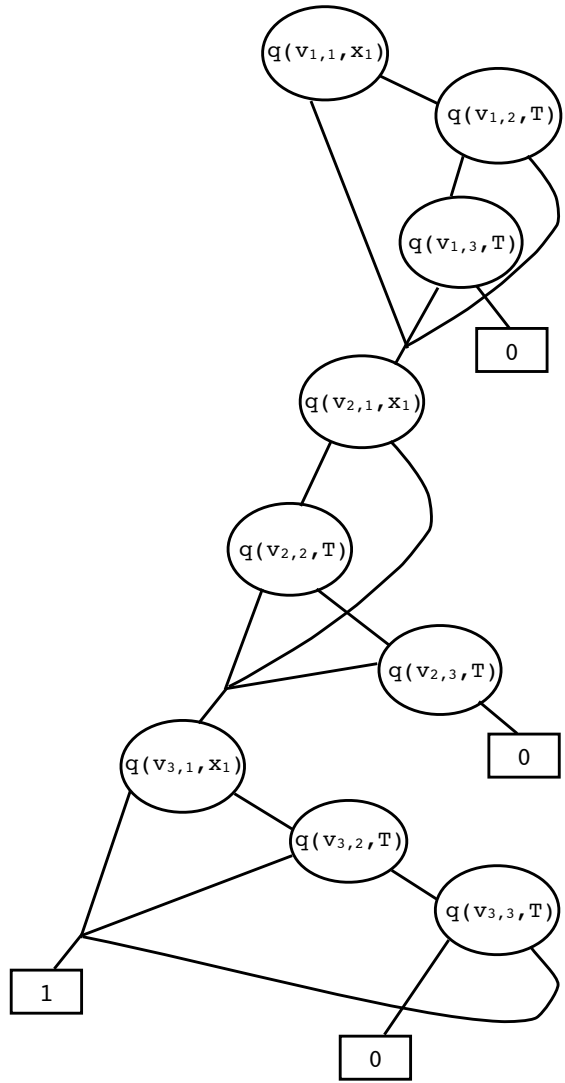}
\caption{GFODD equivalence reduction. Diagrams capturing the clause blocks.}
\label{fig:gfodd-equiv-sorted-clauses}
\end{figure}

\begin{figure}[t]
\centering
\includegraphics[scale = 0.80]{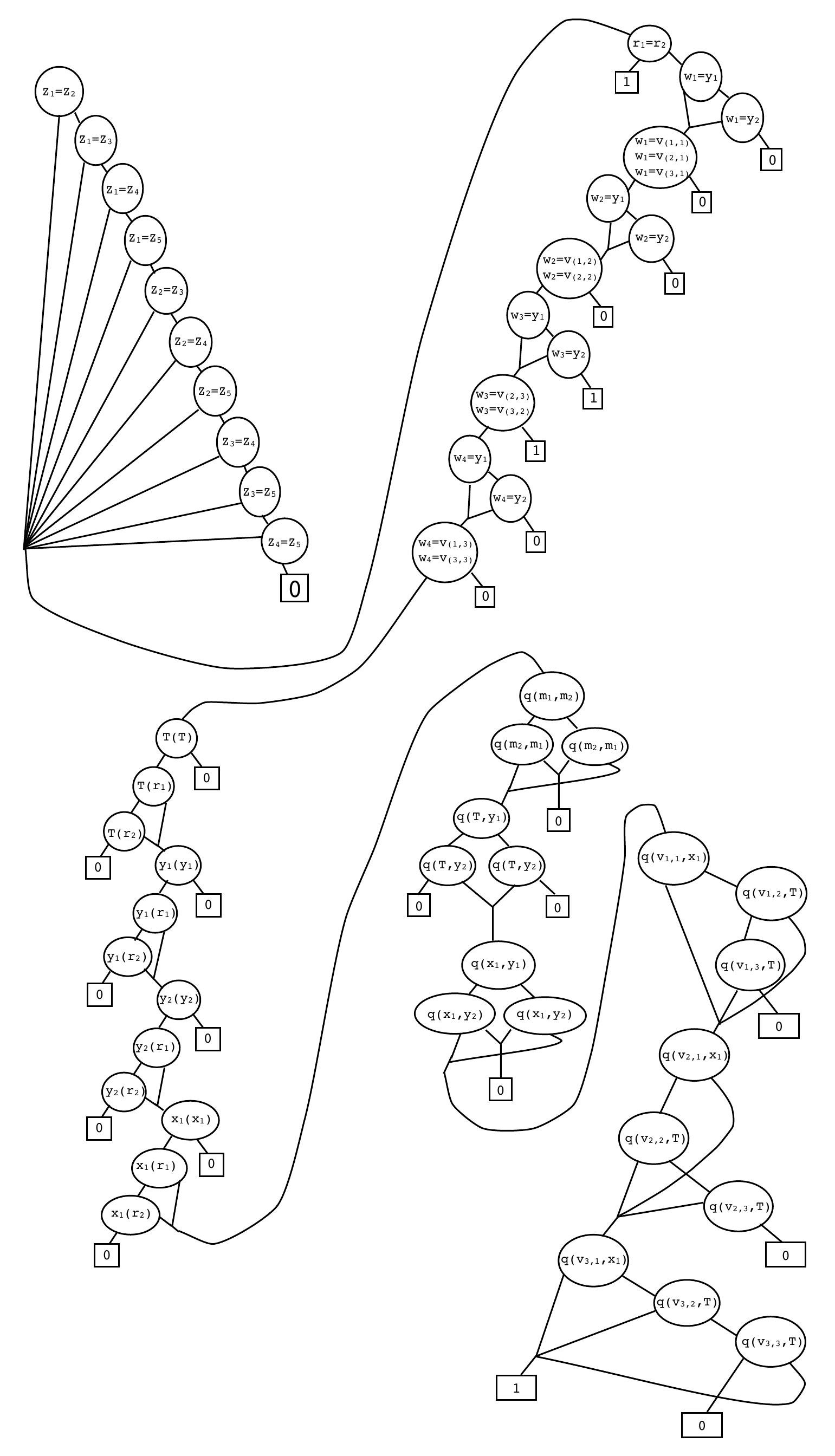}
\caption{The final version of the diagram $B$ for the ordered version of the GFODD equivalence reduction.
The complete aggregation function is 
$\max_{T,y_1,y_2,x_1,\ldots,x_{\ell}} $
$\max_{\ww_1,\ww_2}$
$\min_{m_1,m_2}\min_{r_1,r_2}\min_{z_1,\ldots,z_{\ell+4}}$
$\min_{\ww_3} \ldots Q^A_{\ww_k}.
$}
\label{fig:gfodd-equiv-sorted-B}
\end{figure}

\clearpage

Therefore, if $I$ is illegal then $\map_{B_1}(I)=0$.
Consider next any legal interpretation and the intended valuation $\zeta_p$ to $T,y_1,y_2,x_1,\ldots,x_{\ell}$. For any group 1 extension of this valuation and any valuation of the variables $m,z,r$ the diagram yields 1. Therefore, we can argue inductively that all $\ww_i$ aggregations down to $\ww_3$ yield 1, and the min aggregations over $m,z,r$ yield 1 for $\zeta_p$. Therefore, the max aggregation over $T,y_1,y_2,x_1,\ldots,x_{\ell}$ yields 1, and $\map_{B_1}(I)=1$.
This completes the proof of C1.

The diagram $B$ is obtained by adding the clause blocks below the $q()$ tests of $B_1$.
The clause blocks have the same structure as above but they use $q()$ instead of $P_T()$ and $P_{x_i}()$. 
This is shown in Figure~\ref{fig:gfodd-equiv-sorted-clauses}.
The final diagram for $B$ is shown in Figure~\ref{fig:gfodd-equiv-sorted-B} and it has the same aggregation function as $B_1$. 
Note that the diagram is also sorted using the same order as in $B_1$ with the addition that $x_i \prec V_{(i_1,i_2)}$. We next show that: 
\\
(C2') $\map_B(I)=1$ if and only if 
$I$ is legal and $Q_2\xx_2 \ldots Q_k\xx_k f((\xx_1=\alpha),\xx_2,$ $\ldots,$ $\xx_k) = 1$
where $I$
embeds the substitution $\xx_1=\alpha$. 

The property C2' is slightly different from C2 above, in that it argues directly over $B$ and includes the condition on legal interpretations. Nonetheless, the same argument from the simple case can be used to show that C1 and C2' imply that $B_1$ and $B$ are equivalent iff the QBF is satisfied.

To prove the claim first note that by the construction $B$ adds more tests on the path to a 1 leaf of $B_1$ and does not add any other paths to a value of 1. Therefore, 
for any $I$ and any $\zeta$, if $\map_B(I,\zeta)=1$ then $\map_{B_1}(I,\zeta)=1$ and as a result if $\map_B(I)=1$ then $\map_{B_1}(I)=1$. Therefore, by C1, if $\map_B(I,\zeta)=1$ then $I$ is legal.
Next, consider any legal $I$ and any unintended valuation $\zeta_p$ for the prefix $T,y_1,y_2,x_1,\ldots,x_{\ell}$.
Any valuation extending this prefix with a block consistent assignment for all $\ww_i$ and with $r_1\not = r_2$ will yield a zero. Therefore, the minimization over $r_1,r_2$ will yield 0 for this prefix, and so will the minimization over $m_1,m_2$ and maximization over $\ww_1,\ww_2$.
We conclude that the aggregated value for this prefix is 0. 
Therefore, if $\map_B(I)=1$ and thus the $\max$ aggregation over these prefix variables yields 1, it must be through the intended valuation $\zeta_p$ for the prefix $T,y_1,y_2,x_1,\ldots,x_{\ell}$. 

However, 
when $T,y_1,y_2,x_1,\ldots,x_{\ell}$ are fixed to their intended values,
the portions of the diagram testing for $\leq q+3$ objects, the uniqueness of special objects $T,y_1,y_2,x_1,\ldots,x_k$, the symmetry of $q()$ and its simulation of $P_T()$ and $P_{x_i}$
do not affect the final value in the sense that a valuation reaching them always continues to the next block. 
Therefore, if we restrict attention to such valuations we can shrink $B$ by removing these blocks and still obtain the same aggregated value. Now we observe that there is a 1-1 correspondence between valuations and values of the resulting $B$ to valuations and values of $B_2$ in the simple construction
(where we extend the notion of block consistent to enforce that $\ww_i$ bind to $y_1,y_2$). Therefore,
the claim holds by C2 of the simple construction. 
\end{proof}

\bigskip
The result for GFODD Value is similar to the FODD case.

\begin{theorem}
{ GFODD Value} for diagrams with aggregation depth $k$  (where $k\geq 2$)  is $\Sigma_{k+1}^p$-complete.
\end{theorem}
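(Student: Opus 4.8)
The plan is to follow the template of the FODD Value proof (Theorem~\ref{thm:maxfodd-value}), using a GFODD Evaluation oracle in place of the FODD Evaluation oracle, which raises each oracle query one level in the hierarchy. For membership, note that since $N$ is given in unary and the arity is bounded, any interpretation with at most $N$ objects has size polynomial in the input and can be guessed explicitly. Given $(B,N,V)$, the algorithm guesses such an interpretation $I$ and then pins down $\map_B(I)$ with two calls to a GFODD Evaluation oracle: let $V'$ be the least leaf value of $B$ strictly greater than $V$, or one more than the maximum leaf if $V$ is already maximal; query $(B,I,V)$ and $(B,I,V')$ and accept iff the first returns Yes and the second returns No. Because $\min$/$\max$ aggregation always returns one of the leaf values, this certifies exactly $\map_B(I)=V$. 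By Theorem~\ref{thm:gfodd-eval}, GFODD Evaluation for diagrams of aggregation depth $k$ lies in $\Sigma_k^p$ or $\Pi_k^p$; either can serve as a $\Sigma_k^p$ oracle, so the procedure runs in $\mbox{NP}^{\Sigma_k^p}=\Sigma_{k+1}^p$.

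For hardness I would reduce from GFODD non-Equivalence, which by Theorem~\ref{thm:gfodd-equiv} is $\Sigma_{k+1}^p$-complete for diagrams of aggregation depth $k$ with $k\geq 2$ (it is the complement of the $\Pi_{k+1}^p$-complete equivalence problem). By Corollary~\ref{cor:equiv-symmetric} we may assume the inputs $B_1,B_2$ are $\max$-$k$-alternating with binary leaves, given together with a bound $N$ in unary. I would standardize their variables apart and set $B=\mathrm{apply}(B_1,B_2,+)$, choosing the aggregation list of $B$ by interleaving the lists of $B_1$ and $B_2$ so that their two leading $\max$ blocks coalesce, their two $\min$ blocks coalesce, and so on. Since $+$ is safe with respect to both $\max$ and $\min$ and this interleaving preserves the relative order of the variables within each original list, Theorem~\ref{thm:gfodd-combine} yields $\map_B(I)=\map_{B_1}(I)+\map_{B_2}(I)$ for every $I$, while the merged list keeps $B$ at aggregation depth $k$. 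The output is $(B,N,V=1)$.

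Correctness is then immediate: $\map_B(I)\in\{0,1,2\}$, and because the leaves are binary, $\map_B(I)=1$ holds exactly when precisely one of $\map_{B_1}(I),\map_{B_2}(I)$ equals $1$, i.e.\ when $\map_{B_1}(I)\neq\map_{B_2}(I)$. Hence some interpretation with at most $N$ objects attains value $1$ iff $B_1$ and $B_2$ disagree on some interpretation of that size, which is exactly a Yes instance of GFODD non-Equivalence. This establishes $\Sigma_{k+1}^p$-hardness, matching the membership bound.

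The main obstacle is controlling the aggregation depth of $B$: a naive concatenation of the two aggregation lists would introduce an extra alternation and push the depth to $k+1$, which would prove the wrong bound. The fix is the block-by-block interleaving above, so that same-type blocks merge; this is legitimate precisely because Theorem~\ref{thm:gfodd-combine} permits any order-preserving permutation of the combined variables. A secondary, more routine point is keeping the output sorted so that the reduction holds even for ordered diagrams: since $B_1$ and $B_2$ are already sorted and are standardized apart, one chooses a global variable order refining both and runs \textrm{apply} under a compatible atom ordering, so that $B$ inherits a consistent node ordering.
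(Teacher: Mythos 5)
Your proposal is correct and follows essentially the same route as the paper: the membership algorithm (guess a small interpretation, then pin down $\map_B(I)=V$ with two GFODD Evaluation oracle queries at $(B,I,V)$ and $(B,I,V')$) and the hardness reduction from GFODD non-Equivalence via $B=\mathrm{apply}(B_1,B_2,+)$ with $V=1$ are exactly the paper's proof, which itself just transplants Theorem~\ref{thm:maxfodd-value}. The one point you flag as the main obstacle---interleaving the two aggregation lists block-by-block so that the depth stays $k$ rather than growing to $k+1$---is precisely the step the paper handles by invoking Theorem~\ref{thm:gfodd-combine}, so your treatment matches it in substance as well.
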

\begin{proof}
The proof of Theorem~\ref{thm:maxfodd-value} goes through almost directly and requires only a slight wording variation. For membership we get the bound on interpretation size by the assumption on the input; then the algorithm is the same.

For the reduction, we use Theorem~\ref{thm:gfodd-combine} to calculate $B=apply(B_1,B_2,+)$. 
As stated in that theorem,
we can mesh together the aggregation lists of $B_1$ and $B_2$ interleaving the max and min blocks from each diagram without increasing quantifier depth and the diagram $B$ has the same quantifier prefix and depth as those of $B_1$ and $B_2$. 
\end{proof}

\bigskip
Unlike $\max$-$k$-alternating GFODDs, for min diagrams the search for a satisfying interpretation cannot be absorbed into the first aggregation operator. This fact pushes the problem one level higher in the hierarchy.

\begin{theorem}
\label{thm:gfodd-min-sat}
GFODD  Satisfiability for $\min$-$k$-alternating GFODDs  (where $k\geq 2$) is $\Sigma_{k+1}^p$-complete.
\end{theorem}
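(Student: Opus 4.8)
The plan is to prove matching membership and hardness. For \textbf{membership}, recall that by Theorem~\ref{thm:gfodd-eval} evaluation of a $\min$-$k$-alternating GFODD is in $\Pi_k^p$. Since $N$ is given in unary and the arity is fixed, an intended interpretation has size polynomial in the input, so the natural algorithm nondeterministically guesses an interpretation $I$ with at most $N$ objects and then queries an oracle for GFODD Evaluation on $(B,I,1)$, accepting iff $\map_B(I)=1$. As noted in the complexity preliminaries, a $\Pi_k^p$ oracle may be replaced by a $\Sigma_k^p$ oracle, so this procedure runs in $\mbox{NP}^{\Sigma_k^p}=\Sigma_{k+1}^p$. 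This is exactly where the asymmetry with the $\max$ case appears: the outer guess of $I$ (an existential) sits \emph{outside} the leading $\min$ aggregation and therefore cannot be folded into it, which is why the problem lands one level above the $\Pi_k^p$ evaluation bound.

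For \textbf{hardness} I would reduce from $QBF_{k+1}$ with a leading existential block, i.e.\ a formula $\exists \xx_1 \forall \xx_2 \ldots Q_{k+1}\xx_{k+1}\, f$ with $k+1$ alternating blocks, which is $\Sigma_{k+1}^p$-complete. The guiding idea is to spend the existential choice of interpretation on the leading $\exists\xx_1$ block and leave the remaining $k$ blocks $\forall\xx_2\,\exists\xx_3\ldots$ to be realized by the aggregation of a $\min$-$k$-alternating diagram (first block $\min=\forall$). A legal interpretation encodes both the two-valued true/false structure and a truth assignment $\xx_1=\alpha$ to the first block, exactly as in the extended construction of Theorem~\ref{thm:gfodd-equiv} (using the symmetric binary predicate $q(\cdot,\cdot)$ and the unary object-identifying predicates, so that hardness holds without constants and the diagram can be sorted). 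I would then build $B_1$ verifying legality and $B_2$ validating $\forall\xx_2\ldots Q_{k+1}\xx_{k+1}\,f(\alpha,\ldots)$ via the consistency and clause blocks of Theorem~\ref{thm:gfodd-eval}, and set $B=\mathrm{apply}(B_1,B_2,\wedge)$. The crucial device is Theorem~\ref{thm:gfodd-combine}: since $\wedge$ is safe for both $\min$ and $\max$, any interleaving of the two aggregation lists that preserves their internal orders yields $\map_B(I)=\map_{B_1}(I)\wedge\map_{B_2}(I)$; this lets me keep $B$ exactly $\min$-$k$-alternating while $B_1$ retains its own legality semantics and $B_2$ retains the quantifier structure of the residual QBF.

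The correctness rests on two claims, in the style of Theorems~\ref{thm:godd-sat} and~\ref{thm:gfodd-equiv}: (C1) $\map_{B_1}(I)=1$ iff $I$ is legal; and (C2) for a legal $I$ encoding $\alpha$, $\map_{B}(I)=\map_{B_2}(I)=1$ iff $\forall\xx_2\ldots Q_{k+1}\xx_{k+1}\,f(\alpha,\ldots)=1$. Given these, and taking $N$ equal to the fixed legal interpretation size, there is an $I$ with at most $N$ objects satisfying $B$ iff some $\alpha$ makes $\forall\xx_2\ldots f(\alpha,\ldots)$ true, i.e.\ iff the QBF holds: illegal interpretations are killed by $\map_{B_1}=0$, so no false positives arise when the QBF is false, while the $\alpha$ witnessing a true QBF produces a legal satisfying $I$. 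The proof of (C2) is the same backward induction over quantifier blocks used before, splitting valuations into a \emph{Group~1} arising from Boolean assignments and a \emph{Group~2} that violates a consistency block and exits to the block's default value, so that the aggregation is determined by Group~1 and matches the QBF value block by block.

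The \textbf{main obstacle} is making the legality test cohere with a $\min$-first aggregation, since identifying the object on which each unary predicate holds, and checking that the truth predicate takes two distinct values, are existential ($\max$) conditions, whereas in the equivalence reduction these $\max$ variables occupy the \emph{leading} $\max$ block of the diagram. Here $B$ must begin with $\min$, so there is no leading $\max$ block available. I expect to resolve this by designing $B_1$ to be $\min$-first: the universal checks (``fewer than $\ell+4$ objects'', uniqueness of each identified object, symmetry of $q(\cdot,\cdot)$, and consistency of the $P_{x_i}$ simulation) go in $\min$ blocks, while the single genuinely existential requirement (that the truth predicate realizes both values) goes in a later $\max$ block; by the chaining argument, for a diagram that exits to $0$ before reaching the $\max$ portion whenever a $\min$ check fails, the aggregated value is precisely the conjunction of these conditions, so $B_1$ computes legality with a $\min$-then-$\max$ prefix that fits the interleaving for every $k\ge 2$. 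Verifying this interleaving together with the node-ordering fix-ups (reusing the $q(\cdot,\cdot)$ symmetry trick so that either argument order is available as a node label) is the technically demanding part, and I would handle it by adapting, essentially verbatim, the extended construction and its case analysis from the proof of Theorem~\ref{thm:gfodd-equiv}.
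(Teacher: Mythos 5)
Your proposal is correct and follows essentially the same route as the paper: membership by guessing a polynomial-size interpretation and querying a $\Sigma_k^p$ oracle for GFODD Evaluation, and hardness by reducing from a leading-existential QBF whose first block is absorbed into the notion of legal interpretation, with $B_1$, $B_2$, $B=apply(B_1,B_2,\wedge)$ combined via Theorem~\ref{thm:gfodd-combine} and correctness established through claims C1/C2 and the Group~1/Group~2 backward induction, exactly as in the paper's adaptation of the Theorem~\ref{thm:gfodd-equiv} construction. Even your resolution of the ``main obstacle'' matches the paper's actual fix: its aggregation list is $\min_{m_1,m_2}\min_{r_1,r_2}\min_{z_1,\ldots,z_{\ell+4}}\min_{\ww_1,\ww_2}\max_{T,y_1,y_2,x_1,\ldots,x_\ell}\max_{\ww_3}\ldots Q^A_{\ww_k}$, i.e.\ the universal legality checks occupy the leading $\min$ blocks while the existential object-identification sits in a later $\max$ block, with the consistency-block default values flipped accordingly.
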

\begin{sketch}
For membership, we guess an interpretation $I$ of the appropriate size, and then appeal to a $\Sigma_k^p$ oracle to solve GFODD evaluation for $(B,I)$.
This is clearly in $NP^{\Sigma_k^p}$.

The hardness result uses a slight modification of the equivalence proof, which we sketch next. One can verify 
that all the details of the modification go through to establish the result.

In particular, we reduce QBF satisfiability with $k\geq 3$ alternations of quantifiers to satisfiability of $\min$-$(k-1)$-alternating GFODDs. 
Here we assume a QBF whose first quantifier is $\exists$,  that is has the form $\exists_1\xx_1 \ldots Q_k\xx_k f(\xx_1,\xx_2,\ldots,\xx_k)$ where $\xx_i$ refers to a set of variables. We build $B_1$, $B_2$ and $B$ exactly as
in Theorem~\ref{thm:gfodd-equiv} with one exception: the leaf values on the diagram that checks for block consistency are flipped from the previous construction (because the corresponding aggregation operators are switched). The reduction still provides 
 $B_1$, $B_2$, and $B=apply(B_1,B_2,\wedge)$ such that the following claims hold:
\\ 
(C1) for all $I$, $\map_{B_1}(I)=1$ if and only if $I$ is legal.
\\
(C2)
if $I$ is legal and it embeds the substitution $\xx_1=\alpha$ then $\map_{B_2}(I)=1$ if and only if $Q_2\xx_2 \ldots Q_k\xx_k f((\xx_1=\alpha),\xx_2,\ldots,\xx_k) = 1$.

We then output the diagram $B$ for GFODD satisfiability. 
Now, if the QBF is satisfied then there exists a value $\alpha$ such that for $\xx_1=\alpha$ we have that $Q_2\xx_2 \ldots Q_k\xx_k f((\xx_1=\alpha),\xx_2,\ldots,\xx_k) = 1$. Therefore, by C2, for the legal $I$ that embeds $\alpha$, 
$\map_{B_2}(I)=1$,
and by Theorem~\ref{thm:gfodd-combine} we have that $\map_{B}(I)=1$.
On the other hand, if the QBF is not satisfied then for all substitutions $\xx_1=\alpha$ we have $Q_2\xx_2 \ldots Q_k\xx_k f((\xx_1=\alpha),\xx_2,\ldots,\xx_k) = 0$. Therefore, by C2, 
all legal $I$ (and any $\alpha$ they embed)
$\map_{B_2}(I)=0$ and by Theorem~\ref{thm:gfodd-combine}
we also have $\map_{B}(I)=0$. By C1, $\map_{B}(I)=0$ for non-legal interpretations. Therefore, $B$ is not satisfiable.

The construction to handle ordering and edge removal structure can be similarly modified for the current proof. 
\end{sketch}

\bigskip
Up to this point, all hardness proof in the paper
use a signature without any constants, i.e., we use equality and unary and binary predicates. 
For min-GFODDs (the case $k=1$) the use of constants affects the complexity of the satisfiability problem. In particular, for a signature without constants, if a min-GFODD is satisfied by interpretation $I$, then it is satisfied by the sub-interpretation of $I$ with just one object (any object in $I$ will do). Moreover, given diagram $B$ and a specific $I$ with one object, model evaluation is in $P$ because there is only one valuation to consider. Therefore, in this case satisfiability is in NP: we can guess the interpretation (i.e. truth values of predicates) and evaluate $\map_B(I)$ in polynomial time. On the other hand, if we allow constants in the signature the problem follows the same scheme as above and is $\Sigma_{2}^p$-complete.

\begin{theorem}
\label{thm:gfodd-min-sat-k1}
GFODD  Satisfiability for $\min$-GFODDs  is $\Sigma_{2}^p$-complete.
\end{theorem}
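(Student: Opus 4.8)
The plan for membership is to read the problem off its definition. A satisfying interpretation has at most $N$ objects, and since $N$ is given in unary such an interpretation, together with any valuation $\zeta$, has size polynomial in the input. For a pure $\min$ diagram, $\map_B(I)=1$ holds exactly when \emph{every} valuation $\zeta$ reaches a $1$-leaf, so the problem is the predicate ``there is a polynomial-size $I$ such that for all polynomial-size $\zeta$ we have $\map_B(I,\zeta)=1$,'' whose innermost test is polynomial-time computable. This is a $\Sigma_2^p$ predicate. Equivalently, I would guess $I$ and then decide $\map_B(I)=1$ with one call to the co-NP ($\Pi_1^p$) oracle for $\min$-$1$-alternating Evaluation supplied by Theorem~\ref{thm:gfodd-eval}, placing the problem in $\mathrm{NP}^{\Pi_1^p}=\Sigma_2^p$.

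\textbf{Hardness.} I would reduce from the $\Sigma_2^p$-complete problem of deciding a quantified Boolean formula $\exists \xx_1\,\forall \xx_2\, f(\xx_1,\xx_2)$ with $f$ in 3CNF, reusing the clause-block template of Theorem~\ref{thm:maxfodd-sat} and Theorem~\ref{thm:gfodd-eval} but exploiting constants. Use a signature with equality, two constants $a$ (true) and $b$ (false), and one constant $d_i$ for each variable $x_i$ of the $\exists$ block; output $N=2$. The root of $B$ tests the ground atom $a=b$ and sends its true branch to $0$, so any interpretation with fewer than two distinct objects evaluates to $0$; together with the bound $N=2$ this forces every candidate interpretation to have \emph{exactly} two objects, named by $a$ and $b$. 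Below this I attach, for each $\forall$ variable $x_j$, a $\min$-aggregated variable, and then the clause blocks, reading a literal over an $\exists$ variable $x_i$ as the ground equality $d_i=a$ and a literal over a $\forall$ variable $x_j$ as $v_j=a$ (with the true/false exits swapped for negative literals, as before). All aggregation operators are $\min$, so $B$ is a $\min$-GFODD.

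\textbf{Correctness.} Because the only freedom remaining in a two-object interpretation is the placement of each $d_i$ on $a$ or $b$, the choice of $I$ ranges exactly over assignments $\alpha$ to $\xx_1$, while $\min$ aggregation over the $\forall$-variables $v_j$ (each ranging over the two objects $\{a,b\}$) realizes $\forall\xx_2$, returning $1$ iff every Boolean assignment to $\xx_2$ satisfies $f$. A routine induction identical to Claim~1 of Theorem~\ref{thm:gfodd-eval} then yields $\map_B(I)=1$ iff $I$ has exactly two objects and $\forall\xx_2\,f(\alpha,\xx_2)=1$; hence $B$ is satisfiable iff $\exists\xx_1\,\forall\xx_2\,f$ is true.

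\textbf{Main obstacle.} The conceptual crux is seeing \emph{why} constants are indispensable here, which is also what distinguishes this case from the constant-free $\Sigma_{k+1}^p$ reductions of Theorem~\ref{thm:gfodd-equiv} and Theorem~\ref{thm:gfodd-min-sat}: a pure $\min$ diagram has no max/existential aggregation with which to ``name'' two distinct objects, so by the collapse argument stated just before the theorem any constant-free $\min$-GFODD that is satisfiable is already satisfiable over a one-object domain, which would make genuine $\forall$-quantification over $\{T,F\}$ impossible. The ground atom $a=b$ is precisely what blocks this one-object collapse. The only remaining difficulty is the bookkeeping already encountered in the earlier proofs, namely making $B$ obey the atom ordering; I would handle it exactly as in Theorem~\ref{thm:maxfodd-sat} and Theorem~\ref{thm:gfodd-min-sat}, by introducing a fresh occurrence variable for each literal of a $\forall$ variable together with consistency blocks (whose violation exits to the default value $1$ appropriate for $\min$ blocks, and which force each occurrence of a $\forall$ variable to bind to the same object, $a$ or $b$) and then interleaving and reordering the blocks so that the equalities respect the lexicographic order. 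This costs only a linear blow-up and leaves every map value unchanged, so the reduction goes through.
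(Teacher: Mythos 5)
Your membership argument matches the paper's (guess a small $I$, decide $\map_B(I)=1$ with a $\Pi_1^p$ evaluation oracle), but your hardness reduction takes a genuinely different and leaner route. The paper obtains Theorem~\ref{thm:gfodd-min-sat-k1} by specializing the full machinery of Theorem~\ref{thm:gfodd-min-sat} to $k=2$ and converting the $\max$ variables $T,y_1,y_2,x_1,\ldots,x_\ell$ into constants: it retains the unary naming predicates, the symmetric binary predicate $q(\cdot,\cdot)$ with its symmetry gadget, and the object-counting gadgets over $z_1,\ldots,z_{\ell+4}$ and $r_1,r_2$, because legality of $I$ (exactly $\ell+3$ objects, $q$ symmetric, $q$ simulating $P_T$ and $P_{x_i}$) must be enforced inside the diagram. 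Your construction dispenses with predicates entirely: with signature equality-plus-constants, the ground test $a=b$ at the root blocks the one-object collapse, $N=2$ caps the domain, and the placement of the constants $d_i$ on $\{a,b\}$ directly encodes the existential assignment $\alpha$, so the ``choice of $I$'' \emph{is} the choice of $\alpha$. You also correctly identify the conceptual crux (why constants are indispensable for $\min$ diagrams) and the consistency-block mechanism with default exit $1$, which is exactly how the paper neutralizes inconsistent valuations under $\min$ aggregation. If the ordering issue is handled, your reduction is valid and arguably simpler than the paper's.

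However, your ordering repair as stated has a genuine gap. You introduce fresh occurrence variables and consistency blocks only for the $\forall$ variables, but the problematic atoms are the ground ones: a literal of an existential variable $x_i$ is read as the ground equality $d_i=a$ \emph{inside the clause blocks}. First, the same atom $d_i=a$ recurs in every clause containing $x_i$ or $\overline{x_i}$, and repeating a label along a path violates the strict sorting requirement; the symmetry of equality gives you only two orientations, $(a,d_i)$ and $(d_i,a)$, so at most two occurrences can be accommodated. Second, the chain alternates ground atoms with variable equalities $v_{(c,j)}=a$, and since constants precede variables in the term order, once a variable-first atom appears on a path no later ground atom can be placed below it; again the two available orientations per atom do not suffice to sort an alternating chain. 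So ``interleave and reorder as in Theorem~\ref{thm:maxfodd-sat}'' does not go through verbatim: there the clause blocks used a \emph{different, later} predicate ($P_T$), whereas here everything is equality. The fix is small and in the spirit of your own construction: give each occurrence of an existential variable a fresh $\min$-variable $u_{(c,j)}$, add to the consistency section a test $u_{(c,j)}=d_i$ whose violation exits to the default value $1$ (legitimate since all aggregation is $\min$, so, as in the paper's appendix proof, \emph{every} block default is $1$ and the first-violated-block bookkeeping is trivial), and have the clause blocks test only variable equalities $u_{(c,j)}=a$ and $v_{(c,j)}=a$. Then the consistency section contains only atoms of the form $(d_i,u_{(c,j)})$, $(w_j,v_{(c,j)})$, and bindings to $a,b$, the clause section contains only atoms $(u,a)$ and $(v,a)$, and with occurrence variables named in path order the whole diagram sorts with a linear blow-up. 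With that amendment your reduction is correct.
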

\begin{sketch}
Membership is as in the general case. For hardness, we use the construction in the reduction of the previous proof which yields a GFODD with aggregation $ \min^* \max^*$ (i.e., the portion starting with $\ww_3$ does not exist) where the max variables are $T,y_1,y_2,x_1,\ldots,x_{\ell}$. We then turn these variables into constants and remove the $\max$ aggregation to yield a $\min$ GFODD. One can verify that  the proof of Theorem~\ref{thm:gfodd-min-sat} goes through with very minor changes.
\end{sketch}

\section{Discussion}

In this paper we explored the complexity of computations using FODD and GFODD using min and max aggregation, providing a classification placing them within the polynomial hierarchy,
where, roughly speaking, equivalence is one level higher in the hierarchy than evaluation and satisfiability.
These results are useful in that they clearly characterize the complexity of the problems solved heuristically by 
implementations of GFODD systems \cite{JoshiKeKh10,JoshiKh11,JoshiKhRaTaFe13} and can be used to partly motivate or justify the use of these heuristics.
For example, the ``model checking reductions" of \cite{JoshiKeKh10}   
replace equivalence tests with model evaluation on a ``representative" set of models, 
and  %
choosing this set heuristically
leads  to inference that is correct with respect to these models but otherwise incomplete.
Our results here show that this indeed leads to reduction of the complexity of the inference 
problem so that the reduction in accuracy is traded for improved worst case run time. 
As mentioned above, the proofs in the paper can be used (in simpler form) to show the same 
complexity results for the corresponding problems in first order logic. To our knowledge the 
complexity questions with an explicit bound on model size have not been previously studied 
in this context. Yet they can be useful in many contexts where such a bound can 
be given in a practical setting. For example, in such cases existing optimized QBF algorithms can be used for inference in this restricted form of first order logic. 

There are several important directions for further investigation. The first involves using a richer set of aggregation operators. In particular the definition of GFODDs allows for any function to aggregate values, and functions such as sum, product, and average are both natural and useful for modeling and solving Markov Decision Processes, which have been the main application for FODDs.  
The work of \cite{JoshiKhRaTaFe13} extends the model checking reductions to GFODDs with average aggregation.  Clarifying the complexity of these problems and identifying the best algorithms for them is an important effort for the efficiency of such systems. 
In this context it is also interesting to clarify the relationship to query languages in databases that allow for similar aggregations and to formulations of ``logic with counting" that has been developed in the context of database theory \cite{Libkin04}. 

Considering this wider family of GFODDs also raises new computational questions beyond the ones explored in this paper. 
One such question arises from the connection to 
statistical relational models and specifically to lifted inference in such models (see e.g.\ \cite{MLN,Problog,LiftedWMC}). In particular, consider Markov Logic Networks (MLN) \cite{MLN} that can be seen to define a distribution over first order structures through a log linear probability model, where features of this model are defined by simple first order formula templates. It is easy to show how to encode such templates and their weights using 
a GFODD with product aggregation, and how these can be combined using a variant of the Apply procedure. The main computational question in this context has been to calculate the probability of a query given the MLN model, and the number of objects $n$ in the domain. Let ${\cal I}$ be the set of models with $n$ objects over the relevant signature. In our case this question translates to calculating
$$ \sum_{I\in {\cal I}} \map_B(I)$$
for an appropriate $B$ that combines the query and the MLN model. This is closely related to the approaches that solve this problem via lifted weighted model counting \cite{LiftedWMC}. A similarly interesting question would require us to calculate the best $I$ for a particular $B$
$$ \mbox{argmax}_{I\in {\cal I}} \ \map_B(I).$$
In this case, if $B$ captures say profit of some organization, then the computation optimizes the setting so as to maximize profit.
Thirdly, we have defined a logic-inspired language but did not define or study any notion of implication. A natural notion of implication with numerical values, related to the one used by \cite{Gordon2009}, is majorization:
$$ B_1 \models B_2 \ \ \ \Leftrightarrow  \ \ \ \forall_{I\in {\cal I}} \  \map_{B_1}(I) \leq \map_{B_2}(I). $$
Efficient algorithms and complexity analysis for these new questions will expand the scope and applicability of GFODDs.

Finally, efficient algorithms for model evaluation play an important role in GFODD implementations.
The work of  \cite{JoshiKhRaTaFe13} provides a generic algorithm inspired by the variable elimination algorithm from probabilistic inference. Several application areas, including databases, AI, and probabilistic inference have shown that more efficient algorithms are possible when the input formula or graph have certain structural properties such as low graph width. 
We therefore conjecture that similar notions can be developed to provide more efficient evaluation for GFODDs with some structural properties. Coupled with model checking reductions, this can lead to realizations of GFODD systems that combine high expressive power going beyond first order logic with efficient algorithms. 

\section*{Acknowledgments} 
This work is partly supported by NSF grant IIS 0964457.
Some of this work was done when Roni Khardon was on sabbatical leave at Trinity College Dublin.

\bibliographystyle{plain}
\bibliography{foddbib}

\begin{thebibliography}{10}

\bibitem{BaharFrGaHaMaPaSo93}
R.~Bahar, E.~Frohm, C.~Gaona, G.~Hachtel, E.~Macii, A.~Pardo, and F.~Somenzi.
\newblock Algebraic decision diagrams and their applications.
\newblock In {\em IEEE/ACM International Conference on Computer-Aided Design},
  pages 188--191, 1993.

\bibitem{BoutilierRePr01}
C.~Boutilier, R.~Reiter, and B.~Price.
\newblock Symbolic dynamic programming for first-order {MDPs}.
\newblock In {\em Proceedings of the International Joint Conference on
  Artificial Intelligence}, pages 690--700, 2001.

\bibitem{Bryant86}
R.~Bryant.
\newblock Graph-based algorithms for boolean function manipulation.
\newblock {\em IEEE Transactions on Computers}, C-35(8):677--691, 1986.

\bibitem{ChangKe90}
C.~Chang and J.~Keisler.
\newblock {\em Model Theory}.
\newblock Elsevier, Amsterdam, Holland, 1990.

\bibitem{LiftedWMC}
G.~Van den Broeck, N.~Taghipour, W.~Meert, J.~Davis, and L.~De Raedt.
\newblock Lifted probabilistic inference by first-order knowledge compilation.
\newblock In {\em Proceedings of the International Joint Conference on
  Artificial Intelligence}, pages 2178--2185, 2011.

\bibitem{Fagin93}
R.~Fagin.
\newblock Finite-model theory - a personal perspective.
\newblock {\em Theoretical Computer Science}, 116(1{\&}2):3--31, 1993.

\bibitem{Gordon2009}
G.~J. Gordon, S.~A. Hong, and M.~Dud\'{\i}k.
\newblock First-order mixed integer linear programming.
\newblock In {\em Proceedings of the Workshop on Uncertainty in Artificial
  Intelligence}, pages 213--222, 2009.

\bibitem{Graedel03a}
E.~Gr{\"a}del.
\newblock {Decidable fragments of first-order and fixed-point logic. {F}rom
  prefix-vocabulary classes to guarded logics.}
\newblock In {\em {Proceedings of Kalm{\'a}r Workshop on Logic and Computer
  Science, Szeged}}, 2003.

\bibitem{GrooteTv03}
J.~Groote and O.~Tveretina.
\newblock Binary decision diagrams for first order predicate logic.
\newblock {\em Journal of Logic and Algebraic Programming}, 57:1--22, 2003.

\bibitem{HescottKh14aaai}
B.~J. Hescott and R.~Khardon.
\newblock The complexity of reasoning with {FODD} and {GFODD}.
\newblock In {\em Proceedings of the Twenty-Eighth {AAAI} Conference on
  Artificial Intelligence, July 27 -31, 2014, Qu{\'{e}}bec City, Qu{\'{e}}bec,
  Canada.}, pages 1056--1062, 2014.

\bibitem{HoeyStHuBo99}
J.~Hoey, R.~St-Aubin, A.~Hu, and C.~Boutilier.
\newblock {SPUDD}: Stochastic planning using decision diagrams.
\newblock In {\em Proceedings of the Workshop on Uncertainty in Artificial
  Intelligence}, pages 279--288, 1999.

\bibitem{HolldoblerKaSk2006}
S.~H{\"o}lldobler, E.~Karabaev, and O.~Skvortsova.
\newblock {FluCaP:} a heuristic search planner for first-order {MDPs}.
\newblock {\em Journal of Artificial Intelligence Research}, 27:419--439, 2006.

\bibitem{HolldoblerSk2004}
S.~H{\"o}lldobler and O.~Skvortsova.
\newblock A logic-based approach to dynamic programming.
\newblock In {\em AAAI-04 workshop on learning and planning in {Markov
  Processes} -- advances and challenges}, 2004.

\bibitem{HomerSelman01}
S.~Homer and A.~L. Selman.
\newblock {\em Computabilty and Complexity Theory}.
\newblock Springer-Verlag, New York, 2001.

\bibitem{JoshiKeKh10}
S.~Joshi, K.~Kersting, and R.~Khardon.
\newblock Self-taught decision theoretic planning with first order decision
  diagrams.
\newblock In {\em Proceedings of the International Conference on Automated
  Planning and Scheduling}, pages 89--96, 2010.

\bibitem{JoshiKeKh11}
S.~Joshi, K.~Kersting, and R.~Khardon.
\newblock Decision theoretic planning with generalized first order decision
  diagrams.
\newblock {\em Artificial Intelligence}, 175:2198--2222, 2011.

\bibitem{JoshiKh11}
S.~Joshi and R.~Khardon.
\newblock Probabilistic relational planning with first order decision diagrams.
\newblock {\em Journal of Artificial Intelligence Research}, pages 231--266,
  2011.

\bibitem{JoshiKhRaTaFe13}
S.~Joshi, R.~Khardon, A.~Raghavan, P.~Tadepalli, and A.~Fern.
\newblock Solving relational {MDPs} with exogenous events and additive rewards.
\newblock In {\em Proceedings of the European Conference on Machine Learning},
  pages 178--193, 2013.

\bibitem{JoshiSKS12}
Saket Joshi, Paul~W. Schermerhorn, Roni Khardon, and Matthias Scheutz.
\newblock Abstract planning for reactive robots.
\newblock In {\em Proceedings of the IEEE Conference on Robotics and
  Automation}, pages 4379--4384, 2012.

\bibitem{KerstingOtDe04}
K.~Kersting, M.~Van Otterlo, and L.~{De Raedt}.
\newblock Bellman goes relational.
\newblock In {\em Proceedings of the International Conference on Machine
  Learning}, pages 465--472, 2004.

\bibitem{KMR99}
R.~Khardon, H.~Mannila, and D.~Roth.
\newblock Reasoning with examples: Propositional formulae and database
  dependencies.
\newblock {\em Acta Informatica}, pages 267--286, 1999.

\bibitem{KhardonRo94m}
R.~Khardon and D.~Roth.
\newblock Reasoning with models.
\newblock {\em Artificial Intelligence}, 87:187--213, 1996.

\bibitem{KhardonRo94l}
R.~Khardon and D.~Roth.
\newblock Learning to reason.
\newblock {\em Journal of the ACM}, 44(5):697--725, 1997.

\bibitem{Libkin04}
L.~Libkin.
\newblock {\em Elements of Finite Model Theory}.
\newblock Springer, 2004.

\bibitem{Lloyd87}
J.W. Lloyd.
\newblock {\em Foundations of Logic Programming}.
\newblock Springer Verlag, 1987.
\newblock Second Edition.

\bibitem{papadimitriou94}
C.~H. Papadimitriou.
\newblock {\em Computational complexity}.
\newblock Addison-Wesley, 1994.

\bibitem{Problog}
L.~De Raedt, A.~Kimmig, and H.~Toivonen.
\newblock Problog: A probabilistic prolog and its application in link
  discovery.
\newblock In {\em Proceedings of the International Joint Conference on
  Artificial Intelligence}, pages 2462--2467, 2007.

\bibitem{MLN}
Matthew Richardson and Pedro Domingos.
\newblock Markov logic networks.
\newblock {\em Machine Learning}, 62(1-2):107--136, 2006.

\bibitem{RussellNo95}
S.~Russell and P.~Norvig.
\newblock {\em Artificial Intelligence: a modern approach (3rd Edition)}.
\newblock Prentice Hall, 2001.

\bibitem{SannerBo09}
S.~Sanner and C.~Boutilier.
\newblock Practical solution techniques for first order {MDP}s.
\newblock {\em Artificial Intelligence}, 173:748--788, 2009.

\bibitem{Schaefer01}
M.~Schaefer.
\newblock Graph {R}amsey theory and the polynomial hierarchy.
\newblock {\em Journal of Computers and System Sciences}, 62(2):290--322, 2001.

\bibitem{sipser}
M.~Sipser.
\newblock {\em {Introduction to the Theory of Computation}}.
\newblock Thomson South-Western, 3rd edition, 2012.

\bibitem{WangJoKh08}
C.~Wang, S.~Joshi, and R.~Khardon.
\newblock First order decision diagrams for relational {MDP}s.
\newblock {\em Journal of Artificial Intelligence Research}, 31:431--472, 2008.

\end{thebibliography}

\newpage
\appendix
\section{Detailed Proofs}

\subsection{Proof of Theorem~\ref{thm:gfodd-min-sat}}

\begin{proof}
We reduce QBF satisfiability with $k\geq 3$ alternations of quantifiers to satisfiability of $\min$-$(k-1)$-alternating GFODDs. 
The reduction borrows most of the construction from the proof of Theorem~\ref{thm:gfodd-equiv}, swapping between min and max aggregation to adjust it to the new context. 

In particular, here we assume a QBF whose first quantifier is $\exists$,  that is,
$\exists x_1, Q_2x_2 \ldots Q_mx_m$ $f(x_1,x_2,\ldots,x_m)$ where this form has 
$k$ blocks of quantifiers. To simplify the notation it is convenient to group adjacent variables having the same quantifiers into groups so that the QBF 
has the form
$Q_1\xx_1 \ldots Q_k\xx_k f(\xx_1,\xx_2,\ldots,\xx_k)$ where $\xx_i$ refers to a set of variables.

We next define a notion of ``legal interpretations" for our diagrams. 
A legal interpretation embeds the binary interpretation $I^*$ from previous proofs and in addition includes a truth setting for all the variables in the first $\exists$ block of the QBF. 
The reduction constructs diagrams $B_1$, $B_2$, and $B=apply(B_1,B_2,\wedge)$ such that the following claims hold:
\\ 
(C1) for all $I$, $\map_{B_1}(I)=1$ if and only if $I$ is legal.
\\
(C2)
if $I$ is legal and it embeds the substitution $\xx_1=\alpha$ then $\map_{B_2}(I)=1$ if and only if $Q_2\xx_2 \ldots Q_k\xx_k f((\xx_1=\alpha),\xx_2,\ldots,\xx_k) = 1$.

We then output the diagram $B$ for GFODD satisfiability. 
Now, if the QBF is satisfied then there exists a value $\alpha$ such that for $\xx_1=\alpha$ we have that $Q_2\xx_2 \ldots Q_k\xx_k f((\xx_1=\alpha),\xx_2,\ldots,\xx_k) = 1$. Therefore, by C2, for the legal $I$ that embeds $\alpha$, 
$\map_{B_2}(I)=1$. 

On the other hand, if the QBF is not satisfied then for all substitutions $\xx_1=\alpha$ we have $Q_2\xx_2 \ldots Q_k\xx_k f((\xx_1=\alpha),\xx_2,\ldots,\xx_k) = 0$. Therefore, by C2, 
all legal $I$ (and any $\alpha$ they embed)
$\map_{B_2}(I)=0$ and by Theorem~\ref{thm:gfodd-combine}
we also have $\map_{B}(I)=0$. By C1, $\map_{B}(I)=0$ for non-legal interpretations. Therefore, $B$ is not satisfiable.

We now proceed with the reduction, starting first with a simplified construction ignoring ordering, and then elaborating to enforce these constraints. 
The set of predicates includes $P_T()$ and for every QBF variable $x_i$ in the first $\exists$ block we use a predicates $P_{x_i}()$. 
Notice that each $x_i$ is a member of $\xx_1$ (the first $\exists$ group) where the typeface distinguishes the individual variables in the first block, from blocks of variables.
In the simplified construction, a legal interpretation has exactly two objects, say $a$ and $b$, where $P_T(a)\not = P_T(b)$ and where for each $P_{x_i}()$ we have $P_{x_i}(a) = P_{x_i}(b)$. That is, the assignment of an object to $v$ in $P_T(v)$ simulates an assignment to Boolean values, but the truth value of $P_{x_i}(v)$ is the same regardless of which object is assigned to $v$. 

Consider our example QBF modified to start with $\exists$ quantifier 
$\exists x_1 \forall x_2 \exists x_3 \forall x_4 (x_1 \vee \bar{x_2}  \vee x_4) \wedge (\bar{x_1} \vee x_2 \vee x_3) \wedge (x_1 \vee x_3 \vee \bar{x_4})$.
The first block includes only the variable $x_1$ and the following interpretation is legal:
$I = \{ [a, b], P_T(a)=$true, $P_T(b)=$false, $P_{x_1}(a) = P_{x_1}(b)=$false$\}$.

The diagram $B_1$ has three portions where the first two are exactly as in the previous proof, thus verifying that $I$ has two objects and that $P_T()$ behaves as stated. The third portion verifies that each $P_{x_i}()$ behaves as stated, where we use a sequence of blocks, one for each $P_{x_i}()$.
The combined diagram $B_1$ is shown in Figure~\ref{fig:gfodd-equiv-B1} and the aggregation function is $\min_{z_1,z_2,z_3},\max_{y_1,y_2}$.

To see that C1 holds consider all possible cases for non-legal interpretations. If $I$ has at most one object the map is 0 for all valuations and thus the aggregation is 0. If $I$ has at least 3 objects, then the min aggregation over $z$ yields 0. If $I$ has 2 objects but it violates the condition on $P_T$ or $P_{x_i}$ then again the map is 0 for any valuation and the aggregation is 0. On the other hand, if $I$ is legal, then for any assignment to $z$, the correct mapping to $y_1,y_2$ yields 1. Therefore the aggregation over $z$ yields 1. 

The diagram $B_2$ is constructed by modifying $B_2$ from the proof of Theorem~\ref{thm:gfodd-eval}. The first modification is to 
handle the first $\exists$ block differently.  
As it turns out, all we need to do is replace the $\max$ aggregation for the $\ww_1$ block with $\min$ aggregation and accordingly replace the default value on that block to 1. The modified variable consistency diagram is shown in [we flip leaf values in diagram] Figure~\ref{fig:gfodd-equiv-consistency}. 
The clause blocks have the same structure as in the previous construction but use $P_{x_i}(V_{(i_1,i_2)})$ when $x_i$ is a $\exists$ variable from the first block and use $P_{T}(V_{(i_1,i_2)})$ otherwise.
This is shown in Figure~\ref{fig:gfodd-equiv-clauses}. 
$B_2$ includes the variable consistency blocks followed by the clause blocks. Note that the new clause blocks are not sorted in any consistent order because the predicates $P_{x_i}()$ and $P_{T}()$ appear in an arbitrary ordering determined by the appearance of literals in the QBF. Other than this violation, all other portions of the diagrams described are sorted where the predicate order has $= \prec P_T \prec P_{x_i}$ and where variables $w_i$ are before $V_{(i_1,i_2)}$ and variables within group are sorted lexicographically. 
The combined aggregation function is $\min_{\ww_1}, \min_{\ww_2},\max_{\ww_3},\ldots,Q^A_{\ww_k}$.

We next show that claim C2 holds, which will complete the proof of the simplified construction. 
Consider any legal $I$, let the corresponding truth values for variables in $\xx_1$ be denoted $\alpha$, and consider valuations for the QBF extending $\xx_1=\alpha$. Now consider any valuation $v$ to the remaining variables in the QBF and the induced substitution to the GFODD variables $\zeta(v)$ that is easily identified from the construction. Add any consistent group assignment to $\ww_1$ (that is, we assign $a$ or $b$ to all variables in that group)
to $\zeta(v)$ to get $\hat{\zeta}(v)$.
By the construction of $B_2$ we have that $f([\xx_1=\alpha ,(\xx_2,\ldots,\xx_k)=v]) = \map_{B_2}(I,\hat{\zeta}(v))$. To see this note that there are no quantifiers in this expression, there is a 1-1 correspondence between the valuations of $\xx_2,\ldots,\xx_k$ and $\ww_2,\ldots,\ww_k$, and that as long as the assignment to the $\ww_1$ block is group consistent it does not affect the value returned. 
We call this set of valuations, that arise as translations of substitutions for QBF variables, {\em Group 1}.

The second group, {\em Group 2}, includes valuations that do not arise as $\hat{\zeta}(v)$ and therefore they violate at least one of the consistency blocks. 
Let  $\zeta$ be such a valuation
and let $Q^A_j$ be the first block from the left whose constraint is violated. By the construction of $B_2$, in particular the order of equality along paths in the GFODD, we have that the evaluation of the diagram on $\zeta$ ``exits" to a default value on the first violation. Therefore, if $j=1$, that is the violation is in the block of $\ww_1$ $\map_{B}(I,\zeta)=1$ and for $j\geq 2$ if
$Q_j$ is a $\forall$ then $\map_{B}(I,\zeta)=1$ and if $Q_j$ is a $\exists$ then $\map_{B}(I,\zeta)=0$.

We can now show the correspondence in truth values. Consider any partition of the blocks $2,\ldots, k$ into a prefix $2,\ldots, j$ and remainder $(j+1),\ldots, k$, and any valuation $v$ to the prefix blocks. We claim that for all such partitions 
\begin{eqnarray*}
& & 
Q_{j+1} \xx_{j+1}, \ldots, Q_k \xx_k, f(\xx_1=\alpha,(\xx_2,\ldots,\xx_j)=v, \xx_{j+1},\ldots,\xx_k)
\\
& = &
Q_{j+1} \ww_{j+1}, \ldots, Q_k \ww_k, \map_{B_2}(I,[\ww_1=a,(\ww_2,\ldots,\ww_j)=\zeta(v),(\ww_{j+1},\ldots,\ww_k)]).
\end{eqnarray*}
Note that when $j=1$, that is, the prefix is empty, this yields that $Q_{2} \xx_{2}, \ldots, Q_k \xx_k, f(\xx_1=\alpha,\xx_2,\ldots,\xx_k)$ is equal to 
$Q^A_{2} \ww_{2}, \ldots, Q^A_k \ww_k, \map_{B_2}(I,[\ww_1=a,\ww_2,\ldots,\ww_k)])$.
Now because the default value for violations of $\ww_1$ is 1 and because the aggregation for $\ww_1$ is $\min$ the latter expression is equal to 
$Q^A_{1} \ww_{1}, \ldots, Q^A_k \ww_k, \map_{B_2}(I,[\ww_1,\ww_2,\ldots,\ww_k)])$.
This means that $\map_{B_2}(I)$ is equal to 
$Q_{2} \xx_{2}, \ldots, Q_k \xx_k, f(\xx_1=\alpha,\xx_2,\ldots,\xx_k)$
completing the proof of C2.

As above, we prove the claim by induction, backwards from $k$ to 1. 
For the base case, $j=k-1$ and the second part includes only one block. 
Consider any concrete substitution for $\ww_k$ participating in the equation. 
If the substitution is in group 2, then the map is 1 if $Q_j$ is $\forall$ and is 0 otherwise. 
Therefore, the value of $Q^A_k \ww_k, \map_{B_2}(I,[\ww_1=a,(\ww_2,\ldots,\ww_{k-1})=\zeta(v),(\ww_k)])$ is the same as that value when restricted to substitutions in group 1. 
But, as argued above, for group 1 this is exactly the same value returned by the QBF.

For the inductive step, the valuation $v$ covers the first $j-1$ blocks. 
Note that, by the inductive assumption, for any group 1 substitution $v_j$ for $\xx_j$ and corresponding, $\zeta(v_j)$ for $\ww_j$,
$
Q_{j+1} \xx_{j+1}, \ldots, Q_k \xx_k,$ $ f(\xx_1=\alpha,(\xx_2,\ldots,\xx_{j-1})=v, (\xx_{j}=v_j),\xx_{j+1},\ldots,\xx_k)$
 $= Q^A_{j+1} \ww_{j+1}, \ldots, Q^A_k \ww_k,$ $\map_{B_2}(I,[\ww_1=a,(\ww_2,\ldots,$ $\ww_{j-1})=\zeta(v),(\ww_{j}=\zeta(v_j)),(\ww_{j+1},\ldots,\ww_k)])
$.
On the other hand, for any group 2 substitution $\zeta_j$ for $\ww_j$ and any values for $(\ww_{j+1},\ldots,\ww_k))$ we have that
the violating block for the corresponding combined $\zeta$ is block $j$ and therefore
$\map_{B_2}(I,[\ww_1=a,(\ww_2,\ldots,\ww_{j-1})=\zeta(v),(\ww_{j}=\zeta_j),(\ww_{j+1},\ldots,\ww_k)])$ gets the default value for that block. 
Therefore, as in the base case, the map is determined by group 1 valuations, which are in turn identical to the QBF value and 
$
Q_{j} \xx_{j}, \ldots, Q_k \xx_k, f(\xx_1=\alpha,(\xx_2,\ldots,\xx_{j-1})=v, (\xx_{j},\ldots,\xx_k))
= Q^A_{j} \ww_{j}, \ldots, Q^A_k \ww_k, \map_{B_2}(I,[\ww_1=a,(\ww_2,\ldots,\ww_{j-1})=\zeta(v),(\ww_{j},\ldots,\ww_k)])
$ as required. Therefore, the claim on the correspondence of values holds, and as argued above this completes the proof of C2.

\paragraph{Extending the reduction to handle ordering:}
The main idea in the extended construction is to replace the unary predicates $P_T$ and $P_{x_i}$ with one binary predicate $q(\cdot,\cdot)$ where the ``second argument" in $q()$ serves to identify the corresponding predicate and hence its truth value. In addition we force $q()$ to be symmetric so that for any $A$ and $B$ the truth value of $q(A,B)$ is the same as the truth value of $q(B,A)$. In this way we have freedom to use either $q(A,B)$ or $q(B,A)$ as the node label which provides sufficient flexibility to handle the ordering issues. To implement this idea we need a few additional constructions.

Let the set of variables in the first $\exists$ block of the QBF be $x_1,x_2,\ldots,x_\ell$. 
The set of predicates in the extended reduction includes
unary predicates $T(),  y_1(), y_2(), x_1(), x_2(),\ldots,x_\ell(), $ and one binary predicate $q(\cdot,\cdot)$. A legal interpretation includes exactly $\ell+3$  objects which are uniquely identified by the unary predicates. We therefore slightly abuse notation and use the same symbols for the objects and predicates. 
In particular, the  atoms $y_1(y_1), y_2(y_2)$, $T(T)$, and $x_1(x_1), x_2(x_2), \ldots, x_\ell(x_\ell)$ are true in the interpretation and only these atoms are true for these unary predicates (e.g., $x_1(T)$ is false).
The truth values of $q()$ reflect the simulation of $P_T()$ and $P_{x_i}()$ in addition to being symmetric. 
Thus  the truth values of $q(y_1,T)$ and $q(T,y_1)$ are the same and they are the negation of the truth values of $q(y_2,T)$ and $q(T,y_2)$. 
For all $i$, the truth values of $q(y_1,x_i)$, $q(x_i,y_1)$, $q(y_2,x_i)$ and $q(x_i,y_2)$ are the same.
The truth values of other instances of $q()$, for example, $q(x_2,T)$, can be set arbitrarily. For example, the following interpretation is legal when $\ell=1$:
$I = \{ [a, b, c, d], y_1(a), y_2(b), T(c), x_1(d), q(c,a)=q(a,c)=$true, $q(c,b)=q(b,c)=$false, $q(d,a)=q(a,d)=q(d,b)=q(b,d)=$false, $q(\cdot,\cdot)=$false$\}$ where $q(\cdot,\cdot)$ refers to any instance not explicitly mentioned in the list.

We next define the diagram $B_1$ that is satisfied only in legal interpretations. 
We enforce exactly $\ell+3$ objects using two complementary parts. The first includes ${\ell+4 \choose 2}$ inequalities on a new set of $\ell+4$ variables $z_1,\ldots,z_{\ell+4}$ with min aggregation. If we identify $\ell+4$ distinct objects we set the value to 0.
This is shown in Figure~\ref{fig:gfodd-equiv-few-objects}.

To enforce at least $\ell+3$ objects and identify them we use the following gadget. For each of the unary predicates we have a diagram identifying its object and testing its uniqueness where we use both max and min variables. This is shown for the predicate $T()$ in Figure~\ref{fig:gfodd-equiv-unary-T}. The node $T(T)$ with max variable $T$ identifies the object $T$. The nodes $T(r_1),T(r_2)$ with min variables $r_1,r_2$ make sure that $T$ holds for at most one object. 
We chain the diagrams together as shown in Figure~\ref{fig:gfodd-equiv-sorted-unary} where the variables $r_1,r_2$ are shared among all unary predicates. The corresponding aggregation function is $ \min_{r_1,r_2}, \max_{T,y_1,y_2,x_1,\ldots, x_{\ell}}$. 
This diagram associates each of the $\ell+3$ objects with one of the unary predicates and in this way provides a reference to specific objects in the interpretation.

The symmetry gadget for $q()$ is shown in Figure~\ref{fig:gfodd-equiv-q-symmetric} where the variables $m_1,m_2$ are min variables. If an input interpretation has two objects $A,B$ where $q(A,B)$ has a truth value different than $q(B,A)$ then minimum aggregation will map the interpretation to 0.

The truth value gadget for the simulation of $P_T$ is shown in Figure~\ref{fig:gfodd-equiv-q-of-PT} and  
the truth value gadget for the simulation of $P_{x_i}$ is shown in Figure~\ref{fig:gfodd-equiv-q-of-xi}. 
These diagram fragments refer to variables in other portions and they will be connected and aggregated together.

Finally, we add a component that is not needed for verifying that $I$ is legal but will be useful later when we include the clauses.
In particular, we include a variable consistency block which is similar to the one in the simple construction but where we force $\ww_i$ to bind to the same object as $y_1$ or $y_2$. This is shown in [we flip leaf values in diagram] Figure~\ref{fig:gfodd-equiv-sorted-consistency}.

We chain the diagrams together 
as shown in Figure~\ref{fig:gfodd-equiv-sorted-B1} to get $B_1$ where we have moved the node labelled $r_1=r_2$ to be above the block consistency gadget.
[here too leaf values in consistency block need to be flipped]
Note that the diagram is sorted where for predicate order we have $= \prec T \prec y_1 \prec y_2 \prec x_i \prec q$ and for variables we have 
$z_i,T,y_i,x_i \prec r_j \prec w_l$, and $y_i \prec V_{(i_1,i_2)}$, and $m_i \prec T \prec x_j$.
The complete aggregation function is 
$$
\min_{m_1,m_2}\min_{r_1,r_2}\min_{z_1,\ldots,z_{\ell+4}}
\min_{\ww_1,\ww_2}
\max_{T,y_1,y_2,x_1,\ldots,x_{\ell}} 
\max_{\ww_3} \ldots Q^A_{\ww_k}
$$

We next show that the claim C1 holds for the extended reduction.\footnote{
Note that we can remove the variable consistency block which complicates the argument (and does not test anything per legality of $I$) and still maintain correctness of C1. But including it here simplifies the argument for diagram $B$ below and thus simplifies the overall proof.
}

We first consider all possible cases for illegal interpretations. 
\begin{itemize}
\item
If the interpretation has $\geq \ell+4$ objects then the top portion of the diagram yields 0 for some valuation of $z$'s
regardless of the values of other variables. Therefore aggregation over $z$ yields 0, and then aggregations over $r$ and $m$ yield 0.
\item
Next, consider the case where the interpretation has $\leq \ell+3$ objects but one of the unary predicates is always false (i.e., it does not ``pick" any object).
The situation is similar to the previous case, but here we get a value of 1 for $\zeta$ where $r_1=r_2$ or where there is a block violation for some $\ww_i$ block with $\min$ aggregation.

Consider any valuation $\zeta_p$ to the prefix of variables up to $x_\ell$ in aggregation order
which is block consistent on ${\ww_1,\ww_2}$, has  
$r_1\not = r_2$,
and any valuation
for the other variables.
We have two cases: if $\zeta$ is in group 1 the map is 0 because the unary predicate test fails, and if $\zeta$ is in group 2 the map is the default value of the first violated block $\ww_i$. Now, because there is at least one group 1 valuation, we can argue inductively backwards from $k$ that all aggregations from $k$ to $3$ yield 0.
Therefore, the maximization over ${T,y_1,y_2,x_1,\ldots,x_{\ell}}$ yields 0, and the minimization over $z,\ww_1,\ww_2$ yields 0. 
In the minimization over $r$, we have 0 for the cases where 
$r_1\not=r_2$, and 1 otherwise. Therefore the minimization over $r$ yields 0 as well, and the minimization over $m$ yields 0. 
\item
Next consider the case where the previous two conditions are satisfied but where one of the unary predicates holds for two or more objects. The argument is identical to the previous case, except that $\zeta$
has  the violating pair for $r_1, r_2$ (instead of any $r_1\not = r_2$).
\item
Next consider any interpretation that has exactly $\ell+3$ objects and where the unary predicates identify the objects corresponding to ${T,y_1,y_2,x_1,\ldots,x_{\ell}}$ but where $q()$ is not symmetric.
In this case we consider (as in the second item on this list)
any valuation $\zeta_p$ to the prefix of variables up to $x_\ell$ in aggregation order
which is block consistent on ${\ww_1,\ww_2}$, has  
$r_1\not = r_2$, has the violating pair for $m_1,m_2$
and any valuation
for the other variables.
As above, we can argue that the aggregations down to $\ww_3$ and then down to $r$ yield 0 when $r_1\not = r_2$ and $m_1,m_2$ has the violating pair. 
As a result, for this setting of $m$, minimization over $r$ yields 0, and therefore the minimization over $m$ also yields 0. 
\item
The only remaining cases are
interpretations that are illegal only because the $q()$ simulation of $P_T()$ or $P_{x_i}()$ is not as required. In this case, the same argument as in the 2nd item in this list shows that the value is 0. 
\end{itemize}

Therefore, if $I$ is illegal then $\map_{B_1}(I)=0$.
Consider next any legal interpretation and the intended valuation $\zeta_p$ to $T,y_1,y_2,x_1,\ldots,x_{\ell}$. For any group 1 extension of this valuation and any valuation of the variables $m,z,r$ the diagram yields 1. Therefore, we can argue inductively that all $\ww_i$ aggregations down to $\ww_3$ yield 1.
This implies that the maximization over $T,y_1,y_2,x_1,\ldots,x_{\ell}$ yields 1, and the remaining minimizations yield 1. 
This completes the proof of C1.

The diagram $B$ is obtained by adding the clause blocks below the $q()$ tests of $B_1$.
The clause blocks have the same structure as above but they use $q()$ instead of $P_T()$ and $P_{x_i}()$. 
This is shown in Figure~\ref{fig:gfodd-equiv-sorted-clauses}.
The final diagram for $B$ is shown in Figure~\ref{fig:gfodd-equiv-sorted-B} [as above the only difference is that we flip the leaf values in the consistency block] and it has the same aggregation function as $B_1$. 
Note that the diagram is also sorted using the same order as in $B_1$ where we have in addition that $x_i \prec V_{(i_1,i_2)}$. We next show that: 
\\
(C2') $\map_B(I)=1$ if and only if 
$I$ is legal and $Q_2\xx_2 \ldots Q_k\xx_k f((\xx_1=\alpha),\xx_2,\ldots,\xx_k) = 1$
where $I$
embeds the substitution $\xx_1=\alpha$. 

The same argument from the simple case can be used to show that C1,C2' imply that $B_1$ and $B$ are equivalent iff the QBF is satisfied.

To prove the claim first note that by the construction $B$ adds more tests on the path to a 1 leaf of $B_1$ and does not add any other paths to a value of 1. Therefore, 
for any $I$ and any $\zeta$, if $\map_B(I,\zeta)=1$ then $\map_{B_1}(I,\zeta)=1$ and as a result if $\map_B(I)=1$ then $\map_{B_1}(I)=1$. Therefore, by C1, if $\map_B(I)=1$ then $I$ is legal.
Next, consider any legal $I$ and any unintended valuation $\zeta_p$ for the block of variables $T,y_1,y_2,x_1,\ldots,x_{\ell}$.
As above, the aggregated value down to $\ww_3$ for this prefix is 0. Therefore, if $\map_B(I)=1$ and thus the $\max$ aggregation over these variables yields 1 (for the prefix valuation for $M,R,Z,\ww_1,\ww_2$), it must be through the intended valuation for $T,y_1,y_2,x_1,\ldots,x_{\ell}$. 
However, 
when $T,y_1,y_2,x_1,\ldots,x_{\ell}$ are fixed to their intended values,
the portions of the diagram testing for $\leq q+3$ objects, the uniqueness of special objects $T,y_1,y_2,x_1,\ldots,x_k$, the symmetry of $q()$ and its simulation of $P_T()$ and $P_{x_i}$
do not affect the final value in the sense that a valuation reaching them always continues to the next block. 
Therefore, if we restrict attention to such valuations we can shrink $B$ by removing these blocks and still obtain the same aggregated value. Now we observe that there is a 1-1 correspondence between valuations and values of the resulting $B$ to valuations and values of $B_2$ in the simple construction
(where we extend the notion of block consistent to enforce that $\ww_i$ bind to $y_1,y_2$). Therefore,
the claim holds by C2 of the simple construction. 
\end{proof}

\subsection{Proof of Theorem~\ref{thm:gfodd-min-sat-k1}}

\begin{proof}
We reduce QBF satisfiability with $2$ alternations of quantifiers to satisfiability of min GFODDs. 
The construction follows the same steps as in Theorem~\ref{thm:gfodd-min-sat} 
except that $k=2$ and we swap the $\max$ variables with constants.

In particular, here we assume a QBF whose first quantifier is $\exists$,  that is,
$\exists x_1, Q_2x_2 \ldots Q_mx_m$ $f(x_1,x_2,\ldots,x_m)$ where this form has 
$2$ blocks of quantifiers. To simplify the notation it is convenient to group adjacent variables having the same quantifiers into groups so that the QBF 
has the form
$\exists \xx_1 \forall \xx_2 f(\xx_1,\xx_2)$ where $\xx_i$ refers to a set of variables.

We next define a notion of ``legal interpretations" for our diagrams. 
A legal interpretation embeds the binary interpretation $I^*$ from previous proofs and in addition includes a truth setting for all the variables in the first $\exists$ block of the QBF. 
The reduction constructs diagrams $B_1$, $B_2$, and $B=apply(B_1,B_2,\wedge)$ such that the following claims hold:
\\ 
(C1) for all $I$, $\map_{B_1}(I)=1$ if and only if $I$ is legal.
\\
(C2)
if $I$ is legal and it embeds the substitution $\xx_1=\alpha$ then $\map_{B_2}(I)=1$ if and only if $\forall \xx_2 f((\xx_1=\alpha),\xx_2) = 1$.

We then output the diagram $B$ for GFODD satisfiability. 
Now, if the QBF is satisfied then there exists a value $\alpha$ such that for $\xx_1=\alpha$ we have that $\forall\xx_2 f((\xx_1=\alpha),\xx_2) = 1$. Therefore, by C2, for the legal $I$ that embeds $\alpha$, 
$\map_{B_2}(I)=1$. 

On the other hand, if the QBF is not satisfied then for all substitutions $\xx_1=\alpha$ we have $\forall \xx_2 f((\xx_1=\alpha),\xx_2) = 0$. Therefore, by C2, 
all legal $I$ (and any $\alpha$ they embed)
$\map_{B_2}(I)=0$ and by Theorem~\ref{thm:gfodd-combine}
we also have $\map_{B}(I)=0$. By C1, $\map_{B}(I)=0$ for non-legal interpretations. Therefore, $B$ is not satisfiable.

We now proceed with the reduction, starting first with a simplified construction ignoring ordering of node labels, 
and then elaborating to enforce these constraints. 
The set of predicates includes $P_T()$ which is as before and for every QBF variable $x_i$ in the first $\exists$ block we use a predicates $P_{x_i}()$. 
Notice that each $x_i$ is a member of $\xx_1$ (the first $\exists$ group) where the typeface distinguishes the individual variables in the first block, from blocks of variables.
In the simplified construction, a legal interpretation has exactly two objects, say $a$ and $b$, where $P_T(a)\not = P_T(b)$ and where for each $P_{x_i}()$ we have $P_{x_i}(a) = P_{x_i}(b)$. That is, the assignment of an object to $v$ in $P_T(v)$ simulates an assignment to Boolean values, but the truth value of $P_{x_i}(v)$ is the same regardless of which object is assigned to $v$. In addition we extend the signature to include $y_1,y_2$ as constants and a legal interpretation maps $y_1,y_2$ to $a,b$ in a 1-1 manner.

Consider our example QBF modified to start with $\exists$ quantifier and to have two alternations
$\exists x_1 \forall x_2 \forall x_3 \forall x_4 (x_1 \vee \bar{x_2}  \vee x_4) \wedge (\bar{x_1} \vee x_2 \vee x_3) \wedge (x_1 \vee x_3 \vee \bar{x_4})$.
The first block includes only the variable $x_1$ and the following interpretation is legal:
$I = \{ [a, b], [y_1/a, y_2/b], P_T(a)=$true, $P_T(b)=$false, $P_{x_1}(a) = P_{x_1}(b)=$false$\}$.

The diagram $B_1$ has three portions where the first two are exactly as in the previous proof, thus verifying that $I$ has two objects and that $P_T()$ behaves as stated. The third portion verifies that each $P_{x_i}()$ behaves as stated, where we use a sequence of blocks, one for each $P_{x_i}()$.
The combined diagram $B_1$ is shown in Figure~\ref{fig:gfodd-equiv-B1} and the aggregation function is $\min_{z_1,z_2,z_3}$.

To see that C1 holds consider all possible cases for non-legal interpretations. If $I$ has at most one object the map is 0 for all valuations and thus the aggregation is 0. The same holds if $y_1,y_2$ are mapped to the same object. If $I$ has at least 3 objects, then the min aggregation over $z$ yields 0. If $I$ has 2 objects but it violates the condition on $P_T$ or $P_{x_i}$ then again the map is 0 for any valuation and the aggregation is 0. On the other hand, if $I$ is legal, then for any assignment to $z$, the diagram yields 1. Therefore the aggregation over $z$ yields 1. 

The diagram $B_2$ is constructed by modifying $B_2$ from the proof of Theorem~\ref{thm:gfodd-eval}. The first modification is to
handle the first $\exists$ block differently.  
As it turns out, all we need to do is replace the $\max$ aggregation for the $\ww_1$ block with $\min$ aggregation and accordingly replace the default value on that block to 1. The modified variable consistency diagram is shown in [we modify leaf values again because of the change of quantifiers in the current example; in this construction all exit values are 1] Figure~\ref{fig:gfodd-equiv-consistency}. 
The clause blocks have the same structure as in the previous construction but use $P_{x_i}(V_{(i_1,i_2)})$ when $x_i$ is a $\exists$ variable from the first block and use $P_{T}(V_{(i_1,i_2)})$ otherwise.
This is shown in Figure~\ref{fig:gfodd-equiv-clauses}. 
$B_2$ includes the variable consistency blocks followed by the clause blocks. Note that the new clause blocks are not sorted in any consistent order because the predicates $P_{x_i}()$ and $P_{T}()$ appear in an arbitrary ordering determined by the appearance of literals in the QBF. Other than this violation, all other portions of the diagrams described are sorted where the predicate order has $= \prec P_T \prec P_{x_i}$ and where variables $w_i$ are before $V_{(i_1,i_2)}$ and variables within group are sorted lexicographically. 
The combined aggregation function is $\min_{\ww_1}, \min_{\ww_2}$.

We next show that claim C2 holds, which will complete the proof of the simplified construction. 
Consider any legal $I$, let the corresponding truth values for variables in $\xx_1$ be denoted $\alpha$, and consider valuations for the QBF extending $\xx_1=\alpha$. Now consider any valuation $v$ to the remaining variables in the QBF and the induced substitution to the GFODD variables $\zeta(v)$ that is easily identified from the construction. Add any consistent group assignment to $\ww_1$ (that is, we assign $a$ or $b$ to all variables in that group)
to $\zeta(v)$ to get $\hat{\zeta}(v)$.
By the construction of $B_2$ we have that $f([\xx_1=\alpha ,(\xx_2)=v]) = \map_{B_2}(I,\hat{\zeta}(v))$. To see this note that there are no quantifiers in this expression, there is a 1-1 correspondence between the valuations of $\xx_2$ and $\ww_2$, and that as long as the assignment to the $\ww_1$ block is group consistent it does not affect the value returned. 
We call this set of valuations, that arise as translations of substitutions for QBF variables, {\em Group 1}.

The second group, {\em Group 2}, includes valuations that do not arise as $\hat{\zeta}(v)$ and therefore they violate at least one of the consistency blocks. 
Let  $\zeta$ be such a valuation
and let $Q^A_j$ be the first block from the left whose constraint is violated. By the construction of $B_2$, in particular the order of equality along paths in the GFODD, we have that the evaluation of the diagram on $\zeta$ ``exits" to a default value on the first violation. 
In the current construction there are only two blocks and the default value for both blocks is 1. 
Now, since all aggregation operators are $\min$ operators, the final aggregated value in both blocks is determined by group 1 valuations. This implies that
$\forall  \xx_2, f(\xx_1=\alpha,\xx_2) = 
\min \ww_2, \map_{B_2}(I,[\ww_1=a,\ww_2])$ as required.

\paragraph{Extending the reduction to handle ordering:}
The main idea in the extended construction is to replace the unary predicates $P_T$ and $P_{x_i}$ with one binary predicate $q(\cdot,\cdot)$ where the ``second argument" in $q()$ serves to identify the corresponding predicate and hence its truth value. In addition we force $q()$ to be symmetric so that for any $A$ and $B$ the truth value of $q(A,B)$ is the same as the truth value of $q(B,A)$. In this way we have freedom to use either $q(A,B)$ or $q(B,A)$ as the node label which provides sufficient flexibility to handle the ordering issues. To implement this idea we need a few additional constructions.

Let the set of variables in the first $\exists$ block of the QBF be $x_1,x_2,\ldots,x_\ell$. 
The set of predicates in the extended reduction includes
unary predicates $T(),  y_1(), y_2(), x_1(), x_2(),\ldots,x_\ell(), $ and one binary predicate $q(\cdot,\cdot)$. A legal interpretation includes exactly $\ell+3$  objects which are uniquely identified by the unary predicates. 
In addition, the same objects are identified by $\ell+3$ constants in the signature. 
We therefore slightly abuse notation and use the same symbols for 3 different entities: the objects, the constants identifying them,  and the predicates identifying them. 
In particular, the  atoms $y_1(y_1), y_2(y_2)$, $T(T)$, and $x_1(x_1), x_2(x_2), \ldots, x_\ell(x_\ell)$ are true in the interpretation and only these atoms are true for these unary predicates (e.g., $x_1(T)$ is false).
The truth values of $q()$ reflect the simulation of $P_T()$ and $P_{x_i}()$ in addition to being symmetric. 
Thus  the truth values of $q(y_1,T)$ and $q(T,y_1)$ are the same and they are the negation of the truth values of $q(y_2,T)$ and $q(T,y_2)$. 
For all $i$, the truth values of $q(y_1,x_i)$, $q(x_i,y_1)$, $q(y_2,x_i)$ and $q(x_i,y_2)$ are the same.
The truth values of other instances of $q()$, for example, $q(x_2,T)$, can be set arbitrarily. For example, the following interpretation is legal when $\ell=1$:
$I = \{ [a, b, c, d], [y_1/a, y_2/b, T/c, x_1/d], y_1(a), y_2(b), T(c), x_1(d), q(c,a)=q(a,c)=$true, $q(c,b)=q(b,c)=$false, $q(d,a)=q(a,d)=q(d,b)=q(b,d)=$false, $q(\cdot,\cdot)=$false$\}$ where $q(\cdot,\cdot)$ refers to any instance not explicitly mentioned in the list.

We next define the diagram $B_1$ that is satisfied only in legal interpretations. 
We enforce exactly $\ell+3$ objects using two complementary parts. The first includes ${\ell+4 \choose 2}$ inequalities on a new set of $\ell+4$ variables $z_1,\ldots,z_{\ell+4}$ with min aggregation. If we identify $\ell+4$ distinct objects we set the value to 0.
This is shown in Figure~\ref{fig:gfodd-equiv-few-objects}.

To enforce at least $\ell+3$ objects and identify them we use the following gadget. For each of the unary predicates we have a diagram identifying its object and testing its uniqueness using min variables. This is shown for the predicate $T()$ in Figure~\ref{fig:gfodd-equiv-unary-T}. The node $T(T)$ with max variable $T$ identifies the object $T$. The nodes $T(r_1),T(r_2)$ with min variables $r_1,r_2$ make sure that $T$ holds for at most one object. 
We chain the diagrams together as shown in Figure~\ref{fig:gfodd-equiv-sorted-unary} where the variables $r_1,r_2$ are shared among all unary predicates. The corresponding aggregation function is $ \min_{r_1,r_2}$. This diagram associates each of the $\ell+3$ objects with one of the unary predicates and in this way provides a reference to specific objects in the interpretation. [Note that we could use the constants and some inequalities for the same purpose but here simply adapt the previous construction for consistency of presentation].

The symmetry gadget for $q()$ is shown in Figure~\ref{fig:gfodd-equiv-q-symmetric} where the variables $m_1,m_2$ are min variables. If an input interpretation has two objects $A,B$ where $q(A,B)$ has a truth value different than $q(B,A)$ then minimum aggregation will map the interpretation to 0.

The truth value gadget for the simulation of $P_T$ is shown in Figure~\ref{fig:gfodd-equiv-q-of-PT} and 
the truth value gadget for the simulation of $P_{x_i}$ is shown in Figure~\ref{fig:gfodd-equiv-q-of-xi}.

Finally, we add a component that is not needed for verifying that $I$ is legal but will be useful later when we include the clauses.
In particular, we include a variable consistency block which is similar to the one in the simple construction but where we force $\ww_i$ to bind to the same object as $y_1$ or $y_2$. This is shown in [all leaf values change to 1 in this example] Figure~\ref{fig:gfodd-equiv-sorted-consistency}.

We chain the diagrams together 
as shown in Figure~\ref{fig:gfodd-equiv-sorted-B1} to get $B_1$ where we have moved the node labelled $r_1=r_2$ to be above the block consistency gadget
[here too leaf values need to be adjusted in the consistency block; we also need a slight  reordering of nodes so that the symmetry block testing $m_1,m_2$ is below the block testing simulation of $P_{x_i}$].
Note that the diagram is sorted where for predicate order we have $= \prec T \prec y_1 \prec y_2 \prec x_i \prec q$ and for variables we have 
$z_i,T,y_i,x_i \prec r_j \prec w_l$, and $y_i \prec V_{(i_1,i_2)}$, and 
[was $m_i \prec T \prec x_j$ now change to]
$T \prec m_i \prec x_j$.
The complete aggregation function is 
$$
\min_{m_1,m_2}\min_{r_1,r_2}\min_{z_1,\ldots,z_{\ell+4}}
\min_{\ww_1,\ww_2}
$$

We next show that the claim C1 holds for the extended reduction.\footnote{
Note that we can remove the variable consistency block which complicates the argument (and does not test anything per legality of $I$) and still maintain correctness of C1. But including it here simplifies the argument for diagram $B$ below and thus simplifies the overall proof.
}

We first consider all possible cases for illegal interpretations. Since $B_1$ is a min diagram it is clear that if a zero leaf is reached along any path then the aggregated value is 0.
If the interpretation has $\geq \ell+4$ objects then the top portion of the diagram yields 0 for some valuation of $z$'s. The same is true if any of the unary predicates does not identify any variable, or is true for more than one variable, or if $q()$ is not symmetric, or $q()$'s simulation of $P_T()$ and $P_{x_i}$ are not as intended, or if the contestant mapping is not as expected (this leads to 0 via the unary predicates).
Therefore, if $I$ is illegal then $\map_{B_1}(I)=0$.

Consider next any legal interpretation. For any group 1 valuation of $\ww_1,\ww_2$ and any valuation of the variables $m,z,r$ the diagram yields 1. 
Since group 2 valuations always yield 1 in this construction the final value is 1.
This completes the proof of C1.

The diagram $B$ is obtained by adding the clause blocks below the $q()$ tests of $B_1$.
The clause blocks have the same structure as above but they use $q()$ instead of $P_T()$ and $P_{x_i}()$. 
This is shown in Figure~\ref{fig:gfodd-equiv-sorted-clauses}.
The final diagram for $B$ is shown in Figure~\ref{fig:gfodd-equiv-sorted-B} 
[here too leaf values need to be adjusted in the consistency block] 
and it has the same aggregation function as $B_1$. 
Note that the diagram is sorted using the same order as in $B_1$.

We next show that $\map_B(I)=1$ if and only if 
$I$ is legal and $\forall \xx_2  f((\xx_1=\alpha),\xx_2) = 1$
where $I$
embeds the substitution $\xx_1=\alpha$. 
This has the same consequences as having $B=apply(B_1,B_2,\wedge)$ in the simple construction.

To prove the claim first note that by the construction $B$ adds more tests on the path to a 1 leaf of $B_1$ and does not add any other paths to a value of 1. Therefore, 
for any $I$ and any $\zeta$, if $\map_B(I,\zeta)=1$ then $\map_{B_1}(I,\zeta)=1$ and as a result if $\map_B(I)=1$ then $\map_{B_1}(I)=1$. Therefore, by C1, if $\map_B(I)=1$ then $I$ is legal.
Finally, when the interpretation is legal the portions of the diagram testing for $\leq q+3$ objects, the uniqueness of special objects $T,y_1,y_2,x_1,\ldots,x_k$, the symmetry of $q()$ and its simulation of $P_T()$ and $P_{x_i}$
do not affect the final value in the sense that a valuation reaching them always continues to the next block. 
Therefore, for legal interpretations we can shrink $B$ by removing these blocks and still obtain the same aggregated value. Now we observe that there is a 1-1 correspondence between valuations and values of the resulting $B$ to valuations and values of $B_2$ in the simple construction
(where we extend the notion of block consistent to enforce that $\ww_i$ bind to $y_1,y_2$). Therefore,
the claim holds by C2 of the simple construction. 
\end{proof}

\end{document}